\DeclareMathAlphabet{\mathscr}{LS1}{stixscr}{m}{n}
\DeclareMathOperator{\sign}{sign}
\DeclareMathOperator{\Tr}{Tr}
\title{One-Bit Quantization for Random Features Models}
\author{ Danil Akhtiamov \thanks{Equal Contribution} \thanks{Department of Computing and Mathematical Sciences, California Institute of Technology} \and  Reza Ghane \footnotemark[1] \thanks{Department of Electrical Engineering, 
	California Institute of Technology} \and Babak Hassibi \footnotemark[2] \,\footnotemark[3] }
\newcommand{\bSigma}{\bm{\Sigma}}
\newcommand{\polylog}{\text{polylog}}
\newcommand{\bmu}{\bm{\mu}}
\newcommand{\tlalpha}{\tilde{\alpha}}
\newcommand{\tleta}{\tilde{\eta}}
\newcommand{\bG}{\mathbf{G}}
\newcommand{\bI}{\mathbf{I}}
\newcommand{\bW}{\mathbf{W}}
\newcommand{\bX}{\mathbf{X}}
\newcommand{\bY}{\mathbf{Y}}
\newcommand{\ba}{\mathbf{a}}
\newcommand{\bb}{\mathbf{b}}
\newcommand{\bg}{\mathbf{g}}
\newcommand{\bh}{\mathbf{h}}
\newcommand{\bs}{\mathbf{s}}
\newcommand{\bu}{\mathbf{u}}
\newcommand{\bv}{\mathbf{v}}
\newcommand{\bw}{\mathbf{w}}
\newcommand{\bx}{\mathbf{x}}
\newcommand{\by}{\mathbf{y}}
\newcommand{\bz}{\mathbf{z}}
\newcommand{\bbE}{\mathbb{E}}
\newcommand{\bbR}{\mathbb{R}}
\newcommand{\bbP}{\mathbb{P}}
\newcommand{\calN}{\mathcal{N}}
\newcommand{\calM}{\mathcal{M}}
\newcommand{\tbSigma}{\tilde{\bSigma}}
\newcommand{\tlbG}{\tilde{\bG}}
\newcommand{\tlbg}{\tilde{\bg}}
\newcommand{\tlbh}{\tilde{\bh}}
\newcommand{\tlbu}{\tilde{\bu}}
\newcommand{\bzero}{\mathbf{0}}
\newcommand{\tr}{\text{Tr}}
\newcommand{\rarrowp}{\xrightarrow[]{\bbP}}
\theoremstyle{plain}
\newtheorem{theorem}{Theorem}
\newtheorem{remark}{Remark}
\newtheorem{lemma}{Lemma}
\newtheorem{definition}{Definition}
\newtheorem{assumptions}{Assumptions}
\begin{document}
\maketitle

\begin{abstract}

Recent advances in neural networks have led to significant computational and memory demands, spurring interest in one-bit weight compression to enable efficient inference on resource-constrained devices. However, the theoretical underpinnings of such compression remain poorly understood. We address this gap by analyzing one-bit quantization in the Random Features model, a simplified framework that corresponds to neural networks with random representations. We prove that, asymptotically,  quantizing weights of all layers except the last incurs no loss
in generalization error, compared to the full precision random features model. Our findings offer theoretical insights into neural network compression. We also demonstrate empirically that one-bit quantization leads to significant inference speed ups for the Random Features models even on a laptop GPU, confirming the practical benefits of our work. Additionally, we provide an asymptotically precise characterization of the generalization error for Random Features with an arbitrary number of layers. To the best of our knowledge, our analysis yields more general results than all previous works in the related literature.

 % One-bit compression of neural networks has been a topic of active research recently due to the exorbitant inference and memory costs of modern neural networks. Since little is understood theoretically about one-bit compression of neural networks, we propose to explore this topic a simplified model known as {\it the Random Features model} in the literature. For Random Features Models, we show that, provided they are sufficiently large, one can quantize the weights of all the layers up to the last one {\it without any loss in performance}. To prove this rigorously, we deploy the Lindenberg approach and show a Gaussian Universality result. After that, we proceed to precisely quantify the loss of performance resulting after compression of the last layer. 

% Furthermore, we precisely quantify the performance degradation resulting from quantizing the last layer, providing explicit expressions for the errors of the model with and without quantization of the last layers. 
 
\end{abstract}

\section{Introduction}

The success of deep neural networks in tasks such as image recognition, natural language processing, and reinforcement learning has come at the cost of escalating computational and memory requirements. Modern models, often comprised of billions of parameters, demand significant resources for training and inference, rendering them impractical for deployment on resource-constrained devices like mobile phones, embedded systems, or IoT devices. To address this challenge, weight quantization---reducing the precision of neural network weights---has emerged as a promising technique to lower memory footprint and accelerate inference. In particular, one-bit quantization, which restricts weights to \(\{+1, -1\}\), offers extreme compression (e.g., \(\sim 32\times\) memory reduction for 32-bit floats) and enables efficient hardware implementations using bitwise operations. Various works have explored the possibility of network quantization in the recent years. In particular, for Large Language Models (LLMs), some post-training have been able to reduce the model size via fine-tuning. Examples of such approach include GPTQ \cite{frantar2022gptq} which can quantize a 175 billion GPT model to 4 bits and QuIP which \cite{chee2023quip} compresses Llama 2 70B to 2 and 3 bits.  Furthermore, quantization-aware training approaches, such as Bitnet \cite{wang2023bitnet}, Bitnet 1.58b \cite{ma2024era}, have been able to achieve one-bit language models with comparable performance to the models from the same weight class. For a recent survey on efficient LLMs we refer to \cite{xu2024survey}. Such results are desirable as they pave the way for bringing foundational models to edge devices by reducing memory requirements and reducing the inference time. However, while the aforementioned empirical approaches have demonstrated practical success, the theoretical foundations of one-bit quantization remain underexplored, limiting our ability to predict its performance and design improved training algorithms.

% mpirical methods like BinaryConnect~\cite{courbariaux2015binaryconnect} and XNOR-Net~\cite{rastegari2016xnor} 

This paper investigates the generalization properties of one-bit quantization in the Random Features model, a simplified framework that captures key properties of wide neural networks while being amenable to rigorous analysis. Introduced by Rahimi and Recht \cite{rahimi2007random}, the Random Features model approximates kernel methods and corresponds to the infinite-width limit of neural networks under certain conditions \cite{jacot2018neural}. By studying quantization in this model, we aim to uncover fundamental principles that govern the trade-offs between compression and performance in neural networks, leading to memory savings and inference speed-ups. Our main contributions are twofold:
\\

\begin{enumerate}
    \item \textbf{Lossless Quantization of Hidden Layers:} We prove that, for sufficiently wide Random Features models, quantizing the weights of all layers except the last to one bit incurs no loss in generalization error. This surprising result is established via a Gaussian Universality (GU) and Gaussian Equivalence (GE): GU implies that the test error of the linear model trained on the outputs of the random features matches the test error of the linear model trained on Gaussians with the same covariance; GE implies that the covariance of the random features, and the necessary characteristics of the latter covariance, are the same for the quantized and unquantized weights.
    \item \textbf{Precise Characterization of the Test error of Deep Random Features model:} In the proportional regime, we rigorously characterize the generalization error of Random Features model with quantized weights with multiple layers and express the generalization error in terms of a few scalar variables.
\end{enumerate}

The rest of the paper is organized as follows: Section ~\ref{sec:preliminaries} introduces the Random Features model and our notation, reviews Stochastic Mirror Descent and its implicit regularization properties, and presents the Gaussian Universality and Gaussian Equivalence principles that form the foundation of our analysis. Section ~\ref{sec: related} reviews related work on Random Features models and Gaussian universality. Section ~\ref{sec:results} details our main theorems on quantization, Section ~\ref{sec: approach} discusses our approach and contributions, and Section ~\ref{sec:numerics} presents numerical validations of our theoretical findings.

\section{Preliminaries}
\label{sec:preliminaries}
Throughout the paper we use bold letters for vectors and matrices.
\subsection{Lipschitz Concentration Property}

The following definition will be necessary for presenting our main result.

\begin{definition}[Lipschitz Concentration Property]\label{def: LCP}
A random vector $\bz \in \mathbb{R}^d$ satisfies the Lipschitz Concentration Property (LCP) with parameter $\sigma$ if for any $L$-Lipschitz function $f: \mathbb{R}^d \to \mathbb{R}$, the random variable $f(\bz) - \mathbb{E}[f(\bz)]$ is subgaussian with parameter $L\sigma$. That is, for all $t > 0$:
\[
\mathbb{P}\Bigl(\Bigl|f(\bz) - \mathbb{E}[f(\bz)]\Bigr| \geq t\Bigr) \leq 2\exp\left(-\frac{t^2}{2L^2\sigma^2}\right)
\]
\end{definition}

\subsection{Problem Setting}
\label{subs: notation}
We consider a Random Features model defined as follows:
\begin{itemize}
    \item \textbf{Input and Output}: Let \(\mathbf{x} \in \mathbb{R}^d\) denote the input vector, and \(y \in \mathbb{R}\)  denote the target. The dataset consists of \(n\) samples \((\mathbf{x}_i, y_i)\).  Furthermore, we assume that the data $\bx$ satisfies Definition \ref{def: LCP} with $\sigma^2 = O\left(\frac{1}{d}\right)$. We also assume that $\bx$ is centered, i.e. $\bbE \bx = 0$. Denote $\bSigma = \bbE \bx\bx^T$. Then the assumptions made in this bullet imply that   $$\kappa(\bSigma) = \frac{\sigma_1}{\sigma_{m}} = O(1) \text{ and }\tr(\bSigma) = O(1),$$ where $\sigma_1, \dots, \sigma_m$ are the eigenvalues of $\bSigma$ in the decreasing order. In other words, the matrix $\bSigma$ is well-conditioned and normalized so that the norms of the inputs $\|\bx\|_2 = O(1)$.   
    \item \textbf{Model Architecture}: The Random Features model is a neural network with $L$ hidden layers and an activation function $\phi$:
    \begin{itemize}
        \item \textbf{Hidden layers}: The input to each hidden layer is mapped to a feature vector via random weights \(\mathbf{W} \in \mathbb{R}^{d_\ell \times d_{\ell - 1}}\), where \( d_{\ell - 1}\) is the number of input features to layer $\ell$ and $d_\ell$ is the number of output features. Each entry \(\bW_{ij} \sim \mathcal{N}\left(0, \frac{1}{d_{\ell-1}}\right)\) for the non-quantized model and \(\bW_{ij} \sim \frac{1}{\sqrt{d_{\ell-1}}}\mathbf{Unif}(-1,+1)\) for the quantized model. Note that the coefficient $\frac{1}{d_{\ell-1}}$ is necessary to ensure that the quantized model has the same second order statistics as the non-quantized model.  The map for the $\ell$-th hidden layer is \(\phi_{\ell}(\mathbf{\bx^{\ell-1}}) = \phi(\bW^{(\ell)}\bx^{(\ell - 1)})\) and $\bx^{(0)} = \bx$ is the input distribution.
        \item \textbf{Last layer}: The output is a linear combination of features, $$f(\mathbf{x}, \ba, \bW^{1}, \dots, \bW^{L}) = \mathbf{a}^\top \bx^{(L)},$$ where $\mathbf{a} \in \mathbb{R}^{d_L}$ is the output layer weights.
    \end{itemize}
    \item \textbf{Quantization}: One-bit quantization maps weights in the hidden layer $\ell$ to \(\frac{1}{\sqrt{d_{\ell-1}}}\{+1, -1\}\) by preserving the normalization and taking the sign of each entry. Since the hidden layers for the non-quantized model are gaussian, this means that for the quantized model. Note that we quantize only the hidden layers and do not quantize the last layer $\ba$ and the data $\bx$. As a motivation, note that the majority of the memory is taken up by the weights in the hidden layers for our model and, therefore, we reduce memory requirements almost by a factor of $32$ assuming the non-quantized model has $32$-bit weights. Moreover, we demonstrate empirically that the presented scheme leads to almost 4X inference speed ups for sufficiently wide hidden layers. 
    
    \item \textbf{Training procedure}: We assume that the last layer is trained to minimize an arbitrary differentiable convex loss function satisfying $\min_t \mathcal{L}(t) = \mathcal{L}(0) = 0$, i.e. the following optimization is performed, via {\it Stochastic Mirror Descent}: $$\min_{\ba} \sum_{i = 1}^n  \mathcal{L}\left(y_i - f(\mathbf{x}_i, \ba, \bW^{1}, \dots, \bW^{L})\right)$$ Moreover, we assume that the model is over-parametrized, i.e. the number of parameters in the last layer exceeds the number of data points. Over-parametrization is a common assumption in modern machine learning. 
    \item \textbf{Ground truth}: We assume that the labels are generated according to 
    \begin{align}\label{eq: labels}
    y = f(\mathbf{x}, \ba_*, \bW^{1}, \dots, \bW^{L})
    \end{align}
    here, $\ba_*$ is a ground truth parameter that we take to be $\ba_* \sim \mathcal{N}(0, \frac{\bI}{d_L})$, as it is natural to assume the ground truth is a "generic" vector. 
    \item \textbf{Performance Metric}: We measure performance of a trained model via the MSE loss $$\mathbb{E}_{\bx}[(f(\mathbf{x}_i, \ba, \bW^{1}, \dots, \bW^{L})- y)^2]$$
    \item \textbf{Scaling}: We assume $d \to \infty$ and the hidden layer dimensions grow proportionally, i.e. $\gamma_\ell = \frac{d_\ell}{d}$ is constant for $\ell = 1,\dots, L$. 
\end{itemize}

\subsection{Stochastic Mirror Descent and Implicit Regularization}

% Stochastic Mirror Descent (SMD) generalizes stochastic gradient descent by employing a strongly convex, differentiable mirror map $\psi$. For a loss function $\mathcal{L}(\mathbf{\bw}; \bx, y)$ and data $\{(\bx_1, y_1), \dots, (\bx_n, y_n)\}$, the SMD update at step $t$ is
% \[
% \nabla\psi(\mathbf{w}_{t+1}) = \nabla\psi(\mathbf{w}_t) - \eta \nabla \sum_{i=1}^n\mathcal{L}(\mathbf{\bw}_t; \bx_i, y_i)
% \] 
% Note that taking $\psi(\bw) = \frac{\|\bw\|^2}{2}$ corresponds to the usual gradient descent:
% \[
% \mathbf{w}_{t+1} = \mathbf{w}_t - \eta \nabla \sum_{i=1}^n\mathcal{L}(\mathbf{\bw}_t; \bx_i, y_i)
% \] 

% Implicit regularization refers to the phenomenon where optimization algorithms naturally favor solutions minimizing certain characteristics of the weights without explicit regularization terms in the objective function. In overparameterized linear models, where the number of parameters exceeds the number of samples (\(d > n\)), SMD exhibits a crucial implicit bias property \cite{azizan2021stochastic}: among all interpolating solutions, i.e. solutions satisfying
% $\bX\bw = \by$, it chooses the solution that minimizes the value of $D_\psi(\bw, \bw_0)$. In other words, the following holds:
% $$\lim_{t \to \infty} \mathbf{w}_t = \arg\min_{\mathbf{w}} D\psi(\mathbf{w}, \bw_0) \text{ subject to } \bX\bw = \by$$

Stochastic Mirror Descent (SMD) generalizes Stochastic Gradient Descent (SGD) by employing a strictly convex, differentiable mirror map $\psi$. For a loss function $\mathcal{L}(\mathbf{w}; \mathbf{x}, y)$ and data $\{(\mathbf{x}_1, y_1), \dots, (\mathbf{x}_n, y_n)\}$, the SMD update at step $t$ is
\[\nabla\psi(\mathbf{w}_{t+1}) = \nabla\psi(\mathbf{w}_t) - \eta \nabla \sum_{i=1}^n\mathcal{L}(\mathbf{w}_t; \mathbf{x}_i, y_i),\] 
Note that taking $\psi(\mathbf{w}) = \frac{\|\mathbf{w}\|^2}{2}$ corresponds to the usual gradient descent:
\[\mathbf{w}_{t+1} = \mathbf{w}_t - \eta \nabla \sum_{i=1}^n\mathcal{L}(\mathbf{w}_t; \mathbf{x}_i, y_i).\] 
Implicit regularization refers to the phenomenon where optimization algorithms naturally favor solutions minimizing certain characteristics of the weights without explicit regularization terms in the objective function. In overparameterized linear models, where the number of parameters exceeds the number of samples ($d > n$), SMD exhibits a crucial implicit bias property \cite{azizan2021stochastic}: among all interpolating solutions, i.e., solutions satisfying $\mathbf{X}\mathbf{w} = \mathbf{y}$, it chooses the solution that minimizes the Bregman divergence from the initialization $\mathbf{w}_0$. In other words, the following holds:
\begin{align}\label{eq: implc_smd_breg}
\lim_{t \to \infty} \mathbf{w}_t = \arg\min_{\mathbf{w}} D_\psi(\mathbf{w}, \mathbf{w}_0) \text{ subject to } \mathbf{X}\mathbf{w} = \mathbf{y}
\end{align}
where the Bregman divergence is defined as $$D_\psi(\mathbf{w}, \mathbf{w}') = \psi(\mathbf{w}) - \psi(\mathbf{w}') - {\nabla \psi(\mathbf{w}')}^T(\mathbf{w} - \mathbf{w}')$$
For the gradient descent with initialization $\bw_0 \approx 0$, \eqref{eq: implc_smd_breg} takes the following simple form:
\begin{align}\label{eq: implc_smd}
\lim_{t \to \infty} \mathbf{w}_t = \arg\min_{\mathbf{w}} \|\bw\|_2^2 \text{ subject to } \mathbf{X}\mathbf{w} = \mathbf{y}
\end{align}

\subsection{Gaussian Universality}

Theorem \ref{thm: univ}, presented in \cite{ghane2024universality}, establishes a universality result for linear regression with implicit regularization in the overparameterized regime, where the number of features $d$ exceeds the number of samples $n$.  The theorem demonstrates that the test error for the linear model trained on any feature matrix satisfying certain technical conditions is asymptotically equivalent to the test error of the same linear model trained on the Gaussian distribution with matching covariance. Gaussian Universality simplifies the analysis of model performance, making it tractable to predict the generalization error using techniques for working with Gaussian data, such as Gaussian Comparison Inequalities. In this subsection, for the sake of completeness, we present Theorem \ref{thm: univ}. The following assumptions are required for Theorem \ref{thm: univ}:

\begin{assumptions} \label{ass: univ}
\begin{enumerate}
    \item \textbf{Feature Matrix \(\bX \in \mathbb{R}^{n \times d}\)}: The rows of \(\bX\), denoted \(\mathbf{x}_i \in \mathbb{R}^d\) for \(i = 1, \dots, n\), are independently and identically distributed (i.i.d.) from a distribution \(\mathbb{P}\) with mean \(\bmu \in \mathbb{R}^d\) and covariance \(\bSigma \in \mathbb{R}^{d \times d}\). The distribution satisfies:
        \begin{itemize}
            \item Bounded moments up to the sixth order: For each row \(\mathbf{x}_i\), \(\mathbb{E}[\|\mathbf{x}_i - \bmu\|_2^q] = O(1)\) for \(q \leq 6\).
            \item Bounded mean: \(\|\bmu\|_2^2 = O(1)\).
            \item Covariance condition: For any fixed vector \(\mathbf{v} \in \mathbb{R}^d\), the quadratic form \(\mathbf{v}^T \bSigma \mathbf{v}\) has vanishing variance in the sense that \(\text{Var}(\mathbf{x}_i^T \mathbf{v}) = O(1/d)\) as \(d \to \infty\).
            \item Minimum singular value: The smallest singular value of \(\bX \bX^T \in \mathbb{R}^{n \times n}\), denoted \(\sigma_{\min}(\bX \bX^T)\), satisfies \(\sigma_{\min}(\bX \bX^T) = \Omega(1)\) with high probability, ensuring \(\bX\) is well-conditioned.
        \end{itemize}
    \item \textbf{Target Labels (\(\by \in \mathbb{R}^n\))}: The labels \(\by\) are generated as \(\by = \mathbf{X} \mathbf{w}^* + \boldsymbol{\epsilon}\), where \(\mathbf{w}^* \in \mathbb{R}^d\) is a fixed true parameter vector with \(\|\mathbf{w}^*\|_2 = O(1)\), and the noise \(\boldsymbol{\epsilon} \in \mathbb{R}^n\) has i.i.d. sub-Gaussian entries with mean zero and variance \(\sigma^2 = O(1)\).
    \item \textbf{Mirror Map (\(\psi: \mathbb{R}^d \to \mathbb{R}\))}: The mirror map \(\psi\) is \(M\)-strongly convex (i.e., \(\nabla^2 \psi \succeq M \bI_d\) for some \(M > 0\)), three times differentiable with bounded third derivatives (\(\|\nabla^3 \psi\| = O(1)\)), and satisfies \(\psi(\mathbf{0}) = O(d)\). Moreover, the gradient of the mirror map at the solution \(\bw_\bX\), denoted \(\nabla \psi(\bw_\bX)\), satisfies \(\|\nabla \psi(\bw_\bX)\|_2 = O(\sqrt{d})\) with high probability.
    \item \textbf{Overparameterization}: The dimensions \(d\) (number of parameters) and \(n\) (number of samples) tend to infinity with a fixed ratio \(d/n = \kappa > 1\), ensuring an overparameterized regime where the number of parameters exceeds the number of samples.
\end{enumerate}
\end{assumptions}
\begin{theorem}[\cite{ghane2024universality}]

\label{thm: univ}
Let $\bX \in \mathbb{R}^{n \times d}$ be a feature matrix whose rows are sampled from a disribution $\bbP$ with mean $\bmu \in \bbR^d$ and covariance $\bSigma \in \bbR^{d\times d}$ and $\by \in \mathbb{R}^n$ be the labels satisfying Assumptions \ref{ass: univ}. Let $\bG \in \mathbb{R}^{n \times d}$ be a matrix with independent rows sampled from $\mathcal{N}(\bmu,\bSigma)$. Define $\bw_\bX$ and $\bw_\bG$ to be the SMD solutions with a mirror $\psi$ trained on $\bX$ and $\bG$ respectively for some initialization $\bw_0$:
\begin{align}
& \bw_\bX = \arg\min_{\mathbf{w}} D_\psi(\mathbf{w}, \mathbf{w}_0) \text{ subject to } \mathbf{X}\mathbf{w} = \mathbf{y} \\
& \bw_\bG = \arg\min_{\mathbf{w}} D_\psi(\mathbf{w}, \mathbf{w}_0) \text{ subject to } \mathbf{G}\mathbf{w} = \mathbf{y} 
\end{align}

Then, asymptotically, the following holds for any Lipschitz function $g$ in probability:
\[
\lim_{n \rightarrow \infty} \Bigl|g(\bw_\bX) - g(\bw_\bG)\Bigr| = 0 
\]
In particular, taking $g(\bw) = \sqrt{\bw^T \bSigma \bw}$ ensures that $\bw_\bX$ and $\bw_\bG$ yield equal test MSE losses as $d$ grows large. 
\end{theorem}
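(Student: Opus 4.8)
I would argue in two stages. The starting point is that, by $M$-strong convexity of $\psi$, the program defining $\bw_\bX$ has a unique solution characterized by the KKT conditions $\nabla\psi(\bw_\bX)=\nabla\psi(\bw_0)+\bX^\top\bnu$ and $\bX\bw_\bX=\by$ for a dual vector $\bnu\in\bbR^n$; equivalently $\bw_\bX=\nabla\psi^*\!\big(\bX^\top\bnu_\bX+\nabla\psi(\bw_0)\big)$ with $\psi^*$ the $(1/M)$-smooth conjugate, where $\bnu_\bX=\arg\max_{\bnu\in\bbR^n}\big\{\bnu^\top\by-\psi^*(\bX^\top\bnu+\nabla\psi(\bw_0))\big\}$. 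This eliminates the $d$-dimensional primal variable and turns the whole question into the universality of the maximizer of an unconstrained concave objective that depends on the data only through the linear combination $\bX^\top\bnu=\sum_i\nu_i\bx_i$ --- the standard setting for Gaussian universality in the proportional regime $d/n=\kappa>1$. It is also convenient to set $\tilde\bw=\bw-\bw^*$, so the constraint reads $\bX\tilde\bw=\bepsilon$ with a design-independent right-hand side; and since $\bw\mapsto\sqrt{(\bw-\bw^*)^\top(\bSigma+\bmu\bmu^\top)(\bw-\bw^*)+\sigma^2}$ is $\calO(1)$-Lipschitz in $\bw$, the test MSE is itself a Lipschitz functional of the solution, so the final sentence of the theorem is the special case of this $g$.

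\textbf{Stage 1: the Gaussian design.} Write $\bx_i=\bmu+\bSigma^{1/2}\bg_i$, $\bg_i\sim\mathcal N(0,\bI_d)$, so $\bG=\bone_n\bmu^\top+\bG_0\bSigma^{1/2}$ with $\bG_0$ having i.i.d.\ standard Gaussian entries. Then $\bw_\bG$ solves $\min_\bw\max_\bnu\,D_\psi(\bw,\bw_0)+\bnu^\top\!\big(\by-\bone_n(\bmu^\top\bw)-\bG_0\bSigma^{1/2}\bw\big)$, and the only term coupling $\bw$ and $\bnu$ through the random matrix is the bilinear Gaussian form $\bnu^\top\bG_0(\bSigma^{1/2}\bw)$. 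The Convex Gaussian Min--Max Theorem (CGMT) replaces it by $\norm{\bSigma^{1/2}\bw}_2\,\bh^\top\bnu+\norm{\bnu}_2\,\bg^\top(\bSigma^{1/2}\bw)$ for fresh independent $\bg\in\bbR^d$, $\bh\in\bbR^n$. Scalarizing the auxiliary optimization --- freezing the summary quantities $\norm{\bSigma^{1/2}(\bw-\bw^*)}_2$, $\bmu^\top\bw$, $\norm{\bnu}_2$, $\bone_n^\top\bnu$ and the transported overlap $\bg^\top\bSigma^{1/2}\bw$, and minimizing over $\bw$ (a Moreau envelope of $\psi$, finite and strictly convex by $M$-strong convexity, $\norm{\nabla^3\psi}=\calO(1)$ and $\psi(\bzero)=\calO(d)$) --- collapses everything to a deterministic scalar saddle point; its solution gives the limiting values of the summary statistics and hence a constant $g^\star$, and Gaussian Lipschitz concentration yields $g(\bw_\bG)\rarrowp g^\star$. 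The hypothesis $\sigma_{\min}(\bG\bG^\top)=\Omega(1)$ is what keeps $\norm{\bnu}_2$ and the saddle point in a compact set.

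\textbf{Stage 2: transfer to a general design.} Using the dual formulation above, I would establish universality of $\bnu_\bX$ --- and hence of $g(\bw_\bX)$, which is a fixed $\calO(1)$-Lipschitz function of $\bX^\top\bnu_\bX$ since $\nabla\psi^*$ is $(1/M)$-Lipschitz --- by a Lindeberg exchange: swap the rows $\bx_1,\dots,\bx_n$ for Gaussian rows with matching first two moments one at a time; alternatively, verify the hypotheses of an abstract universality theorem for optimizers of convex objectives depending on the data only through $\sum_i\nu_i\bx_i$. For the Lindeberg route, first add $\tfrac{\delta}{2}\norm{\bnu}_2^2$ to the dual (making it strongly concave, later sending $\delta\downarrow0$) so the maximizer is a smooth function of the data, fix a bounded smooth $h$ (the Lipschitz case follows by approximation), condition on all rows but the $k$-th, and Taylor-expand the smooth function $\bz\mapsto h\big(g(\bw_\bX)\big)$ of the $k$-th row $\bz$ to third order about $\bmu$; the zeroth, first and second order terms agree in expectation under the two laws by moment matching, and the third-order remainder is bounded using the a priori estimates $\norm{\bw_\bX}_2,\norm{\nabla\psi(\bw_\bX)}_2=\calO(\sqrt d)$ and $\sigma_{\min}(\bX\bX^\top)=\Omega(1)$ of Assumption \ref{ass: univ} together with $\bbE\norm{\bx_i-\bmu}_2^6=\calO(1)$. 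Summing the $n$ per-swap errors and letting $\delta\downarrow0$ gives $\bbE h(g(\bw_\bX))-\bbE h(g(\bw_\bG))\to0$ for every such $h$, so $g(\bw_\bX)$ converges in distribution to the constant $g^\star$ of Stage 1, i.e.\ $g(\bw_\bX)\rarrowp g^\star$, which proves the theorem.

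\textbf{Main obstacle.} The real work is the sensitivity estimate behind Stage 2. Because the interpolation constraint is active, $\bw_\bX$ --- equivalently $\bnu_\bX$ --- is defined only implicitly, through the stationarity system $\bX\,\nabla\psi^*\!\big(\bX^\top\bnu+\nabla\psi(\bw_0)\big)=\by$, and one must bound its first three derivatives with respect to a single row uniformly over the high-probability event of Assumption \ref{ass: univ}. Implicit differentiation reduces this to uniform invertibility of the dual Hessian $\bX\,(\nabla^2\psi(\bw_\bX))^{-1}\bX^\top$ on $\bbR^n$, which needs a two-sided bound $M\bI\preceq\nabla^2\psi(\bw_\bX)\preceq C\bI$: the lower bound is immediate from strong convexity, but the upper bound is not assumed globally and has to be recovered on the compact region where $\bw_\bX$ lives --- which is exactly why the hypotheses include the $\calO(\sqrt d)$ control of $\bw_\bX$ and $\nabla\psi(\bw_\bX)$, the bound $\norm{\nabla^3\psi}=\calO(1)$, and $\sigma_{\min}(\bX\bX^\top)=\Omega(1)$. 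Securing these a priori bounds and converting them into uniform third-order derivative estimates, rather than the CGMT computation of Stage 1, is the delicate part of the argument.
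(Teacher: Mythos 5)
You should first note a mismatch of scope: the paper does not prove Theorem \ref{thm: univ} at all --- it is imported verbatim from \cite{ghane2024universality} ("presented in \cite{ghane2024universality}... for the sake of completeness"), and the appendix only verifies that the random-features outputs satisfy Assumptions \ref{ass: univ} so the theorem can be invoked. So there is no internal proof to compare your argument against; what you have written is a proposal for re-proving the cited result itself.

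Taken on its own terms, your two-stage plan (dualize the Bregman projection via $\bw_\bX=\nabla\psi^*(\bX^\top\bnu_\bX+\nabla\psi(\bw_0))$, CGMT scalarization for the Gaussian design, then a Lindeberg row-swap to transfer to general $\bX$) is the standard and plausible architecture for results of this type, and you correctly identify where the difficulty lives. But as a proof it has a genuine gap: everything hinges on quantitative estimates that are asserted rather than established. For the Lindeberg step you need each per-swap remainder to be $o(1/n)$, which requires showing that the influence of a single row on $\bnu_\bX$ (hence on $g(\bw_\bX)$) is small at a dimension-dependent rate; the a priori bounds $\norm{\nabla\psi(\bw_\bX)}_2=O(\sqrt d)$, $\bbE\norm{\bx_i-\bmu}_2^6=O(1)$ and $\sigma_{\min}(\bX\bX^\top)=\Omega(1)$ do not by themselves produce such a rate, and your implicit-differentiation route additionally needs an upper bound on $\nabla^2\psi$ along the solution path, which is not among the hypotheses and must be manufactured from the $O(\sqrt d)$ localization --- you flag this but do not carry it out. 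Two further points are glossed over: the $\delta$-ridge regularization of the dual must be removed uniformly over both designs (stability in $\delta$ uniform in $n$, again resting on the same unproved conditioning estimates), and the CGMT/scalarization in Stage 1 needs compactness of the effective primal--dual sets and a justification for exchanging $\min$ and $\max$ with a general strongly convex mirror, neither of which is argued. In short, the skeleton is right, but the load-bearing sensitivity and localization lemmas --- which are exactly the substance of the cited theorem --- are missing, so the proposal is a roadmap rather than a proof.
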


\subsection{Gaussian Equivalence}

We utilize the \textbf{Gaussian Equivalence Principle (GEP)} to characterize the covariance matrices of the outputs of the Random Features  layers. Recall that the latter outputs are defined via 
$$\phi_{\ell}(\mathbf{\bx^{\ell-1}}) = \phi(\bW^{(\ell)}\bx^{(\ell - 1)}) \text{ for }\ell=1, \dots, L$$ 
Here, $\bx^{(0)} = \bx \in \mathbb{R}^d$ is the input, \(\mathbf{\bW^{(\ell)}} \in \mathbb{R}^{ d_\ell \times d_{\ell-1}}\) is a random weight matrix with i.i.d. entries. Namely, $$\mathbf{W}_{ij} \sim \mathcal{N}\left(0, \frac{1}{d_{\ell-1}}\right)$$ for the full precision model and $$\mathbf{W}_{ij} \sim \frac{1}{\sqrt{d_{\ell-1}}}\mathbf{Unif}(-1,+1)$$ for the one-bit quantized model, $d_0 = d$ and \(\phi: \mathbb{R} \to \mathbb{R}\) is an odd nonlinearity function. Define the covariance of the $\ell$-th hidden layer by $\bSigma_\ell$, i.e.
$$\bSigma_{\ell} = \bbE \bx^{(\ell)}{\bx^{(\ell)}}^T$$
In the proportional high-dimensional limit $$n, d, d_1,\dots,d_L \to \infty, \quad n/d = \Theta(1), \quad  \frac{d_{\ell-1}}{d_\ell} = \Theta(1), \quad \ell = 1,\dots L$$ GEP provides a recipe for finding $\bSigma_\ell$ via the following recursive relations:
$$\bSigma_{\ell} \approx \rho_{\ell,1}^2 \bW_{\ell} \bSigma_{\ell-1} \bW_{\ell}^\top  + \rho_{\ell,2}^2\bI_{d_\ell}$$
where
\begin{align*}
& \rho_{\ell,1} = \frac{1}{\sigma^2_{\ell-1}} \mathbb{E}_{z \sim \mathcal{N}(0, \sigma^2_{\ell-1})}z\phi(z) \\ 
& \rho_{\ell,2}^2 = \mathbb{E}_{z \sim \mathcal{N}(0, \sigma^2_{\ell-1})}\phi(z)^2 - \sigma^2_{\ell-1}\rho_{\ell,1}^2 \\
& \sigma_{\ell}^2 = \frac{\tr(\bSigma_\ell)}{d_\ell}
\end{align*}
with the initial conditions $$\bSigma_0 = \bSigma \text{ and } \sigma_0^2 = \frac{\tr(\bSigma)}{d}$$

\section{Related Works}\label{sec: related}
% In this section we provide a brief overview of existing works on Random Features model related to our setting and also Gaussian universality. 
% Random Features model \cite{rahimi2007random}, has been the subject of great study in recent years. The generalization error of the RF model with a single hidden layer has been analyzed in many different contexts in the high-dimensional proportional regime ; These include when the last layer is trained using a ridge regression objective\cite{gerace2020generalisation, dhifallah2020precise, ghorbani2021linearized, mei2022generalization, goldt2022gaussian}, or $\ba$ is taken to be the minimum $\ell_2$-norm interpolating solution \cite{hastie2022surprises}. Furthermore, for the binary classification task, the performance of the last layer as the $\ell_2$ \cite{montanari2019generalization} or $\ell_1$ \cite{liang2022precise} max-margin classifier has been analyzed. As Random Features model resembles neural networks at initialization, in the line of work \cite{moniri2023theory}, the generalization error after taking one-step of gradient descent on the hidden layer has been considered. Other settings include adversarial training \cite{hassani2024curse}, the attention mechanism as a Random Features model \cite{fu2023can}, and more recently, RFs have been explored in the non-asymptotic regime \cite{defilippis2024dimension}

In this section, we provide a brief overview of existing works relevant to our setting.

The Random Features (RF) model \cite{rahimi2007random} has been the subject of extensive study in recent years. The generalization error of the RF model with a single hidden layer has been analyzed in many different contexts within the high-dimensional proportional regime. These include settings where the last layer is trained using a ridge regression objective \cite{gerace2020generalisation, dhifallah2020precise, ghorbani2021linearized, mei2022generalization, goldt2022gaussian}, or where $\ba$ is taken to be the minimum $\ell_2$-norm interpolating solution \cite{hastie2022surprises}. Furthermore, for binary classification tasks, the performance of the last layer as either an $\ell_2$ \cite{montanari2019generalization} or $\ell_1$ \cite{liang2022precise} max-margin classifier has been analyzed. Since the Random Features model resembles neural networks at initialization, one line of work \cite{moniri2023theory} has considered the generalization error after taking a single step of gradient descent on the hidden layer. Other settings studied include adversarial training \cite{hassani2024curse}, the attention mechanism as a Random Features model \cite{fu2023can}, and more recently, RFs in the non-asymptotic regime \cite{defilippis2024dimension}.

RFs with multiple hidden layers have remained underexplored compared to those with single hidden layer. The paper \cite{schroder2023deterministic} rigorously proved the Gaussian universality of the test error for the last layer trained using ridge regression on the same task as described in Subsection \ref{subs: notation}. A concurrent paper \cite{bosch2023precise}  proved a similar universality result for much more general convex losses and regularizers. Furthermore, \cite{schroder2023deterministic}  provided a conjecture for the universality of the test error for more general convex losses and regularizers, as well as for cases where the structures of the learner and the ground-truth differ. In \cite{schroder2024asymptotics}, they extended these results to networks whose weights are not necessarily isotropic, imposing a general covariance structure on the weights per layer for ridge regression with squared loss. They went beyond the well-specified settings of \cite{bosch2023precise, schroder2023deterministic} and provided an expression for the test error where the ground truth and the learner features differ. They also conjectured a Gaussian equivalence model for multiple layers. To investigate the effect of the covariance structure of weights on the performance of RFs, \cite{zavatone2023learning} used the non-rigorous replica method to characterize the test error of a linear Random Features model, where the last layer is trained using ridge regression to learn a linear ground truth function. In \cite{cui2023bayes}, the authors computed the Bayes-optimal test error for estimating the target function in both classification and regression tasks for a deep Random Features model. They also provided a conjecture for the recursion on the population covariance of the layers, which was mentioned by \cite{bosch2023precise, schroder2023deterministic, schroder2024asymptotics}.

% The Gaussian Equivalence property, stated by \cite{goldt2020modeling} [Will add more to this in spirit of Section 2]

% Gaussian universality plays a key role in reducing a problem with non-gaussian distribution to one involving Gaussians matching the first and second moments of the original distribution. This phenomena has been investigated actively for various statistical inference problems, such as the universality of the test error of classifiers/regressors obtained through ridge regression or running gradient descent. For an incomplete list please see: \cite{montanari2017universality, panahi2017universal, oymak2018universality, abbasi2019universality, montanari2022universality, han2023universality, lahiry2023universality, dandi2023universality, ghane2024universality}. In the context of random Features model, the universality of the test error for regression was rigorously proved by \cite{hu2022universality}.

Gaussian universality plays a key role in reducing problems with non-Gaussian distributions to equivalent problems involving Gaussian distributions that match the first and second moments of the original distribution. This phenomenon has been actively investigated for various statistical inference problems, such as the universality of the test error of classifiers and regressors obtained through ridge regression or gradient descent. For an incomplete list, see \cite{montanari2017universality, panahi2017universal, oymak2018universality, abbasi2019universality, montanari2022universality, han2023universality, lahiry2023universality, dandi2023universality, ghane2024universality, ghane2025concentration}. In the context of Random Features models, the universality of the test error for regression was rigorously proved by \cite{hu2022universality, bosch2023precise, montanari2022universality, schroder2023deterministic}.

The Gaussian Equivalence property is a framework used in the context of Random Features models. It allows for recursive characterization of layer-wise statistics and provides theoretical justification for analyzing neural networks through their Gaussian approximations. An interested reader can refer to Section ~\ref{sec:preliminaries} for details on Gaussian Universality and Gaussian Equivalence. This principle has been used in many recent works, such as \cite{goldt2020modeling,bosch2023precise, hu2022universality, schroder2023deterministic, schroder2024asymptotics, defilippis2024dimension}. The paper \cite{hu2022universality} was the first to provide a rigorous proof of Gaussian Equivalence for Random Features Models with one hidden layer. The subsequent papers \cite{bosch2023precise, schroder2023deterministic} have proved different forms of Gaussian Equivalence for deep RF models. It should be mentioned that all works mentioned in this paragraph operate under the assumptions that the random features are Gaussian. 

Theoretical analyses of quantization and pruning are limited in the literature.The investigated topics include post-training quantization \cite{zhang2025provable}, training-aware quantization \cite{askarihemmat2024qgen}, analysis of generalization error of linear models for binary classification \cite{akhtiamov2024regularized}, multiclass classification \cite{ghane2025concentration} and pruning in the context of random features model \cite{chang2021provable}.

\section{Main Results}
\label{sec:results}

The following theorem, which is the main result of our work, provides a precise asymptotic characterization of the test loss for the quantized and non-quantized Random Features Models. Since we obtain the same expression for both, we conclude that quantizing the hidden layers to one bit naively does not lead to any degradation of performance for the Random Features Models as long as the model and the dataset are big enough and both models are trained via SMD using the same smooth mirror function.  

\begin{theorem}\label{thm: main}
    Let $f(\mathbf{x}, \ba, \bW^{1}, \dots, \bW^{L})$ be the Random Features Model defined in Subsection \ref{subs: notation}, where $$\bW^{1} \in \bbR^{d_1 \times d}, \dots, \bW^{L} \in \bbR^{d_L \times d_{L-1}}$$ are either full precision weights sampled i.i.d. from $$\mathbf{W}^{\ell}_{ij} \sim \mathcal{N}\left(0, \frac{1}{d_{\ell-1}}\right)$$ or one-bit quantized weights sampled i.i.d. from $$\mathbf{W}^{\ell}_{ij} \sim \frac{1}{\sqrt{d_{\ell-1}}}\mathbf{Unif}(-1,+1)$$ Assume that 
    \begin{itemize}
        \item The data $\bx \in \bbR^d$ satisfies $$\bbE \bx =0$$ along with the LCP property from Definition \ref{def: LCP} with $$\sigma^2 = O \left(\frac{1}{d}\right)$$
        \item The activation function $\phi$ is odd and has bounded first, third and fifth derivatives.
        \item The dimension of the last layer $d_L$ exceeds the number of training samples $n$ and the last layer $\ba$ is trained to minimize the following objective using SMD with a mirror $\psi$ satisfying Assumptions \ref{ass: univ} initialized at $\ba_0 \in \bbR^{d_L}$: $$\min_{\ba} \sum_{i = 1}^n  \mathcal{L}\left(y_i - f(\mathbf{x}_i, \ba, \bW^{1}, \dots, \bW^{L})\right)$$
        \item The labels $y$ are generated using a ground truth $$\ba_* \sim \mathcal{N}(0, \frac{\bI}{d_L})$$
        as defined in \eqref{eq: labels}.
    \end{itemize}
    Then, in the asymptotic proportional regime $$n, d, d_1,\dots,d_L \to \infty,$$ 
    $$\frac{n}{d}, \frac{d}{d_1}, \dots, \frac{d_{L-1}}{d_L} = \Theta(1),$$ the test loss satisfies
    $$\mathbb{E}_{\bx}[(f(\mathbf{x}, \ba, \bW^{1}, \dots, \bW^{L})- y)^2] \to \tau = \tau^{(L)}$$
    Here, convergence means convergence in probability and $\tau$ can be found by solving a system of elaborate nonlinear scalar deterministic equations, which follow from \eqref{eq: mirr_final} for the case of general mirrors and are simplified for the case of SGD in \eqref{eq: sgd_final}. It should be noted that \eqref{eq: mirr_final} and \eqref{eq: sgd_final}  are min-max optimization objectives and $\tau$ can be found by solving the corresponding saddle-point equations.
    
    In particular, asymptotically, the error does not depend on the realizations of $$\bW^{1}, \dots, \bW^{L}$$ and does not change if we replace $$\bW^{1},\dots, \bW^{L} $$ by $$\frac{\sign(\bW^{1})}{\sqrt{d_1}}, \dots, \frac{\sign(\bW^{L})}{\sqrt{d_L}},$$
    where $\sign$ is applied entry-wise. 
\end{theorem}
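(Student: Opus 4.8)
The strategy is to establish the precise characterization together with both corollaries — ensemble-independence (quantized vs.\ full precision) and realization-independence — in one stroke, by composing the Gaussian Universality of Theorem~\ref{thm: univ} with the Gaussian Equivalence recursion and then checking that every quantity surviving into the limiting scalar $\tau^{(L)}$ is a \emph{deterministic} functional of second-order statistics and the aspect ratios alone. The first step is purely algebraic: since $y = \ba_*^\top\bx^{(L)}$ and $f(\bx,\ba,\bW^{1},\dots,\bW^{L}) = \ba^\top\bx^{(L)}$, the test loss equals $(\ba-\ba_*)^\top\bSigma_L(\ba-\ba_*) = \norm{\bSigma_L^{1/2}(\ba-\ba_*)}_2^2$ with $\bSigma_L = \bbE_\bx\,\bx^{(L)}{\bx^{(L)}}^\top$ and, by \eqref{eq: implc_smd_breg}, $\ba = \ba_{\bX^{(L)}}$ the minimum-Bregman-divergence interpolator of $\bX^{(L)}\ba = \by$ with $\by = \bX^{(L)}\ba_*$. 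So it suffices to control $g(\ba_{\bX^{(L)}})$ for $g(\bv) := \norm{\bSigma_L^{1/2}(\bv-\ba_*)}_2$, which is Lipschitz with constant $\norm{\bSigma_L}^{1/2}$; and $\norm{\bSigma_L} = O(1)$ because the Gaussian Equivalence recursion gives $\norm{\bSigma_\ell}\le\rho_{\ell,1}^2\norm{\bW^{(\ell)}}^2\norm{\bSigma_{\ell-1}} + \rho_{\ell,2}^2$ while $\norm{\bW^{(\ell)}} = O(1)$ with high probability for \emph{both} weight ensembles (standard non-asymptotic bounds for matrices with i.i.d.\ zero-mean, variance-$1/d_{\ell-1}$, sub-Gaussian entries).

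Next I would apply Gaussian Universality. Working conditionally on $\bW^{1},\dots,\bW^{L}$ and on $\ba_*$, I verify Assumptions~\ref{ass: univ} for the feature matrix $\bX^{(L)}$. The input $\bx$ is centered, satisfies Definition~\ref{def: LCP} with $\sigma^2 = O(1/d)$, and has $\tr\bSigma = O(1)$, $\kappa(\bSigma) = O(1)$. The map $\bx\mapsto\bx^{(L)}$ is a composition of the coordinatewise nonlinearity $\phi$ — which is $\norm{\phi'}_\infty$-Lipschitz, finite by the bounded-derivative hypothesis — with the linear maps $\bW^{(\ell)}$, so on the high-probability event $\{\norm{\bW^{(\ell)}} = O(1)\ \forall\ell\}$ it is $O(1)$-Lipschitz and $\bx^{(L)}$ inherits the LCP with parameter $O(1/\sqrt d)$. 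This immediately gives bounded moments ($\bbE\norm{\bx^{(L)}}_2^q = O(1)$ for $q\le6$, using $\bbE\norm{\bx^{(L)}}_2^2 = \tr\bSigma_L = O(1)$ and norm concentration), bounded mean ($\norm{\bmu_L}_2\le\bbE\norm{\bx^{(L)}}_2 = O(1)$), and the quadratic-form variance condition ($\var(\bx_i^{(L)\top}\bv)\le\norm{\bv}_2^2\sigma^2 = O(1/d)$). Together with the mirror-map conditions (assumed, with $\norm{\nabla\psi(\ba_{\bX^{(L)}})}_2 = O(\sqrt{d_L})$ following from strong convexity and a norm bound on the interpolant), the label form $\by = \bX^{(L)}\ba_* + \bm 0$ with $\norm{\ba_*}_2 = O(1)$, and the overparametrization $d_L/n > 1$, Theorem~\ref{thm: univ} yields $|g(\ba_{\bX^{(L)}}) - g(\ba_\bG)|\to0$ in probability, where $\bG$ has i.i.d.\ rows $\mathcal N(\bmu_L,\bSigma_L)$.

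It then remains to characterize the Gaussian surrogate and to collapse it to ensemble-independent invariants. The interpolant $\ba_\bG$ and the functional $g$ see the Gaussian problem only through $(\bSigma_L,\bmu_L)$, the ratio $n/d_L$, the mirror $\psi$, the initialization $\ba_0$, and through $\by$ only via its deterministic limiting norm $\norm{\by}_2^2/n\to\tr\bSigma_L/d_L = \sigma_L^2$ plus its independence of $\bG$; rotational invariance of $\ba_*\sim\mathcal N(0,\bI/d_L)$ collapses all alignment information to the limiting spectral distribution of $\bSigma_L$. A Convex Gaussian Min--Max analysis of minimum-Bregman-divergence interpolation then produces the scalarized saddle-point systems \eqref{eq: mirr_final}/\eqref{eq: sgd_final}, so $g(\ba_\bG)^2\to\tau^{(L)}$. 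Finally, the Gaussian Equivalence recursion shows all these inputs are non-random in the limit and identical for the two ensembles: $\sigma_0^2 = \tr\bSigma/d$ is fixed; $\rho_{\ell,1},\rho_{\ell,2}$ are deterministic functions of $\sigma_{\ell-1}^2$ only; $\tr\bSigma_\ell/d_\ell\to\rho_{\ell,1}^2\sigma_{\ell-1}^2 + \rho_{\ell,2}^2$ because $\tr(\bW^{(\ell)}\bSigma_{\ell-1}{\bW^{(\ell)}}^\top) = \tr(\bSigma_{\ell-1}{\bW^{(\ell)}}^\top\bW^{(\ell)})\approx(d_\ell/d_{\ell-1})\tr\bSigma_{\ell-1}$; and the limiting spectral distribution of $\bSigma_\ell\approx\rho_{\ell,1}^2\bW^{(\ell)}\bSigma_{\ell-1}{\bW^{(\ell)}}^\top + \rho_{\ell,2}^2\bI$ is fixed recursively by Marchenko--Pastur-type universality for $\bW^{(\ell)}\bSigma_{\ell-1}{\bW^{(\ell)}}^\top$, which depends on the entries of $\bW^{(\ell)}$ only through their first two moments and is hence the same for $\mathcal N(0,1/d_{\ell-1})$ and for the uniform $\pm1/\sqrt{d_{\ell-1}}$ ensemble; one also checks $\bmu_\ell$ has a deterministic limit (it is $0$ to leading order, since $\bW^{(\ell)}\bx^{(\ell-1)}$ is asymptotically symmetric and $\phi$ is odd, and $\norm{\bmu_L}_2 = O(1)$ in any case). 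Therefore $\tau = \tau^{(L)}$ is the same for the full-precision and one-bit models and does not depend on the realizations of $\bW^{1},\dots,\bW^{L}$; passing from the Gaussian ensemble to its one-bit quantization — which is exactly the substitution $\bW^{\ell}\mapsto\frac{1}{\sqrt{d_{\ell-1}}}\sign(\bW^{\ell})$ in the statement, since the sign of a centered Gaussian is uniform on $\{-1,+1\}$ — leaves $\tau$ unchanged, which is the final assertion.

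\emph{Main obstacle.} The delicate step is the well-conditioning hypothesis $\sigma_{\min}(\bX^{(L)}{\bX^{(L)}}^\top) = \Omega(1)$ w.h.p.\ in Assumptions~\ref{ass: univ}. In the normalization $\norm{\bx}_2 = O(1)$ the pre-activations are $O(1/\sqrt d)$, so each layer is nearly linear and the ``nonlinear'' contribution $\rho_{\ell,2}^2$ to the layer covariance is only $\Theta(\sigma_{\ell-1}^6)$; one must argue that composing $L$ such layers nonetheless keeps the smallest eigenvalue of the $n\times n$ feature Gram matrix bounded below — essentially a hard-edge estimate for a product-type random matrix, in which the width ratios, the overparametrization $d_L > n$, and the genuine nondegeneracy of $\phi$ all have to be used (and which may require a lower bound on all intermediate widths or a sharpening of the Gaussian Equivalence estimate at the relevant scale). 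A secondary obstacle is extending the Gaussian Equivalence recursion, previously proved only for Gaussian weights, to the $\pm1$ ensemble; this is handled by the same entrywise (Lindeberg) swap underlying the Marchenko--Pastur universality above, since the recursion is a function of the first two entrywise moments only. The Convex Gaussian Min--Max derivation of \eqref{eq: mirr_final}/\eqref{eq: sgd_final} for a Gaussian design with general covariance and a general strongly convex mirror is itself substantial, but is carried out in Section~\ref{sec: approach} and used here as a black box.
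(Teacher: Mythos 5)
Your first two steps track the paper closely: the reduction of the test loss to $\|\bSigma_L^{1/2}(\ba-\ba_*)\|_2^2$ for the Bregman interpolator, the propagation of the LCP through Lipschitz layers on the event $\|\bW^{(\ell)}\|_{\mathrm{op}}=O(1)$ to verify Assumptions~\ref{ass: univ} conditionally on the weights, and the Gaussian Equivalence recursion for $\bSigma_\ell$ are exactly the paper's route (the paper settles your $\sigma_{\min}(\bX^{(L)}{\bX^{(L)}}^T)=\Omega(1)$ obstacle by invoking Marchenko--Pastur universality for LCP data, so that part is fine to leave as a citation).

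The genuine gap is in your final step, where the two corollaries (ensemble-independence and realization-independence) are supposed to come out. You condition on the weights, scalarize the Gaussian surrogate by CGMT over $\bG$ alone, and then claim the limit depends only on $(n/d_L,\psi,\ba_0)$ and the \emph{limiting spectral distribution} of $\bSigma_L$, with "rotational invariance of $\ba_*$" collapsing all alignment information, so that MP-type moment universality of $\bW^{(\ell)}\bSigma_{\ell-1}{\bW^{(\ell)}}^T$ finishes the job. This does not work at the stated generality. For a general mirror $\psi$ (e.g.\ negative entropy, which the theorem and the experiments cover), the Bregman projection is not rotationally invariant: the limiting error depends on how the eigenbasis of $\bSigma_L$ sits relative to the coordinate frame in which $\psi$ is separable, not just on the spectrum. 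With Gaussian hidden weights this is rescued because $\bW^{(\ell)}\bSigma_{\ell-1}{\bW^{(\ell)}}^T$ has Haar-distributed eigenvectors; with $\pm1/\sqrt{d_{\ell-1}}$ weights it does not, and weak convergence of the ESD (which is all a Lindeberg/moment swap gives you) says nothing about the eigenvector statistics or about the specific coupled resolvent traces entering the saddle-point system, nor does it by itself give the concentration over weight realizations you also assert. This is precisely why the paper does \emph{not} condition on the weights at this stage: it derives \eqref{eq: sgd_final} and \eqref{eq: mirr_final} by applying CGMT layer by layer \emph{to the weight matrices themselves}, and for the one-bit ensemble it replaces plain CGMT by the universality of CGMT-type asymptotics for i.i.d.\ subgaussian designs from \cite{han2023universality}; both conclusions of the theorem fall out of that scalarization. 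Relatedly, you cannot treat the derivation of \eqref{eq: mirr_final}/\eqref{eq: sgd_final} as a black box "for a Gaussian design with general covariance": in the paper that derivation is exactly where the randomness of $\bW^{1},\dots,\bW^{L}$ is processed, and it is not available to you once you have frozen the weights. (Minor additional slip: in the surrogate problem the labels are $\by=\bG\ba_*$, so $\by$ is not independent of $\bG$ and enters through more than its norm; the paper handles this by working with $\ba-\ba_*$ throughout.) To repair your route you would need, at minimum, a deterministic-equivalent/eigenvector-delocalization statement for products of the two weight ensembles strong enough to control the mirror-dependent functionals, which is essentially the universality input you omitted.
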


\begin{remark}
    Examples of data satisfying LCP with $\sigma^2 = O\left(\frac{1}{d}\right)$ include $\bx = \bg \sim \mathcal{N}(0,\bSigma)$ for $\bSigma$ such that $$\tr(\bSigma) = O(1)$$ and $$\kappa(\bSigma) = \frac{\sigma_{\max}}{\sigma_{\min}} = O(1)$$ as well as $\bx = f(\bg)$ for any Lipschitz $f$ with bounded Lipschitz constant and the same $\bg$ defined as above. 
\end{remark}

\begin{remark}
    We observe a close match between the performances of Gaussian and Rademacher Random Features trained to classify points from MNIST dataset with ReLU activation function in Section \ref{sec:numerics}. As such, we believe that it should be possible to extend Theorem \ref{thm: main} to non-centered data and non-odd activation functions. The main technical obstacle for this  is establishing Gaussian Equivalence results applicable to the latter scenario. We leave this as an important direction for future work.  
\end{remark}

\begin{remark}
    While we postpone presentation of the exact non-linear equations from Theorem \ref{thm: main} defining $\tau$ to \eqref{eq: mirr_final} and \eqref{eq: sgd_final} in the Appendix C, we would like to provide the essence here.  
    To find $\tau$ for a general smooth mirror $\psi$, one needs to solve a nonlinear scalar deterministic system of equations involving $2L$ scalar parameters.  For the case of SGD, i.e. when the mirror $$\psi(\cdot)= \frac{1}{2}\|\cdot\|_2^2$$ the number of unknown parameters could be reduced to $L$. This way, we obtain a deterministic system of equations that defines the test MSE loss implicitly for both Gaussian and approprietly normalized Rademacher weights. To the best of our knowledge, our work is the first work characterizing the test loss for normalized Rademacher Random Features via a finite number of scalar equations.
    
    % Furthermore, $\tau^{(L)}$ could be found by first solving the aforementioned deterministic scalar system of nonlinear equations in $L$ scalar variables $\zeta_1, \cdots, \zeta_{L-1}$ and $\theta$ and then a system of linear equations in L variables. For instance, to demonstrate, for $\zeta_1$ we obtain a nonlinear equation of the form
    % \begin{align*}
    %   & \Bigl(\frac{\zeta_3}{8} + \frac{c_L^2 \rho^2_{1,2}}{\rho^2_{1,1}} \Bigr) \zeta_1 = \frac{1}{2} - \\
    %   & \zeta_1  \frac{\zeta^2_2 c^2_L}{2} \tr \Bigl( \zeta_1 c^2_{L}\zeta^2_2 \|\bg_1\|_2^2 \bSigma_0 + \zeta_2 \bI  \Bigr)^{-1} 
    % \end{align*}
    % Here $c_L := C \theta \prod_{i=3}^{L-1} \zeta_i$. For 
\end{remark}

\section{Our approach and contributions}\label{sec: approach}

Our approach is different from \cite{bosch2023precise} and \cite{schroder2023deterministic}, as we start with invoking Gaussian Universality for the last layer and only afterwards do we apply Gaussian Equivalence Principle to calculate the covariance of the last layer. This approach allows us to analyze the generalization error of the solutions obtained via Stochastic Mirror Descent with smooth mirrors and arbitrary convex losses, extending results available in the literature. Indeed, to the best of our knowledge, the only examples considered in the literature previously are ridge regression \cite{gerace2020generalisation, dhifallah2020precise, ghorbani2021linearized, mei2022generalization, goldt2022gaussian}, SGD initialized at $0$ \cite{hastie2022surprises} and the $\ell_2$ \cite{montanari2019generalization} and $\ell_1$ \cite{liang2022precise} max-margin classifiers. In addition, we prove Gaussian Equivalence for deep Random Features Models ($L > 1$), while \cite{schroder2023deterministic} leaves the case $L>1$ as a conjecture for objectives other than ridge regression and \cite{bosch2023precise} takes an additional expectation with respect to the weights in the Gaussian Equivalence part.

Other works \cite{montanari2022universality, defilippis2024dimension, schroder2024asymptotics} are  more similar to the present paper, as they apply similar universality results to the output of the last layer as well. The main differences between \cite{montanari2022universality, defilippis2024dimension, schroder2024asymptotics}  and our work is that we extend their results to the case of normalized Rademacher features to capture the one-bit quantization of weights as well as apply additional steps to show that the test error converges to a deterministic quantity independent of the realizations of the weights.

After combining Gaussian Universality with Gaussian Equivalence, we proceed to apply Convex Gaussian Min-Max Theorem (CGMT) \cite{thrampoulidis2014tight, akhtiamov2024novel} to each hidden layer one by one to prove that the error concentrates with respect to the randomness in each $\bW^{\ell}$ as well. For Gaussian weights, this application is more straightforward, while for normalized Rademacher weights we have to employ an additional step and apply another result \cite{han2023universality} that says that CGMT can be applied to many other i.i.d. subgaussian designs. This allows us to derive identical expressions for the test losses for both Gaussian and Rademacher models and conclude that one-bit quantization does not lead to any deterioration in performance for Random Features Models. This aspect of our work is novel as well: to the best of our knowledge, our work is the first to derive expressions for deep non-Gaussian Random Features.

% Finally, to the best of the authors' knowledge, all other works in the literature consider the case of ridge regression or other objectives with an explicit regularization term. In our setting, we assume that the model is trained using Stochastic Mirror Descent for a smooth mirror and a convex loss. 

\section{Numerical Experiments}

\label{sec:numerics}

We validate our theoretical results through experiments on Random Features models with Gaussian Weights and with one-bit quantized weights with last layer trained on synthetic Gaussian data and MNIST\cite{deng2012mnist}. For the Gaussian data, we used tanh activation function and trained the last layer with SGD as well as with negative entropy mirror. For MNIST, we use ReLU activations and trained the last layer using SGD.

\subsection{One-Bit Quantization}

\subsubsection{Synthetic data}

We verify that one-bit quantization incurs no loss by comparing test MSE between Gaussian and Rademacher weights across depths (defined as the numebr of hidden layers) $$L \in \{1, 2, 3, 4, 5\}$$ Specifically, we compare two Random Features variants, Gaussian $$\mathbf{W}_{ij}^{(\ell)} \sim \mathcal{N}\left(0, \frac{1}{d_{\ell-1}}\right)$$ and Rademacher $$\mathbf{W}_{ij}^{(\ell)} \sim \frac{1}{\sqrt{d_{\ell-1}}}\text{Unif}\{-1, +1\}$$
We generate synthetic data with $\mathbf{x}_i \sim \mathcal{N}(0, \frac{\mathbf{I}_d}{d})$ and labels $y_i = \phi_L(\mathbf{x}_i)^\top \mathbf{a}_*$, where $\phi_L$ is the $L$-layer random features map with $\tanh$ activation, $\mathbf{a}_* \sim \mathcal{N}(0, \frac{1}{d_L}\mathbf{I})$. This is the data-generation procedure that will be used for demonstrating inference speedup. We use $n = 1000$ training samples,  input dimension $d = 8192$ and hidden dimensions $d_1 = \dots = d_L = 4096$ for each hidden layer. Following the overparametrized regime, we fix random features and train only the last layer via minimum $\ell_2$ norm  solution, which can be recovered analytically as: $$\mathbf{a} = \Phi^\top (\Phi \Phi^\top)^{-1} \mathbf{y}\text{, where } \Phi \in \mathbb{R}^{n \times d_L}$$ As can be seen in Figure \ref{fig:synt}, we observe a close match between the test error of the RF model with Gaussian weights and the RF model with Rademacher weights. To illustrate a more general case of our theorem, we also consider the negative Shannon entropy $$\psi(\bw) = \sum |w_i|\log(|w_i|)$$ under the same setting in Figure \ref{fig:synt_entr}. For both scenarios, we use $n_{test} = 5000$ test samples for estimating the test MSE loss.

\begin{figure}[htb]
    \centering
    \includegraphics[width=0.6\linewidth]{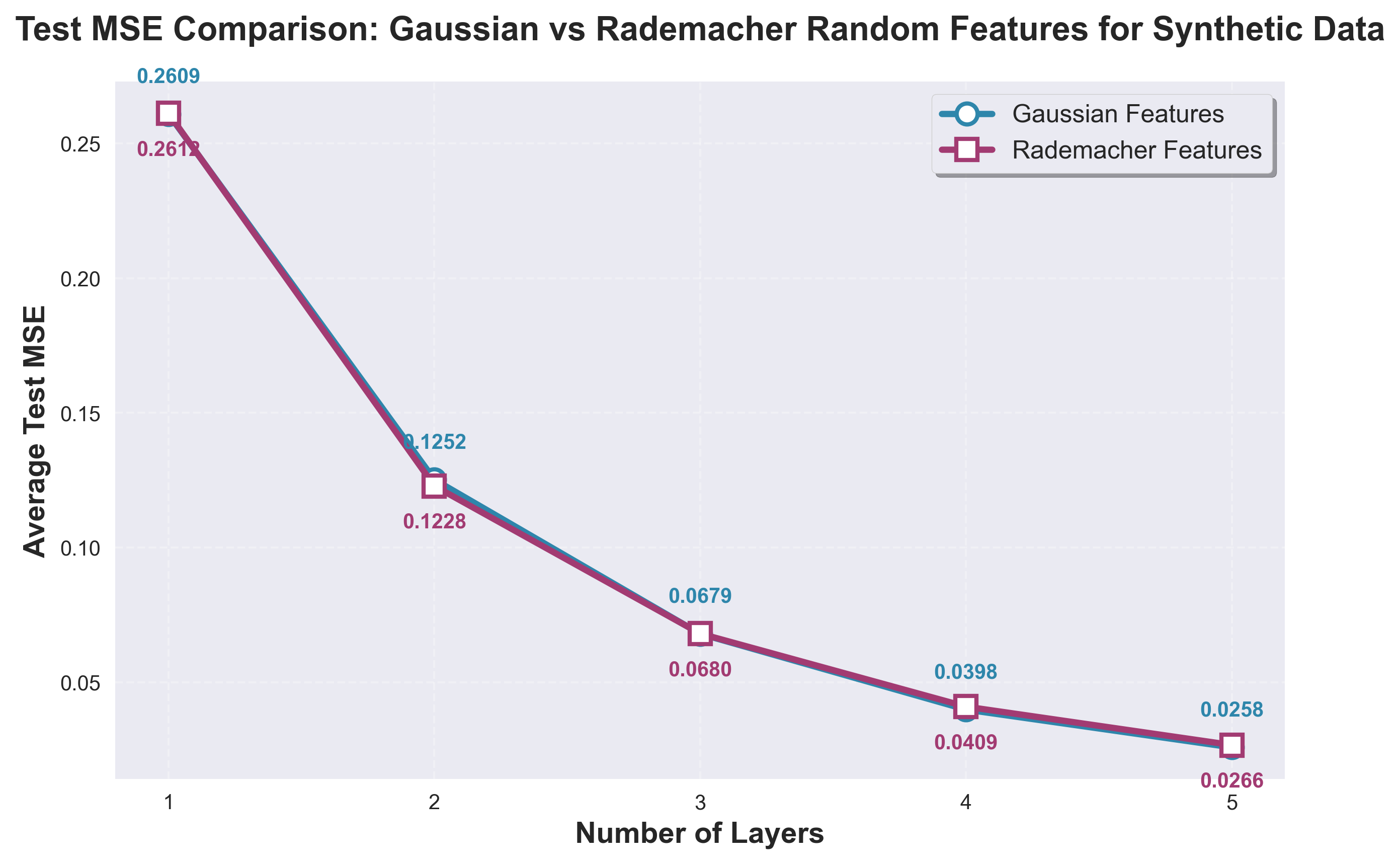}
    \caption{Random Features with varying depth for Synthetic Data for SGD}
    \label{fig:synt}
\end{figure}

\begin{figure}[htb]
    \centering
    \includegraphics[width=0.6\linewidth]{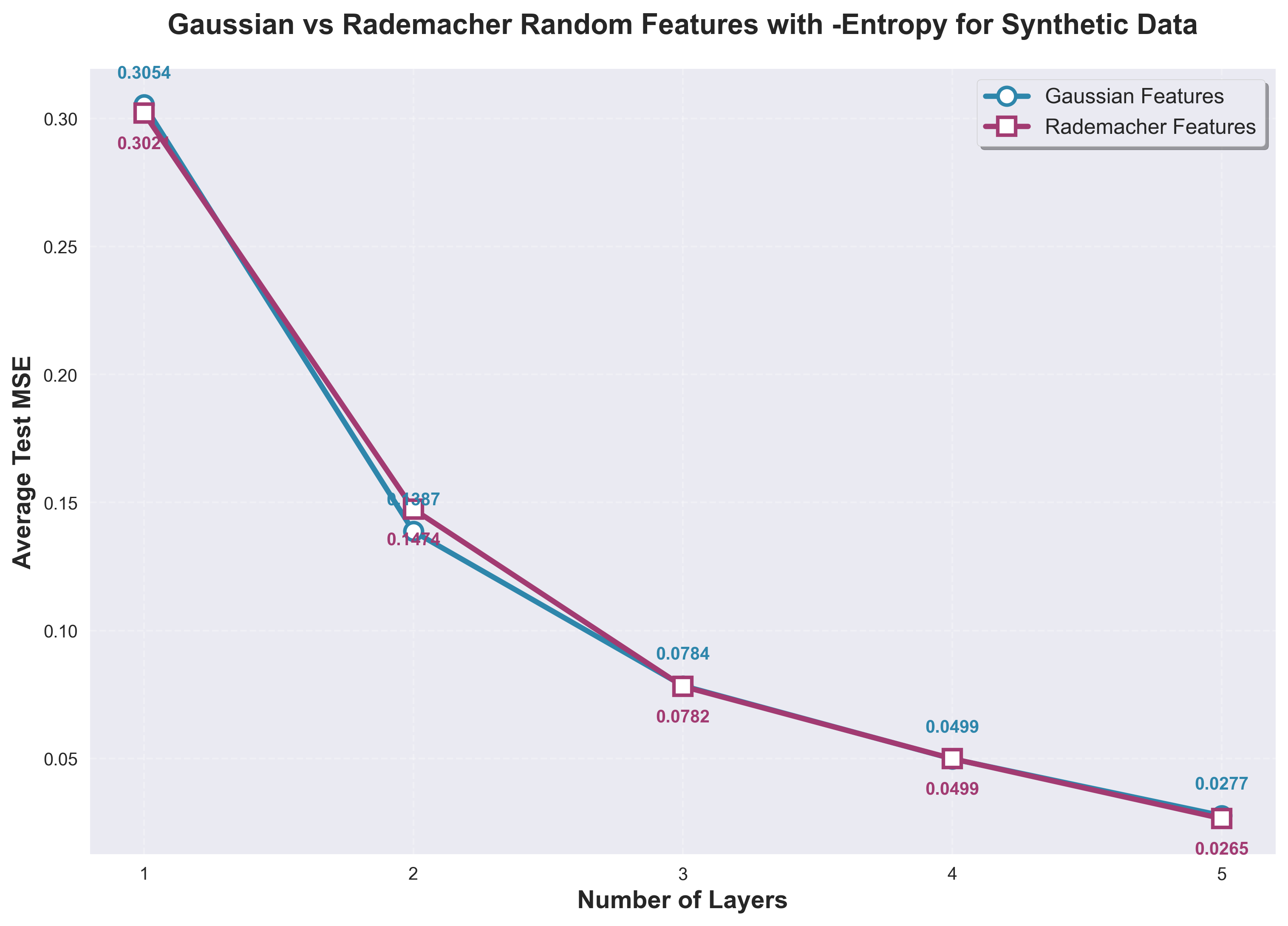}
    \caption{Random Features with varying depth for Synthetic Data for Negative Entropy Mirror}
    \label{fig:synt_entr}
\end{figure}

\subsubsection{MNIST}

We run the following two experiments for MNIST:

\begin{itemize}
\item We train random feature networks with Gaussian and Rademacher weights and with ReLU activations on MNIST, 
varying the number of layers $$L \in \{1,2,3,4,5\}$$ while fixing the hidden dimensions  $$d_1 = d_2 = \dots d_L = 512$$ Since this is a classification task, we report test accuracy rather than test MSE. For each layer count, we use $20$ samples per class and average results 
over $20$ trials. We use one-hot encoding of the classes in the optimization objective. The final layer uses minimum $\ell_2$-norm interpolation. The results are presented in Figure \ref{fig:mnist_vs_width} and demonstrate a close match, despite not being covered by our theory. We use a total of $200$ test samples for estimating the resulting test accuracy. 

\item We also train $L = 2$ - layer random feature networks with Gaussian and Rademacher weights 
on MNIST, varying the hidden dimension $$d_1 = d_2 \in \{256,512,1024,2048,4196\}$$ while 
fixing $L = 2$. Since this is a classification task, we report test accuracy rather than test MSE. For each width, we perform evaluation in the same way as in the experiment from the previous bullet point.

\end{itemize}

\begin{figure}[htb]
    \centering
    \includegraphics[width=0.6\linewidth]{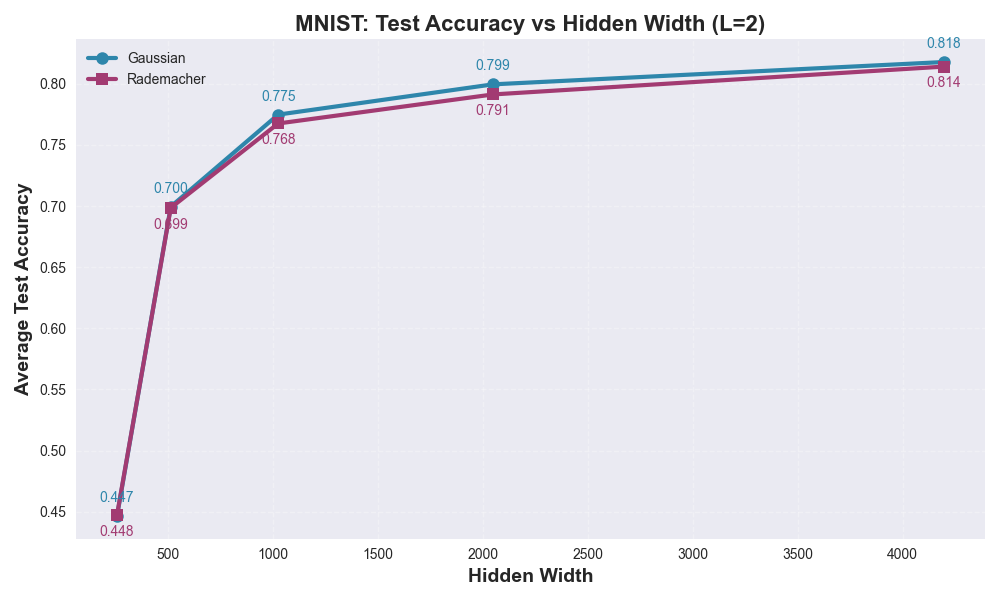}
    \caption{Random Features with varying depth for MNIST}
    \label{fig:mnist_vs_width}
\end{figure}

\begin{figure}[htb]
    \centering
    \includegraphics[width=0.6\linewidth]{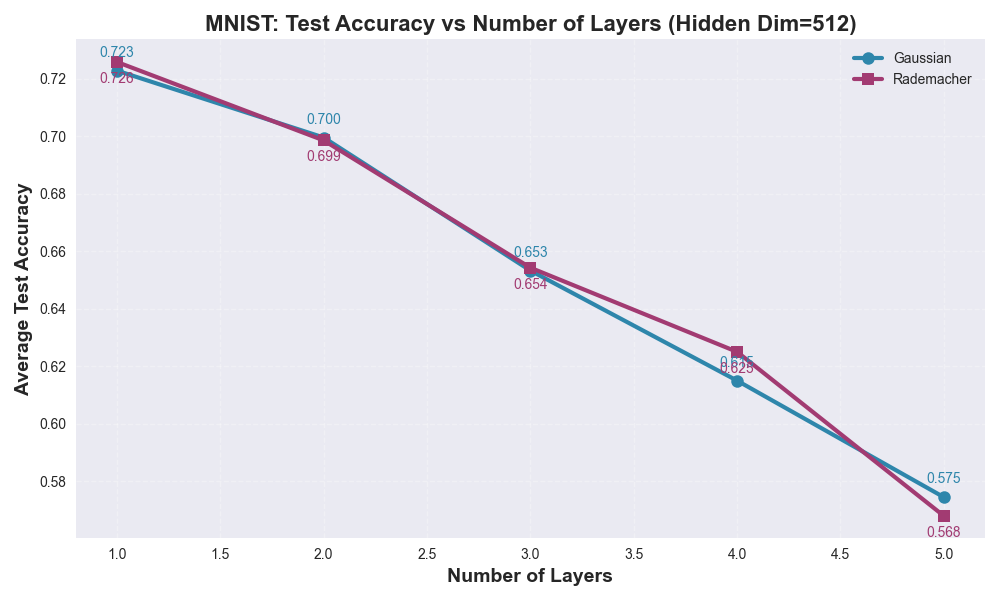}
    \caption{Random Features with varying hidden width for MNIST}
    \label{fig:mnist_vs_depth}
\end{figure}

% \subsection{Inference Speedup}

% We benchmark inference comparing standard PyTorch (FP32) against optimized 1-bit kernels (GemLite library):
% \begin{itemize}
%     \item Test configuration: $d = 16384$, hidden dimensions from $0.25d$ to $2d$
%     \item Protocol: 
%     \item Metrics: inference time (ms), speedup ratio, memory usage
% \end{itemize}

% As Figure \ref{fig:gemlite} demonstrates, we obtain a $\sim 4X$ inference speed up from reduced memory bandwidth and optimized binary operations, $32 \times$ memory reduction enabling edge deployment.

\subsection{Inference speedup}

For investigating the potential speedup of employing one-bit weights during inference, we consider the setting in Section 6.1.1 for  $n = 1000$  training samples. We proceed to load the model using PyTorch with CUDA acceleration, on an RTX 2060 laptop GPU with 6GB VRAM in FP32 precision. Furthermore, we leverage the Gemlite \cite{gemlite2024}, a triton-based kernel library with 1 bit weights and group size set to 64 with 500 warmup runs and 50,000 timed iterations on batch size 1. We present the results in Figure \ref{fig:gemlite} for the Random Features model with one hidden layer. We observe a 4 times speed-up on average.

\begin{figure}[htb]
    \centering
    \includegraphics[width=0.6\linewidth]{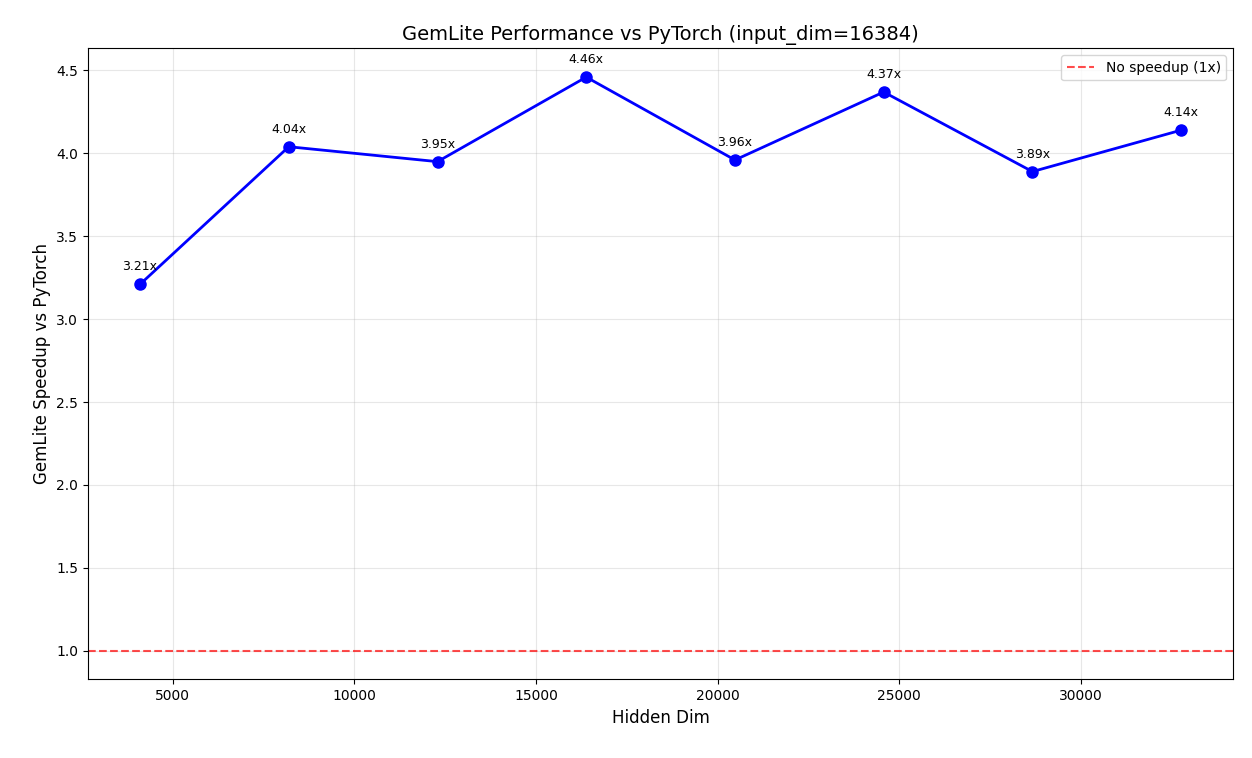}
    \caption{Inference speed up vs Hidden Dimension}
    \label{fig:gemlite}
\end{figure}

\section{Conclusion and Future Works}

The present paper leverages a combination of Gaussian Universality and Gaussian Equivalence principles followed by an application of Gaussian Comparison Inequalities to analyze one-bit compression of the weights for Random Features Models. We demonstrate that for random features the naive one-bit compression is lossless and results in in a $\sim 4X$ inference speed up assuming the hidden layer dimension  is sufficiently wide. It is worth mentioning that quantizing the last layer under the same setting would neither be lossless nor result in a noticeable inference speed up. As such, we suggest that the last layer should never be quantized in practice. 

Our experiments suggest that one-bit quantization might be lossless for Random Features with ReLU trained on classification tasks as well. This calls for extending our methods to classification instead of regression and to non-odd activation functions. Both former and latter extensions would require a Gaussian Equivalence Principle for non-centered data. 

Finally, while we believe that the setting of random features considered in the present work sheds light on one-bit quantization, it would be interesting to study the more nuanced picture of learnable representations. While performing the full analysis might be too challenging in general, we suggest starting with the simpler case when the features are learned via one-step Gradient Descent \cite{moniri2023theory}.  In the latter setting, it would be interesting to see the effects of more sophisticated one-bit compression techniques as well.

\bibliographystyle{apalike}
\bibliography{main}

\appendix

\section{Scheme of the proof of Theorem \ref{thm: main}}

As outlined in Section \ref{sec: approach}, our approach is comprised of a consecutive application of Gaussian Universality, Gaussian Equivalence and Gaussian Comparison Inequalities. We present the omitted proofs related to Gaussian Universality and Gaussian Equivalence in Subsections \ref{sec: app_gaus_univ} and \ref{sec: app_gaus_equiv} respectively, followed by the missing CGMT derivations in Section \ref{sec: CGMT}. 

\section{Gaussian Universality and Gaussian Equivalence}

Denote the rows of $\bW^{(\ell)}$ by $\bw^{(\ell)}_1,\dots, \bw^{(\ell)}_{d_\ell}$

 Note that the following event holds w.h.p. with respect to randomness in $\bW^{(1)}, \dots, \bW^{(L)}$ 

\begin{align}\label{eq: generic_weights}
     \max_{0\leq i, j \leq d_\ell}\left|{\bw^{(\ell)}_i}^T\bw^{(\ell)}_j - \delta_{i, j} \right| \leq \frac{C}{\sqrt{d_\ell}} \text{ and }\|\bW^{(\ell)}\|_{\text{op}} = O(1)
\end{align}

Note that \eqref{eq: generic_weights} holds w.h.p. both when the weights are normalized i.i.d. Rademacher as well as standard Gaussian. For the purposes of this section, we freeze a realization of the features $\bW^{(\ell)}$ satisfying \eqref{eq: generic_weights} for $\ell = 1,\dots, L$ and consider the randomness w.r.t. the inputs $\bx$ only.

\subsection{Gaussian Equivalence}\label{sec: app_gaus_equiv}

\subsubsection{One hidden layer}

We will illustrate the argument for Random Features Models with one hidden layer first.  We would like to apply Theorem \ref{thm: univ}  to $\ba = \phi(\bW\bx)$. Denote $m = d_1$ the width of the only hidden layer in this case for ease of notation. Note that $\bbE_\bx \phi(\bW\bx)$ = 0 since $\phi$ is odd and denote $$\bSigma_{\ba} = \bbE_{\bx}  \phi(\bW\bx) \phi(\bW\bx)^T$$ 

Theorem \ref{thm: univ} guarantees that, even though $\ba$ is not Gaussian, the test error remains unchanged if we train the last layer on data sampled from $\ba' \sim \mathcal{N}(0, \bSigma_\ba)$ instead of $\ba$.

Thus, according to Lemma 5 from \cite{hu2022universality}, we have 
\begin{align}
    \|\bSigma_\ba - \bSigma_\bb\|_{op} = O\left(\|\bW\|_{op}\frac{\polylog(m)}{m^2}\right)
\end{align}
Here, $m$ is the width of the hidden layer and  $\bSigma_\bb$ is the covariance of the distribution defined via 
\begin{align*}
    & \bb \sim \rho_1\bW\bx + \rho_2\bg \\
    & \gamma \sim \mathcal{N}(0,1) \\
    & \rho_1 =  \bbE_\gamma \gamma\phi(\gamma) \\
    & \rho_2 =  (\bbE_\gamma \phi^2(\gamma) - \rho_1^2)^{\frac{1}{2}}
\end{align*}

Note that $\|\bW\|_{op} = O(1)$ holds w.h.p. as well. Therefore,
\begin{align}
    \|\bSigma_\ba - \bSigma_\bb\|_{op}  = o(\frac{1}{\sigma_{\min}(\bSigma_\ba)}) \text{ as }m \to \infty
\end{align}

Hence, since the test error depends continuously on the covariance for regression trained on gaussian data, we can replace $\ba$ by $\mathcal{N}(0, \bSigma_\bb)$ without changing the generalization error. 

Finally, note that $$\bSigma_\bb = \rho_1^2\bW\bSigma\bW^T + \rho_2^2\bI_m$$

\subsubsection{Multiple hidden layers}

Denote the output of the $\ell$-th hidden layer by $\bx^{(\ell)}$, $\ell = 0,\dots L$.  Same as in the case of one hidden layer, we apply Theorem \ref{thm: univ} to $\bx^{(L)}= \phi(\bW^{(L-1)}\bx^{(L-1)})$. Again, same as in the case of one hidden layer, we have $\bbE_{\ba \sim \bx^{(L)}} \ba  = 0$ and denote

\begin{align}\label{eq: recurs}
    & \bSigma_{\ell} = \bbE_{\bx^{(\ell)}}  \bx^{(\ell)} {\bx^{(\ell)}}^T \quad \ell = 0,\dots, L \nonumber \\ 
    & \tbSigma_{\ell} =   \rho_{\ell,1}^2\bW^{(\ell)}\bSigma_{\ell - 1}{\bW^{(\ell)}}^T + \rho_{\ell,2}^2\bI \quad \ell = 1,\dots, L
\end{align}

where \begin{align}
& \rho_{\ell,1} = \frac{1}{\sigma^2_{\ell-1}} \mathbb{E}_{z \sim \mathcal{N}(0, \sigma^2_{\ell-1})}z\phi(z)\\ 
& \rho_{\ell,2}^2 = \mathbb{E}_{z \sim \mathcal{N}(0, \sigma^2_{\ell-1})}\phi(z)^2 - \sigma^2_{\ell-1}\rho_{\ell,1}^2 \\
& \sigma_{\ell}^2 = \frac{\tr(\bSigma_\ell)}{d_\ell}
\end{align}

Similarly to the one hidden layer case, our goal is to show that
\begin{align}\label{eq: approx_cov}
    \|\bSigma_{\ell} - \tbSigma_\ell\|_{op} = o(\frac{1}{d_\ell})
\end{align}

Note that \eqref{eq: approx_cov} provides an asymptotic recurrence for finding $\bSigma^{(L)}$. 

To prove \eqref{eq: approx_cov}, note that, if $\bx^{(\ell-1)}$ were Gaussian, we would be able to obtain the desired result in the same way as for one hidden layer by appealing to the results of \cite{schroder2023deterministic}. However, $\bx^{(\ell-1)}$ is not Gaussian in general for $\ell>1$. As such, we outline a more general argument based on subgaussianity below.  

\begin{definition}
A random variable $s$ is called subgaussian if there exists a constant $\sigma > 0$ such that for all $t \geq 0$,
$$\mathbb{P}(|s| \geq t) \leq 2e^{-\frac{t^2}{2\sigma^2}}.$$
The smallest such constant $\sigma$ is called the subgaussian parameter of $s$.
\end{definition}

We will also need a definition of the Lipschitz Concentration Property:

% \begin{lemma}
%     Each $\bw_i^T\bx^{(\ell)}$ is subgaussian with parameter $\sigma = O(\frac{1}{\sqrt{d}})$ for $\ell = 1,\dots, L$ and $i=1,\dots,m$.
% \end{lemma}

% \begin{proof}

% Recall that the data $\bx^{(0)} = \bx$ is Gaussian with a well-conditioned $\bSigma$ satisfying $\Tr(\bSigma) = O(1)$ by assumption. Also recall that $\bx^{(\ell)} = \phi(\bW^{(\ell)}\bx^{(\ell - 1)})$ by definition. 
    
% \end{proof}

\begin{definition}[Lipschitz Concentration Property]
A random vector $\bz \in \mathbb{R}^n$ satisfies the Lipschitz Concentration Property (LCP) with parameter $\sigma$ if for any $L$-Lipschitz function $f: \mathbb{R}^n \to \mathbb{R}$, the random variable $f(\bz) - \mathbb{E}[f(\bz)]$ is subgaussian with parameter $L\sigma$. That is, for all $t > 0$:
\[
\mathbb{P}(|f(\bz) - \mathbb{E}[f(\bz)]| \geq t) \leq 2\exp\left(-\frac{t^2}{2L^2\sigma^2}\right)
\]
\end{definition}

\begin{remark}
The LCP is preserved under Lipschitz mappings: if $\bz$ satisfies LCP with parameter $\sigma$ and $g: \mathbb{R}^n \to \mathbb{R}^m$ is $L_g$-Lipschitz, then $g(\bz)$ satisfies LCP with parameter $L_g\sigma$.
\end{remark}

We will make use of the following lemma in the rest of the proof: 

\begin{lemma}\label{lm: subg}
    Each $\bw_i^T\bx^{(\ell)}$ is subgaussian with parameter $\sigma = O(\frac{1}{\sqrt{d}})$ for $\ell = 1,\dots, L$ and $i=1,\dots,m$.
\end{lemma}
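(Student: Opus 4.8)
\textbf{Proof plan for Lemma \ref{lm: subg}.}

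The plan is to induct on $\ell$, showing that at each layer the output vector $\bx^{(\ell)}$ satisfies the Lipschitz Concentration Property with parameter $\sigma_\ell = O(1/\sqrt{d})$, and then deduce the subgaussianity of each linear functional $\bw_i^T \bx^{(\ell)}$ as an immediate consequence. The base case $\ell = 0$ is exactly the hypothesis on the data: $\bx = \bx^{(0)}$ satisfies LCP with $\sigma_0^2 = O(1/d)$. For the inductive step, I would write $\bx^{(\ell)} = \phi(\bW^{(\ell)} \bx^{(\ell-1)})$ and observe that the map $\bu \mapsto \phi(\bW^{(\ell)} \bu)$ is Lipschitz with constant $\|\phi'\|_\infty \cdot \|\bW^{(\ell)}\|_{\mathrm{op}}$, which is $O(1)$ on the high-probability event \eqref{eq: generic_weights} (using that $\phi$ has bounded first derivative). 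By the remark that LCP is preserved under Lipschitz maps, $\bx^{(\ell)}$ then satisfies LCP with parameter $\sigma_\ell = O(1)\cdot \sigma_{\ell-1} = O(1/\sqrt{d})$, closing the induction across the finitely many layers $\ell = 1, \dots, L$.

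Given that $\bx^{(\ell)}$ satisfies LCP with parameter $O(1/\sqrt{d})$, I would finish by applying the LCP to the specific $1$-Lipschitz-up-to-scaling function $f(\bu) = \bw_i^T \bu$. Since $\|\bw_i\|_2 = O(1)$ on the event \eqref{eq: generic_weights} (the diagonal entries ${\bw_i^{(\ell)}}^T \bw_i^{(\ell)}$ are within $C/\sqrt{d_\ell}$ of $1$), the function $\bu \mapsto \bw_i^T \bu$ is $L$-Lipschitz with $L = \|\bw_i\|_2 = O(1)$. Hence $\bw_i^T \bx^{(\ell)} - \bbE[\bw_i^T \bx^{(\ell)}]$ is subgaussian with parameter $L \sigma_\ell = O(1/\sqrt{d})$; since $\bbE_{\bx^{(\ell-1)}}[\bx^{(\ell)}] = 0$ as $\phi$ is odd (so the centering term vanishes, or at worst is negligible), this gives the claimed bound. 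One should be slightly careful that the weight vector $\bw_i = \bw_i^{(\ell+1)}$ in the statement belongs to the \emph{next} layer's weight matrix and is independent of the randomness in $\bx^{(\ell)}$, so conditioning on a good realization of all weight matrices is legitimate.

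The main obstacle is not any single estimate but the bookkeeping of the two sources of randomness. The LCP machinery controls fluctuations in the input $\bx$ with the weights frozen; one must verify that the event \eqref{eq: generic_weights} — on which $\|\bW^{(\ell)}\|_{\mathrm{op}} = O(1)$ and $\|\bw_i^{(\ell)}\|_2 = 1 + O(1/\sqrt{d_\ell})$ for every layer simultaneously — indeed holds with high probability for both Gaussian and normalized Rademacher weights (standard, via Bernstein/Hanson–Wright for the Gram entries and a net argument or matrix concentration for the operator norm), and that the $O(1)$ constants accumulated through the $L$ Lipschitz compositions remain $O(1)$ because $L$ is fixed. A secondary subtlety is that the activation is applied to $\bW^{(\ell)}\bx^{(\ell-1)}$, whose coordinates have variance $\sigma_{\ell-1}^2 = \tr(\bSigma_{\ell-1})/d_{\ell-1} = O(1/d)$ — vanishingly small — so one may prefer to track the rescaled layers or simply note that $\phi(0)=0$ (oddness) keeps $\|\bx^{(\ell)}\|$ at the right scale; either way the Lipschitz-composition argument above is robust to this and does not actually need the rescaling.
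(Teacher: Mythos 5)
Your proposal matches the paper's proof essentially step for step: induction on the layers, preservation of the LCP under the Lipschitz map $\bu \mapsto \phi(\bW^{(\ell)}\bu)$ with constant $\|\phi'\|_\infty\|\bW^{(\ell)}\|_{\mathrm{op}} = O(1)$ on the event \eqref{eq: generic_weights}, and then applying the LCP of $\bx^{(\ell)}$ to the $O(1)$-Lipschitz linear functional $\bu \mapsto \bw_i^T\bu$. The only (harmless) difference is that you invoke the LCP assumption on $\bx^{(0)}$ directly from the theorem's hypotheses, whereas the paper's Step 1 rederives it for the Gaussian special case; your handling of the centering term and of the independence of $\bw_i$ from $\bx^{(\ell)}$ is, if anything, slightly more careful than the paper's.
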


\begin{proof}

Recall that the data $\bx^{(0)} = \bx$ is Gaussian with a well-conditioned $\bSigma$ satisfying $\Tr(\bSigma) = O(1)$ by assumption. Also recall that $\bx^{(\ell)} = \phi(\bW^{(\ell)}\bx^{(\ell - 1)})$ by definition.

\textbf{Step 1: Initial data satisfies LCP.}
Since $\bx^{(0)} \sim \mathcal{N}(\mathbf{0}, \bSigma)$ with $\Tr(\bSigma) = O(1)$ and $\bSigma$ is well-conditioned, we have:
\begin{itemize}
    \item $\Tr(\bSigma) = \sum_{i=1}^d \lambda_i = O(1)$
    \item Well-conditioned means $\lambda_{\max}/\lambda_{\min} = O(1)$, so all eigenvalues are of the same order
    \item This implies $d \cdot \lambda_{\max} = O(1)$, hence $\lambda_{\max} = O(1/d)$
\end{itemize}
Since Gaussian random vectors satisfy LCP with parameter proportional to $\sqrt{\lambda_{\max}}$, we have that $\bx^{(0)}$ satisfies LCP with parameter $\sigma_0 = O(1/\sqrt{d})$.

\textbf{Step 2: LCP is preserved through layers.}
We proceed by induction on $\ell$. Assume $\bx^{(\ell-1)}$ satisfies LCP with parameter $\sigma_{\ell-1}$.

Consider the mapping $\bx^{(\ell-1)} \mapsto \bx^{(\ell)} = \phi(\bW^{(\ell)}\bx^{(\ell-1)})$. Since:
\begin{itemize}
    \item The linear map $\bx^{(\ell-1)} \mapsto \bW^{(\ell)}\bx^{(\ell-1)}$ is Lipschitz with constant $\|\bW^{(\ell)}\|_{\text{op}}$
    \item The activation function $\phi$ is assumed to be Lipschitz (typically with constant 1 for ReLU, sigmoid, tanh, etc.)
\end{itemize}

The composition $\bx^{(\ell-1)} \mapsto \phi(\bW^{(\ell)}\bx^{(\ell-1)})$ is Lipschitz with constant $L_\phi \cdot \|\bW^{(\ell)}\|_{\text{op}}$.

By the preservation of LCP under Lipschitz mappings, $\bx^{(\ell)}$ satisfies LCP with parameter $\sigma_\ell = L_\phi \cdot \|\bW^{(\ell)}\|_{\text{op}} \cdot \sigma_{\ell-1}$.

\textbf{Step 3: Bounding the LCP parameter.}
Assuming \eqref{eq: generic_weights} holds, we have $\|\bW^{(\ell)}\|_{\text{op}} = O(1)$ with high probability. Thus:
\[
\sigma_\ell = O(1) \cdot \sigma_{\ell-1} = O(1) \cdot O(1/\sqrt{d}) = O(1/\sqrt{d})
\]
for all $\ell \leq L$, maintaining the $O(1/\sqrt{d})$ LCP parameter across layers.

\textbf{Step 4: Linear functionals of LCP vectors.}
For any fixed vector $\bw_i$ with $\|\bw_i\|_2 = O(1)$, the linear functional $f(\bx^{(\ell)}) = \bw_i^T\bx^{(\ell)}$ is $O(1)$-Lipschitz.

Since $\bx^{(\ell)}$ satisfies LCP with parameter $\sigma_\ell = O(1/\sqrt{d})$, we have that $\bw_i^T\bx^{(\ell)}$ is subgaussian with parameter $O(1) \cdot O(1/\sqrt{d}) = O(1/\sqrt{d})$.

We will need another technical lemma as well:

\begin{lemma}
The following decomposition holds for any bounded odd $\phi$ with bounded first, third and fifth derivatives:
    \begin{align}
        \phi(\bw_i^T\bx^{(\ell)}) = \phi'(0)\bw_i^T\bx^{(\ell)}  + \frac{\phi'''(0)}{6}(\bw_i^T\bx^{(\ell)})^3 + O\left(\frac{1}{d^{5/2}}\right) 
    \end{align}
\end{lemma}

\begin{proof}
Since $\phi$ is odd, we have $\phi(0) = 0$ and all even derivatives vanish at 0. By Taylor's theorem with remainder:

\begin{align}
\phi(z) &= \phi'(0)z + \frac{\phi'''(0)}{6}z^3 + \frac{\phi^{(5)}(\xi)}{120}z^5
\end{align}

for some $\xi$ between 0 and $z$.

Setting $z = \bw_i^T\bx^{(\ell)}$:
\begin{align}
\phi(\bw_i^T\bx^{(\ell)}) &= \phi'(0)\bw_i^T\bx^{(\ell)} + \frac{\phi'''(0)}{6}(\bw_i^T\bx^{(\ell)})^3 + R_i
\end{align}

where the remainder term is:
$$R_i = \frac{\phi^{(5)}(\xi)}{120}(\bw_i^T\bx^{(\ell)})^5$$

From our earlier result, $\bw_i^T\bx^{(\ell)}$ is subgaussian with parameter $O(1/\sqrt{d})$. Therefore, with high probability:
$$|\bw_i^T\bx^{(\ell)}| = O(1/\sqrt{d})$$

Thus:
$$|R_i| = \left|\frac{\phi^{(5)}(\xi)}{120}(\bw_i^T\bx^{(\ell)})^5\right| = O\left(\frac{1}{(\sqrt{d})^5}\right) = O\left(\frac{1}{d^{5/2}}\right)$$

\end{proof}

\begin{lemma}\label{lemma: gauss_interm}
Let $\mathbf{g} \sim \mathcal{N}(0,\bSigma^{(\ell)})$. Then the following holds for all $i,j$:
\begin{align}
\left|\mathbb{E}_{\bx^{(\ell)}}[\phi(\bw_i^T\bx^{(\ell)})\phi(\bw_j^T\bx^{(\ell)})] - \mathbb{E}_{\mathbf{g}}[\phi(\bw_i^T\mathbf{g})\phi(\bw_j^T\mathbf{g})]\right| = O(1/d^2)
\end{align}
Moreover, if $i \ne j$, then one has:
\begin{align}
\left|\mathbb{E}_{\bx^{(\ell)}}[\phi(\bw_i^T\bx^{(\ell)})\phi(\bw_j^T\bx^{(\ell)})] - \mathbb{E}_{\mathbf{g}}[\phi(\bw_i^T\mathbf{g})\phi(\bw_j^T\mathbf{g})]\right| = O(1/d^3)
\end{align}

\end{lemma}

\begin{proof}

We will apply Taylor expansion to both terms.

For the subgaussian $\bx^{(\ell)}$:
\begin{align}
\phi(\bw_i^T\bx^{(\ell)})\phi(\bw_j^T\bx^{(\ell)}) &= \left[\phi'(0)\bw_i^T\bx^{(\ell)} + \frac{\phi'''(0)}{6}(\bw_i^T\bx^{(\ell)})^3 + O(1/d^{5/2})\right]\\
&\quad \times \left[\phi'(0)\bw_j^T\bx^{(\ell)} + \frac{\phi'''(0)}{6}(\bw_j^T\bx^{(\ell)})^3 + O(1/d^{5/2})\right]
\end{align}

Expanding:
\begin{align}
= [\phi'(0)]^2(\bw_i^T\bx^{(\ell)})(\bw_j^T\bx^{(\ell)}) &+ \frac{\phi'(0)\phi'''(0)}{6}\left[(\bw_i^T\bx^{(\ell)})(\bw_j^T\bx^{(\ell)})^3 + (\bw_i^T\bx^{(\ell)})^3(\bw_j^T\bx^{(\ell)})\right]\\
&+ \frac{[\phi'''(0)]^2}{36}(\bw_i^T\bx^{(\ell)})^3(\bw_j^T\bx^{(\ell)})^3 + O(1/d^{5/2})
\end{align}

Similarly for the Gaussian $\mathbf{g}$:
\begin{align}
\phi(\bw_i^T\mathbf{g})\phi(\bw_j^T\mathbf{g}) = [\phi'(0)]^2(\bw_i^T\mathbf{g})(\bw_j^T\mathbf{g}) &+ \frac{\phi'(0)\phi'''(0)}{6}\left[(\bw_i^T\mathbf{g})(\bw_j^T\mathbf{g})^3 + (\bw_i^T\mathbf{g})^3(\bw_j^T\mathbf{g})\right]\\
&+ \frac{[\phi'''(0)]^2}{36}(\bw_i^T\mathbf{g})^3(\bw_j^T\mathbf{g})^3 + O(1/d^{5/2})
\end{align}

Now, we compare expectations of each term one ny one.

For the second order term:
\begin{align}
\mathbb{E}[(\bw_i^T\bx^{(\ell)})(\bw_j^T\bx^{(\ell)})] = \bw_i^T\bSigma^{(\ell)}\bw_j = \mathbb{E}[(\bw_i^T\mathbf{g})(\bw_j^T\mathbf{g})]
\end{align}
These match exactly by assumption.

For the forth-order cross-terms like $\mathbb{E}[(\bw_i^T\bx^{(\ell)})(\bw_j^T\bx^{(\ell)})^3]$: both are $O(1/d^2)$ since $\bw_i^T\bx^{(\ell)} = O(1/\sqrt{d})$ and $\bw_j^T\bx^{(\ell)} = O(1/\sqrt{d})$, therefore the difference is at most $O(1/d^2)$.

All remaining terms are $O(1/d^3)$ for both distributions, so we are done with the first part.

For the sharper bound for $i \ne j$, note that the fourth moment expands as:
\[
\mathbb{E}[(\bw_i^T\bx^{(\ell)})(\bw_j^T\bx^{(\ell)})^3] = \sum_{k,\ell,m,n} w_{ik}w_{j\ell}w_{jm}w_{jn}\mathbb{E}[x_k^{(\ell)}x_\ell^{(\ell)}x_m^{(\ell)}x_n^{(\ell)}]
\]

Assuming that $\bSigma_{\ell}$ is diagonal WLOG (rotate it to become diagonal and apply the same rotation to $\bw_i, \bw_j$ if not), we can assume that only terms with paired indices survive. Since each corresponding expectation is $O(\frac{1}{d^2})$, each coefficient $w_{ik}w_{j\ell}w_{jm}w_{jn}$ is of order $O(\frac{1}{d^2})$, there are $O(d^2)$ of these coefficients and $\bw_i$ is independent from $\bw_j$, we conclude that $\mathbb{E}[(\bw_i^T\bx^{(\ell)})(\bw_j^T\bx^{(\ell)})^3]$ is indeed of order $O(\frac{1}{d^3})$ as desired.

\end{proof}

\begin{lemma}
Given the covariance recursions:
\begin{align}
\bSigma_{\ell} &= \mathbb{E}_{\bx^{(\ell)}} [\bx^{(\ell)} (\bx^{(\ell)})^T], \quad \ell = 0,\dots, L \\
\tilde{\bSigma}_{\ell} &= \rho_1^2\bW^{(\ell)}\bSigma_{\ell - 1}(\bW^{(\ell)})^T + \rho_2^2\bI
\end{align}

Suppose that $\|\bSigma_{\ell-1} - \tilde{\bSigma}_{\ell-1}\| = O(\delta_{\ell-1})$ for some $\delta_{\ell-1}$. Then:
\[
\|\bSigma_{\ell} - \tilde{\bSigma}_{\ell}\| = O\left(\frac{m}{d^2} + \rho_1^2 \|\bW^{(\ell)}\|^2 \delta_{\ell-1}\right)
\]
where $m$ is the dimension of $\bx^{(\ell)}$.
\end{lemma}

\begin{proof}
\textbf{Step 1: Express $\bSigma_{\ell}$ in terms of the activation function.}

Since $\bx^{(\ell)} = \phi(\bW^{(\ell)}\bx^{(\ell-1)})$ applied element-wise:
\[
[\bSigma_{\ell}]_{ij} = \mathbb{E}[\phi(\bw_i^T\bx^{(\ell-1)})\phi(\bw_j^T\bx^{(\ell-1)})]
\]
where $\bw_i^T$ is the $i$-th row of $\bW^{(\ell)}$.

\textbf{Step 2: Define intermediate Gaussian covariance.}

Let $\mathbf{g}^{(\ell-1)} \sim \mathcal{N}(0, \bSigma_{\ell-1})$ and define:
\[
\hat{\bSigma}_{\ell} = \mathbb{E}[\hat{\bx}^{(\ell)}(\hat{\bx}^{(\ell)})^T]
\]
where $\hat{x}_i^{(\ell)} = \phi(\bw_i^T\mathbf{g}^{(\ell-1)})$.

By the previous lemma, each entry satisfies:
\[
|[\bSigma_{\ell}]_{ij} - [\hat{\bSigma}_{\ell}]_{ij}| = O(1/d^2)
\]

\textbf{Step 3: Relate $\hat{\bSigma}_{\ell}$ to the Gaussian model.}

For Gaussian inputs $\mathbf{g}^{(\ell-1)}$, using Taylor expansion and Gaussian moment formulas:
\[
[\hat{\bSigma}_{\ell}]_{ij} = \rho_1^2 \bw_i^T\bSigma_{\ell-1}\bw_j + \rho_2^2\delta_{ij} + O(1/d^{3/2})
\]

This can be written as:
\[
\hat{\bSigma}_{\ell} = \rho_1^2\bW^{(\ell)}\bSigma_{\ell-1}(\bW^{(\ell)})^T + \rho_2^2\bI + O(1/d^{3/2})
\]

\textbf{Step 4: Account for the approximation error from the previous layer.}

Since $\|\bSigma_{\ell-1} - \tilde{\bSigma}_{\ell-1}\| = O(\delta_{\ell-1})$:
\begin{align}
\|\hat{\bSigma}_{\ell} - \tilde{\bSigma}_{\ell}\| &= \|\rho_1^2\bW^{(\ell)}\bSigma_{\ell-1}(\bW^{(\ell)})^T - \rho_1^2\bW^{(\ell)}\tilde{\bSigma}_{\ell-1}(\bW^{(\ell)})^T\| + O(1/d^{3/2})\\
&= \rho_1^2 \|\bW^{(\ell)}(\bSigma_{\ell-1} - \tilde{\bSigma}_{\ell-1})(\bW^{(\ell)})^T\| + O(1/d^{3/2})\\
&\leq \rho_1^2 \|\bW^{(\ell)}\|^2 \|\bSigma_{\ell-1} - \tilde{\bSigma}_{\ell-1}\| + O(1/d^{3/2})\\
&= O(\rho_1^2 \|\bW^{(\ell)}\|^2 \delta_{\ell-1})
\end{align}

\textbf{Step 5: Combine bounds using triangle inequality.}

Using the Frobenius norm argument from Step 2:
\[
\|\bSigma_{\ell} - \hat{\bSigma}_{\ell}\|_F^2 = \sum_{i,j=1}^m O(1/d^4) = O(m^2/d^4)
\]

Therefore $\|\bSigma_{\ell} - \hat{\bSigma}_{\ell}\| \leq \|\bSigma_{\ell} - \hat{\bSigma}_{\ell}\|_F = O(m/d^2)$.

Combining with Step 4:
\begin{align}
\|\bSigma_{\ell} - \tilde{\bSigma}_{\ell}\| &\leq \|\bSigma_{\ell} - \hat{\bSigma}_{\ell}\| + \|\hat{\bSigma}_{\ell} - \tilde{\bSigma}_{\ell}\|\\
&= O(m/d^2) + O(\rho_1^2 \|\bW^{(\ell)}\|^2 \delta_{\ell-1})
\end{align}
\end{proof}

\end{proof}

\subsection{Gaussian Universality}\label{sec: app_gaus_univ}

Below we verify that we can apply Theorem \ref{thm: univ} to the outputs of the penultimate layer $\bx^{L}$ of the Random Features under the assumptions made in Subsection \ref{subs: notation}. 

Note that Assumptions $2$,$3$ and $4$ from the list of Assumptions \ref{ass: univ} and explicitly assumed to hold in Subsection \ref{subs: notation} and Theorem \ref{thm: main} and are inevitable if we want to apply Theorem \ref{thm: univ}. 

For Assumption 1 from the list, note that

\begin{enumerate}
    \item The mean of each row is $\bmu = 0$ because $\sigma$ is assumed to be odd and the moments of $\bx^{L}$ are bounded due to the subgaussianity of $\|\bx^{L}\|$, which follows from the LCP property \ref{def: LCP} of $\bx^{L}$ and is proven in the Step 1 of Lemma \ref{lm: subg}.
    \item In particular, $\|\bmu\| = 0 = O(1)$. 
    \item For any fixed vector of bounded norm, $\bv^T\bx^{(L)}$ is subgaussian as $\bx^{(L)}$ with $\sigma = O(\frac{1}{\sqrt{d}})$, as it satisfies the LCP property \ref{def: LCP}, which is proven in the Step 1 of Lemma \ref{lm: subg}. This implies $Var(\bv^T\bx^{(L)}) = O(\frac{1}{d})$. 
    \item Denoting the outputs of the $L$-th hidden layer applied to the training samples $\bx_1,\dots,\bx^{L}$ by $\bX^{L}$, it remais to verify that $\sigma_{\min}(\bX^{(L)}{\bX^{(L)}}^T) = \Omega(1)$. The latter follows from the universality of the Marchenko-Pastur law for data satisfying LCP proven in \cite{pmlr-v119-seddik20a}. 
    
\end{enumerate}

\section{CGMT Derivations} \label{sec: CGMT}

After applying Gaussian Universality and Gaussian Equivalence, we use a framework called Convex Gaussian Min-Max Theorem \cite{thrampoulidis2014tight, akhtiamov2024novel} to derive asymptotically tight expressions for the generalization error of the Random Features trained via SMD with different mirrors. For SGD, we have provided the resulting nonlinear system of scalar equations required to find the generalization error in \eqref{eq: sgd_final}. The case of  general mirrors can be found in \eqref{eq: mirr_final}. Thus, \eqref{eq: mirr_final} is the optimization referred to in  Theorem \ref{thm: main} and \eqref{eq: sgd_final} is its particular case corresponding to the SGD. 

\subsection{SGD}

We consider the training datapoints $\{(\bx_i, y_i)\}_{i=1}^n$ being generated according to the model $y_i = \ba_\ast \phi (\bW \bx_i)$ where $\bx_i \in \bbR^d$, $\ba_\ast \in \bbR^D$  We denote $\bbE \bx_i = 0$ and $\bbE \bx_i \bx_i^T = \bSigma_{L- 1}$. We train $\ba$ using SGD initialized from $\bzero$ by minimizing the squared loss $\sum_{i=1}^n (\ba^T \phi(\bW_{L}\bx_i) - y_i)^2 $. Letting the input matrix $\bX \in \bbR^{n \times d}$ with each row corresponding to $\bx_i$, We know from the implicit bias of SGD:
\begin{align*}
    &\min_{\ba} \|\ba-\bs_0\|_2^2 \\
    s.t \quad \ba^T \phi(\bW_L \bX^T) &= \bY^T = \ba_\ast^T  \phi(\bW_L \bX^T)
\end{align*}
Where $\bY \in \bbR^{n \times 1}$. We define $\ba \leftarrow \ba - \ba_0$. Thus we may write:
\begin{align*}
    \min_{\ba} &\|\ba\|_2^2 \\
    s.t \quad &\bG (\ba - \ba_\ast) = \bzero
\end{align*}
Where for each row of $\bG \in \bbR^{n \times D}$, $\bg_i \in \bbR^D$, we have from the Gaussian Equivalence Principal:
\begin{align*}
    \bbE \bG = \bzero, \quad \bSigma_{L} \approx \rho_{L,1}^2 \bW_L \bSigma_{L-1}\bW_L^T + \rho_{L,2}^2 \bI
\end{align*}
The generalization error is:
\begin{align*}
    g.e := \bbE_\bx \Bigl(\hat{\ba} ^T \phi (\bW_L \bx) - \ba_\ast^T \phi(\bW_L \bx)\Bigr)^2 = \rho_{L,1}^2  (\hat{\ba} - \ba_\ast)^T \bW_L \bSigma_{L-1} \bW_L^T (\hat{\ba} - \ba_\ast) + \rho_{L,2}^2 \|\hat{\ba} - \ba_\ast\|_2^2 
\end{align*}
Now using a Lagrange multiplier, we formulate the optimization as a min-max:
\begin{align*}
    \min_{\ba} \max_{\bv_L} \|\ba\|_2^2 + \bv_L^T \tlbG \bSigma_L^{1/2} (\ba - \ba_\ast)
\end{align*}
Now using CGMT, we obtain:
\begin{align*}
     \min_{\ba} \max_{\bv_L}  \|\bv_L\|_2 \bg^T \bSigma_L^{1/2} (\ba - \ba_\ast) + \|\bSigma_L^{1/2} (\ba - \ba_\ast)\|_2 \bh_{o}^T \bv_L +  \|\ba\|_2^2
\end{align*}
Doing the optimization over the direction of $\bv_L$ yields:
\begin{align*}
     \min_{\ba} \max_{\beta > 0} \beta \bg^T \bSigma_L^{1/2} (\ba - \ba_\ast) + \beta \|\bSigma_L^{1/2} (\ba - \ba_\ast)\|_2 \cdot \|\bh_{o}\|_2  +  \|\ba\|_2^2
\end{align*}
Using the square-root trick $\sqrt{t} = \frac{\tau}{2} + \frac{t}{2\tau}$, we observe:
\begin{align*}
    \min_{\ba} \max_{\beta > 0} \min_{\tau > 0} \beta \bg_{o}^T \bSigma_L^{1/2} (\ba - \ba_\ast) + \frac{\beta \tau}{2} + \frac{\beta}{2 \tau} \|\bSigma_L^{1/2} (\ba - \ba_\ast)\|^2_2 \cdot \|\bh_{o}\|^2_2  +  \|\ba\|_2^2
\end{align*}
Furthermore, we have that $g.e = \tau^2$. We note the convexity and concavity of the objective, hence we may exchange the order of min and max:
\begin{align*}
    \max_{\beta > 0} \min_{\tau > 0}  \frac{\beta \tau}{2}  +  \min_{\ba}  \beta \bg_{o}^T \bSigma_L^{1/2} (\ba - \ba_\ast) + \frac{\beta}{2 \tau} \|\bSigma_L^{1/2} (\ba - \ba_\ast)\|^2_2 \cdot \|\bh_{o}\|^2_2  +  \|\ba\|_2^2
\end{align*}
Now note that
\begin{align*}
    \bSigma_L^{1/2} \bg_{o} \sim \calN\Bigl(\bzero, \rho_{L,1}^2 \bW_L \bSigma_{L-1}\bW_L^T + \rho_{L,2}^2 \bI\Bigr)
\end{align*}
Thus we may write $ \bSigma_L^{1/2} \bg_{o}  = \rho_{L,1} \bW_L \bSigma_{L-1}^{1/2} \tlbg_{L-1,1} +\rho_{L,2} \tlbg_{L-1,2}$ with $\tlbg_{L-1,1} $ and $\tlbg_{L-1,2}$ being independent of each other. Therefore the optimization turns into
\begin{align*}
     \max_{\beta > 0} \min_{\tau > 0}  \frac{\beta \tau}{2}  +  \min_{\ba} & \rho_{L,2}  \beta \tlbg_{L-1,2}^T (\ba - \ba_\ast) + \rho_{L,1} \beta  \tlbg_{L-1,1}^T \bSigma_{L-1}^{1/2} \bW_L^T (\ba - \ba_\ast)  \\ 
     &+ \frac{\rho_{L,1}^2 \beta \|\bh_{o}\|^2_2 }{2 \tau} \Bigl\|\bSigma_{L-1}^{1/2} \bW_L^T (\ba - \ba_\ast)\Bigr\|_2^2  + \frac{\rho_{L,2}^2 \beta \|\bh_{o}\|^2_2 }{2   \tau} \|\ba - \ba_\ast\|^2_2  +  \|\ba\|_2^2
\end{align*}
Now we complete the squares over $\bW_L^T(\ba - \ba_\ast)$.
\begin{align*}
    \max_{\beta > 0} \min_{\tau > 0} & \frac{\beta \tau}{2} \Bigl(1 - \frac{\|\tlbg_{L-1,1}\|_2^2}{\|\bh_{o}\|_2^2} \Bigr)  +  \min_{\ba}  \rho_{L,2} \beta \tlbg_{L-1,2}^T (\ba - \ba_\ast) \\ 
    &+ \frac{\beta}{2 \tau} \Bigl\| \rho_{L,1} \|\bh_{o}\|_2 \cdot \bSigma_{L-1}^{1/2} \bW_L^T (\ba - \ba_\ast) + \frac{\tau}{\|\bh_{o}\|_2} \tlbg_{L-1,1}\Bigr\|_2^2  +  \frac{\rho_{L,2}^2 \beta \|\bh_{o}\|^2_2 }{2   \tau} \|\ba - \ba_\ast\|^2_2  +  \|\ba\|_2^2
\end{align*}
Now we focus on the inner optimization and we use a Fenchel dual to rewrite the quadratic term as
\begin{align*}
    \min_{\ba} \max_{\bu}  & \frac{\beta \rho_{L,1} \|\bh_{o}\|_2  }{2 \tau} \bu^T \bSigma_{L-1}^{1/2} \bW_L^T (\ba - \ba_\ast) +  \frac{\beta}{2\|\bh_{o}\|_2} \bu^T\tlbg_{L-1,1} - \frac{\beta \|\bu\|^2_2}{8\tau}\\ 
    &+ \rho_{L,2} \beta \tlbg_{L-1,2}^T (\ba - \ba_\ast) +\frac{\rho_{L,2}^2 \beta \|\bh_{o}\|^2_2 }{2   \tau} \|\ba - \ba_\ast\|^2_2  +  \|\ba\|_2^2
\end{align*}
Swapping the min and max:
\begin{align*}
     \max_{\bu}  \min_{\ba} & \frac{\beta \rho_{L,1} \|\bh_{o}\|_2  }{2 \tau} \bu^T \bSigma_{L-1}^{1/2} \bW_L^T (\ba - \ba_\ast) +  \frac{\beta}{2\|\bh_{o}\|_2} \bu^T\tlbg_{L-1,1} - \frac{\beta \|\bu\|^2_2}{8\tau}\\ 
     &+\rho_{L,2}\beta \tlbg_{L-1,2}^T (\ba - \ba_\ast) + \frac{\rho_{L,2}^2 \beta \|\bh_{o}\|^2_2 }{2   \tau} \|\ba - \ba_\ast\|^2_2  +  \|\ba\|_2^2
\end{align*}
Employing CGMT again:
\begin{align*}
     \max_{\bu}  \min_{\ba} &\frac{\beta \rho_{L,1} \|\bh_{o}\|_2  }{2 \tau} \|\bSigma_{L-1}^{1/2}\bu\|_2 \bg_{L-1}^T (\ba - \ba_\ast) + \frac{\beta \rho_{L,1} \|\bh_{o}\|_2  }{2 \tau} \|\ba - \ba_\ast\|_2 \bh_{L-1} ^T \bSigma_{L-1}^{1/2} \bu \\ 
     &+  \frac{\beta}{2\|\bh_{o}\|_2} \bu^T\tlbg_{L-1,1} - \frac{\beta \|\bu\|^2_2}{8\tau} +\rho_{L,2} \beta \tlbg_{L-1,2}^T (\ba - \ba_\ast) + \frac{\rho_{L,2}^2 \beta \|\bh_{o}\|^2_2 }{2   \tau} \|\ba - \ba_\ast\|^2_2  +  \|\ba\|_2^2
\end{align*}
Now we perform the optimization over the direction of $\ba - \ba_\ast$. First we observe that 
\begin{align*}
    \|\ba\|_2^2 = \|\ba- \ba_\ast\|_2^2 + 2 \ba_\ast^T (\ba - \ba_\ast) - \|\ba_\ast\|_2^2
\end{align*}
Dropping the constant term $\|\ba_\ast\|_2^2$, we have
\begin{align*}
    \max_{\bu}  \min_{\ba} &\frac{\beta \rho_{L,1} \|\bh_{o}\|_2  }{2 \tau} \|\bSigma_{L-1}^{1/2}\bu\|_2 \bg_{L-1}^T (\ba - \ba_\ast) + \frac{\beta \rho_{L,1} \|\bh_{o}\|_2  }{2 \tau} \|\ba - \ba_\ast\|_2 \bh_{L-1}  ^T \bSigma_{L-1}^{1/2} \bu \\ 
    &+  \frac{\beta}{2\|\bh_{o}\|_2} \bu^T\tlbg_{L-1,1} - \frac{\beta \|\bu\|^2_2}{8\tau} +   \rho_{L,2} \beta \tlbg_{L-1,2}^T (\ba - \ba_\ast) + \Bigl(\frac{\rho_{L,2}^2 \beta \|\bh_{o}\|^2_2 }{2  \tau} + 1 \Bigr)\|\ba - \ba_\ast\|^2_2  +  2 \ba_\ast^T (\ba - \ba_\ast)
\end{align*}
We observe that $\ba - \ba_\ast$ aligns with
\begin{align*}
     \frac{\beta \rho_{L,1} \|\bh_{o}\|_2  }{2 \tau} \|\bSigma_{L-1}^{1/2}\bu\|_2 \bg_{L-1} + \rho_{L,2} \beta \tlbg_{L-1,2} + 2 \ba_\ast
\end{align*}
Thus the optimization turns into
\begin{align*}
    \max_{\bu}  \min_{\eta_{L-1} > 0}  &\frac{\beta \rho_{L,1} \eta_{L-1} \|\bh_{o}\|_2  }{2 \tau} \bh_{L-1}  ^T \bSigma_{L-1}^{1/2} \bu - \eta_{L-1} \Bigl\| \frac{\beta \rho_{L,1} \|\bh_{o}\|_2  }{2 \tau} \|\bSigma_{L-1}^{1/2}\bu\|_2 \bg_{L-1} +  \rho_{L,2} \beta \tlbg_{L-1,2} + 2 \ba_\ast \Bigr\|_2 \\ 
    &+ \frac{\beta}{2\|\bh_{o}\|_2} \bu^T\tlbg_{L-1,1} - \frac{\beta \|\bu\|^2_2}{8\tau}  + \Bigl(\frac{\rho_{L,2}^2\beta \|\bh_{o}\|^2_2 }{2  \tau} + 1 \Bigr) \eta_{L-1}^2  
\end{align*}
Applying the square-root trick again
\begin{align*}
    \max_{\bu}  \min_{\eta_{L-1} > 0} \max_{\alpha_{L-1} > 0} &\frac{\beta \rho_{L,1} \eta_{L-1} \|\bh_{o}\|_2  }{2 \tau} \bh_{L-1}  ^T \bSigma_{L-1}^{1/2} \bu - \frac{\alpha_{L-1} \eta_{L-1}}{2} +  \frac{\beta}{2\|\bh_{o}\|_2} \bu^T\tlbg_{L-1,1} - \frac{\beta \|\bu\|^2_2}{8\tau}  \\
    &+ \Bigl(\frac{\rho_{L,2}^2 \beta \|\bh_{o}\|^2_2 }{2  \tau} + 1 \Bigr) \eta_{L-1}^2 - \frac{\eta_{L-1}}{2 \alpha_{L-1}} \Bigl( \frac{\beta^2 \rho_{L,1}^2 \|\bh_{o}\|^2_2  }{4 \tau^2} \|\bSigma_{L-1}^{1/2}\bu\|^2_2 \|\bg_{L-1}\|_2^2 
    \\&+ \rho^2_1 \beta^2 \|\tlbg_{L-1,2}\|_2^2 + 4 \|\ba_\ast\|_2^2 \Bigr)   
\end{align*}
Using convexity-concavity, we exchange the order of optimizations:
\begin{align*}
     &\min_{\eta_{L-1} > 0} \max_{\alpha_{L-1} > 0}  \Bigl(\frac{\rho_{L,2}^2 \beta \|\bh_{o}\|^2_2 }{2 \tau} + 1 \Bigr) \eta_{L-1}^2     - \frac{\alpha_{L-1} \eta_{L-1}}{2} - \frac{\eta_{L-1}}{2 \alpha_{L-1}} \Bigl(\rho_{L,2}^2 \beta^2 \|\tlbg_{L-1,2}\|_2^2 + 4 \|\ba_\ast\|_2^2 \Bigr) \\ 
     &+\max_{\bu} \frac{\beta}{2\|\bh_{o}\|_2} \bu^T\tlbg_{L-1,1} -  \frac{\eta_{L-1} \beta^2 \rho_{L,1}^2 \|\bh_{o}\|^2_2  }{8 \alpha_{L-1} \tau^2} \|\bSigma_{L-1}^{1/2}\bu\|^2_2 \|\bg_{L-1}\|_2^2  - \frac{\beta \|\bu\|^2_2}{8\tau}   + \frac{\beta \rho_{L,1} \eta_{L-1} \|\bh_{o}\|_2  }{2 \tau} \bh_{L-1}  ^T \bSigma_{L-1}^{1/2} \bu
\end{align*}

We know from the recursion $\bSigma_{L-1} =\rho_{L-1,1}^2 \bW_{L-1} \bSigma_{L-2}\bW_{L-1}^T + \rho_{L-1,2}^2  \bI$. Applying the same technique as before, we take $ \bSigma_{L-1}^{1/2} \bh_{L-1}  = \rho_{L-1,1} \bW \bSigma_{L-2}^{1/2} \tlbh_{L-2, 1} +\rho_{L-1,2}  \tlbh_{L-2, 2}$ and consider the inner optimization
\begin{align*}
    \max_{\bu} &\frac{\beta}{2\|\bh_{o}\|_2} \bu^T\tlbg_{L-1,1} -  \frac{\eta_{L-1} \beta^2 \rho_{L,1}^2 \rho_{L-1,1}^2 \|\bh_{o}\|^2_2  }{8 \alpha_{L-1} \tau^2} \|\bSigma_{L-2}^{1/2}\bW_{L-1}^T \bu\|^2_2 \|\bg_{L-1}\|_2^2 
    -  \frac{\eta_{L-1} \beta^2 \rho_{L,1}^2 \rho_{L-1,2}^2 \|\bh_{o}\|^2_2  }{8 \alpha_{L-1} \tau^2} \|\bu\|^2_2 \|\bg_{L-1}\|_2^2 \\
    & - \frac{\beta \|\bu\|^2_2}{8\tau} 
    + \frac{\beta \rho_{L,1} \rho_{L-1,1} \eta_{L-1} \|\bh_{o}\|_2}{2 \tau} \tlbh_{L-2,1}^T \bSigma_{L-2}^{1/2} \bW_{L-1}^T \bu 
    + \frac{\beta \rho_{L,1} \rho_{L-1,2} \eta_{L-1} \|\bh_{o}\|_2}{2 \tau} \tlbh_{L-2, 2} ^T\bu 
\end{align*}
Completing the squares:
\begin{align*}
    \max_{\bu} & \frac{\beta}{2\|\bh_{o}\|_2} \bu^T\tlbg_{L-1,1} -  \frac{\eta_{L-1}  }{2 \alpha_{L-1}}  \Bigl\| \frac{\beta \rho_{L,1} \rho_{L-1,1} \|\bh_{o}\|_2 \|\bg_{L-1}\|_2}{2 \tau} \bSigma_{L-2}^{1/2}\bW_{L-1}^T \bu - \frac{\alpha_{L-1}}{\|\bg_{L-1}\|_2} \tlbh_{L-2,1}\Bigr\|_2^2 \\ 
    &+ \frac{ \|\tlbg_{L-2,1}\|_2^2}{2\|\bg_{L-1}\|^2_2 } \alpha_{L-1} \eta_{L-1}  -  \frac{\eta_{L-1} \beta^2 \rho_{L,1}^2 \rho_{L-1,2}^2 \|\bh_{o}\|^2_2  }{8 \alpha_{L-1} \tau^2} \|\bu\|^2_2 \|\bg_{L-1}\|_2^2  - \frac{\beta \|\bu\|^2_2}{8\tau}
    + \frac{\beta \rho_{L,1} \rho_{L-1,2} \eta_{L-1} \|\bh_{o}\|_2}{2 \tau} \tlbh_{L-2, 2} ^T\bu 
\end{align*}
We drop the term $\frac{ \|\tlbg_{L-2,1}\|_2^2}{2\|\bg_{L-1}\|^2_2 } \alpha_{L-1} \eta_{L-1}$ from the optimization as it does not depend on $\bu$. Now introducing $\bv_{L-2}$ as the Fenchel dual:
\begin{align*}
    \max_{\bu} \min_{\bv_{L-2}} &\frac{\beta}{2\|\bh_{o}\|_2} \bu^T\tlbg_{L-1,1} -  \frac{\eta_{L-1} }{2 \alpha_{L-1}}   \frac{\beta \rho_{L,1} \rho_{L-1,1}  \|\bh_{o}\|_2 \|\bg_{L-1}\|_2}{2 \tau} \bv_{L-2}^T\bSigma_{L-2}^{1/2}\bW_{L-1}^T\bu + \frac{\eta_{L-1}}{2\|\bg_{L-1}\|_2} \bv_{L-2}^T\tlbh_{L-2,1} \\
    &+ \frac{\eta_{L-1} \|\bv_{L-2}\|_2^2 }{8 \alpha_{L-1}} -  \frac{\eta_{L-1} \beta^2 \rho_{L,1}^2 \rho_{L-1,2}^2 \|\bh_{o}\|^2_2  }{8 \alpha_{L-1} \tau^2} \|\bu\|^2_2 \|\bg_{L-1}\|_2^2  - \frac{\beta \|\bu\|^2_2}{8\tau}
    + \frac{\beta \rho_{L,1} \rho_{L-1,2} \eta_{L-1} \|\bh_{o}\|_2}{2 \tau } \tlbh_{L-2, 2} ^T\bu 
\end{align*}
Exchanging the order of min and max, we then apply CGMT w.r.t $\bW$, obtaining:
\begin{align*}
     \min_{\bv_{L-2}} \max_{\bu}  &\frac{\beta}{2\|\bh_{o}\|_2} \bu^T\tlbg_{L-1,1} 
     - \frac{\eta_{L-1}\beta \rho_{L,1} \rho_{L-1,1}  \|\bh_{o}\|_2 \|\bg_{L-1}\|_2}{4\alpha_{L-1} \tau}  \Bigl( \|\bSigma_{L-2}^{1/2}\bv_{L-2}\|_2 \bh_{L-2}^T \bu + \|\bu\|_2 \bg_{L-2}^T \bSigma_{L-2}^{1/2}\bv_{L-2}  \Bigr)\\
     &+ \frac{\eta_{L-1}}{2\|\bg_{L-1}\|_2} \bv_{L-2}^T\tlbh_{L-2,1} + \frac{\eta_{L-1} \|\bv_{L-2}\|_2^2 }{8 \alpha_{L-1}} -  \frac{\eta_{L-1} \beta^2 \rho_{L,1}^2 \rho_{L-1,2}^2 \|\bh_{o}\|^2_2  }{8 \alpha_{L-1} \tau^2} \|\bu\|^2_2 \|\bg_{L-1}\|_2^2  - \frac{\beta \|\bu\|^2_2}{8\tau}
     \\ 
     &+ \frac{\beta \rho_{L,1} \rho_{L-1,2} \eta_{L-1} \|\bh_{o}\|_2}{2 \tau} \tlbh_{L-2, 2} ^T\bu 
\end{align*}
Doing the optimization over the direction of $\bu$, yields
\begin{align*}
     \min_{\bv_{L-2}} \max_{\eta_{L-2} > 0} &- \frac{\eta_{L-1}\beta \rho_{L,1} \rho_{L-1,1} \eta_{L-1} \|\bh_{o}\|_2 \|\bg_{L-1}\|_2}{4\alpha_{L-1} \tau} \eta_{L-2} \bg_{L-2}^T \bSigma_{L-2}^{1/2}\bv_{L-2}   +\frac{\eta_{L-1}}{2\|\bg_{L-1}\|_2} \bv_{L-2}^T\tlbh_{L-2,1} \\
     &+\eta_{L-2} \Bigl\|  \frac{\beta}{2\|\bh_{o}\|_2} \tlbg_{L-1,1}  - \frac{\eta_{L-1}\beta \rho_{L,1} \rho_{L-1,1} \|\bh_{o}\|_2 \|\bg_{L-1}\|_2}{4\alpha_{L-1} \tau} \|\bSigma_{L-2}^{1/2}\bv_{L-2}\|_2 \bh_{L-2} \\
     & + \frac{\beta \rho_{L,1} \rho_{L-1,2} \eta_{L-1} \|\bh_{o}\|_2}{2 \tau} \tlbh_{L-2, 2}  \Bigr\|_2 
     + \frac{\eta_{L-1} \|\bv_{L-2}\|_2^2 }{8 \alpha_{L-1}}  -  \Bigl(\frac{\eta_{L-1} \beta^2 \rho_{L,1}^2 \rho_{L-1,2}^2 \|\bh_{o}\|^2_2 \|\bg_{L-1}\|_2^2  }{8 \alpha_{L-1} \tau^2}  + \frac{\beta}{8\tau}\Bigr) \eta_{L-2}^2
\end{align*}
Applying the square-root trick again, we obtain:
\begin{align*}
    \min_{\bv_{L-2}} \max_{\eta_{L-2} > 0} \min_{\alpha_{L-2}>0} &\frac{\eta_{L-2} \alpha_{L-2}}{2}  - \frac{\eta_{L-1}\beta \rho_{L,1} \rho_{L-1,1}  \|\bh_{o}\|_2 \|\bg_{L-1}\|_2}{4\alpha_{L-1} \tau}  \eta_{L-2} \bg_{L-2}^T \bSigma_{L-2}^{1/2}\bv_{L-2}   \\
    &+ \frac{\eta_{L-2}}{2\alpha_{L-2}} \Bigl(  \frac{\beta^2}{4\|\bh_{o}\|^2_2} \|\tlbg_{L-1,1}\|_2^2 
    +  \frac{\eta^2_{L-1}\beta^2 \rho^2_{L,1} \rho^2_{L-1,1} \|\bh_{o}\|^2_2 \|\bg_{L-1}\|^2_2}{16\alpha^2_{L-1} \tau^2} \|\bSigma_{L-2}^{1/2}\bv_{L-2}\|^2_2 \|\bh_{L-2}\|_2^2 \\
    &+  \frac{\beta^2 \rho^2_{L,1} \rho^2_{L-1,2} \eta^2_{L-1} \|\bh_{o}\|^2_2}{4 \tau^2 }  \|\tlbh_{L-2,2}\|_2^2  \Bigr) 
    + \frac{\eta_{L-1}}{2\|\bg_{L-1}\|_2} \bv_{L-2}^T\tlbh_{L-2,1}  + \frac{\eta_{L-1}  \|\bv_{L-2}\|_2^2 }{8 \alpha_{L-1}}  
    \\ 
    &-  \Bigl(\frac{\eta_{L-1} \beta^2 \rho_{L,1}^2 \rho_{L-1,2}^2 \|\bh_{o}\|^2_2 \|\bg_{L-1}\|_2^2  }{8 \alpha_{L-1} \tau^2}  + \frac{\beta}{8\tau}\Bigr) \eta_{L-2}^2
\end{align*}

We exchange the orders of min and max because of convexity and concavity
\begin{align*}
    \max_{\eta_{L-2} > 0} \min_{\alpha_{L-2}>0} & \frac{\eta_{L-2} \alpha_{L-2}}{2} 
     -  \Bigl(\frac{\eta_{L-1} \beta^2 \rho_{L,1}^2 \rho_{L-1,2}^2 \|\bh_{o}\|^2_2 \|\bg_{L-1}\|_2^2  }{8 \alpha_{L-1} \tau^2}  + \frac{\beta}{8\tau}\Bigr) \eta_{L-2}^2 \\
      &+ \frac{\eta_{L-2}}{2\alpha_{L-2}} \Bigl(\frac{\beta^2\|\tlbg_{L-1,1}\|_2^2 }{4\|\bh_{o}\|^2_2} +  \frac{\beta^2 \rho^2_{L,1} \rho^2_{L-1,2} \eta^2_{L-1} \|\bh_{o}\|^2_2\|\tlbh_{L-2,2}\|_2^2 }{4 \tau^2 }  \Bigr) \\
     + \min_{\bv_{L-2}} & \frac{\eta_{L-2}}{2\alpha_{L-2}} \frac{\eta^2_{L-1}\beta^2 \rho^2_{L,1} \rho^2_{L-1,1} \|\bh_{o}\|^2_2 \|\bg_{L-1}\|^2_2 \|\bh_{L-2}\|_2^2 }{16\alpha^2_{L-1} \tau^2} \|\bSigma_{L-2}^{1/2}\bv_{L-2}\|^2_2 +  \frac{\eta_{L-1}  \|\bv_{L-2}\|_2^2 }{8 \alpha_{L-1}}  \\
     &- \frac{\eta_{L-1}\beta \rho_{L,1} \rho_{L-1,1}  \|\bh_{o}\|_2 \|\bg_{L-1}\|_2}{4\alpha_{L-1} \tau}  \eta_{L-2} \bg_{L-2}^T \bSigma_{L-2}^{1/2}\bv_{L-2}   + \frac{\eta_{L-1}}{2\|\bg_{L-1}\|_2} \bv_{L-2}^T\tlbh_{L-2,1} 
\end{align*}
Now we use the recursion step $\bSigma_{L-2} =\rho_{L-2,1}^2 \bW_{L-2} \bSigma_{L-3}\bW_{L-2}^T + \rho_{L-2,2}^2 \bI$ and write
\begin{align*}
    \min_{\bv_{L-2}}  &\frac{\eta_{L-2}}{2\alpha_{L-2}} \frac{\eta^2_{L-1}\beta^2 \rho^2_{L,1} \rho^2_{L-1,1} \rho_{L-2,1}^2  \|\bh_{o}\|^2_2 \|\bg_{L-1}\|^2_2 \|\bh_{L-2}\|_2^2 }{16\alpha^2_{L-1} \tau^2} \|\bSigma_{L-3}^{1/2}\bW_{L-2}^T\bv_{L-2}\|^2_2 \\
    &+ \frac{\eta_{L-2}}{2\alpha_{L-2}} \frac{\eta^2_{L-1}\beta^2 \rho^2_{L,1} \rho^2_{L-1,1} \rho_{L-2,2}^2  \|\bh_{o}\|^2_2 \|\bg_{L-1}\|^2_2 \|\bh_{L-2}\|_2^2 }{16\alpha^2_{L-1} \tau^2} \|\bv_{L-2}\|_2^2 \\
     &- \frac{\eta_{L-1}\beta \rho_{L,1} \rho_{L-1,1} \rho_{L-2,1} \|\bh_{o}\|_2 \|\bg_{L-1}\|_2}{4\alpha_{L-1} \tau}  \eta_{L-2} \tlbg_{L-3,1}^T \bSigma_{L-3}^{1/2} \bW_{L-2}^T \bv_{L-2}  \\ 
     & - \frac{\eta_{L-1}\beta \rho_{L,1} \rho_{L-1,1} \rho_{L-2,2} \|\bh_{o}\|_2 \|\bg_{L-1}\|_2}{4\alpha_{L-1} \tau}  \eta_{L-2} \tlbg_{L-3,2}^T  \bv_{L-2}
     + \frac{\eta_{L-1}}{2\|\bg_{L-1}\|_2} \bv_{L-2}^T\tlbh_{L-2,1} 
    + \frac{\eta_{L-1}  \|\bv_{L-2}\|_2^2 }{8 \alpha_{L-1}} 
\end{align*}
Completing the squares yields
\begin{align*}
    \min_{\bv_{L-2}} & \frac{\eta_{L-2}}{2\alpha_{L-2}} \Bigl\|  \frac{\eta_{L-1}\beta \rho_{L,1} \rho_{L-1,1} \rho_{L-2,1}  \|\bh_{o}\|_2 \|\bg_{L-1}\|_2 \|\bh_{L-2}\|_2 }{4\alpha_{L-1} \tau} \bSigma_{L-3}^{1/2}\bW_{L-2}^T\bv_{L-2} - \frac{\alpha_{L-2}}{\|\bh_{L-2}\|_2} \tlbg_{L-3,1}\Bigr\|_2^2 \\
     &- \frac{\alpha_{L-2}\eta_{L-2}}{2} \frac{\|\tlbg_{L-3,1}\|_2^2}{\|\bh_{L-2}\|^2_2}  
     + \Bigl(\frac{\eta_{L-2}}{2\alpha_{L-2}} \frac{\eta^2_{L-1}\beta^2 \rho^2_{L,1} \rho^2_{L-1,1} \rho_{L-2,2}^2  \|\bh_{o}\|^2_2 \|\bg_{L-1}\|^2_2 \|\bh_{L-2}\|_2^2 }{16\alpha^2_{L-1} \tau^2} + \frac{\eta_{L-1}}{8\alpha_{L-1}}\Bigr) \|\bv_{L-2}\|_2^2 \\
     &-\frac{\beta \eta_{L-1}\rho_{L,1} \rho_{L-1,1} \rho_{L-2,2} \|\bh_{o}\|_2 \|\bg_{L-1}\|_2}{4\alpha_{L-1} \tau}  \eta_{L-2} \tlbg_{L-3,2}^T  \bv_{L-2}
     + \frac{\eta_{L-1}}{2\|\bg_{L-1}\|_2} \bv_{L-2}^T\tlbh_{L-2,1} 
\end{align*}
Now we consider the optimization over $\bv_{L-2}$, we observe that, the inner optimization takes a similar form to that what was obtained earlier. So far, we have:
\begin{align*}
     \max_{\beta > 0} \min_{\tau > 0}  & \frac{\beta \tau}{2} \Bigl(1 - \frac{\|\tlbg_{L-1,1}\|_2^2}{\|\bh_{o}\|_2^2} \Bigr) \\
      +\min_{\eta_{L-1} > 0} \max_{\alpha_{L-1} > 0}  &- \frac{\alpha_{L-1} \eta_{L-1}}{2} \Bigl( 1 - \frac{ \|\tlbg_{L-2,1}\|_2^2}{\|\bg_{L-1}\|^2_2 } \Bigr) +  \Bigl(\frac{\rho_{L,2}^2 \beta \|\bh_{o}\|^2_2 }{2 \tau} + 1 \Bigr) \eta_{L-1}^2 - \frac{\eta_{L-1}}{2 \alpha_{L-1}} \Bigl( \beta^2 \rho_{L,2}^2  \|\tlbg_{L-1,2}\|_2^2 + 4 \|\ba_\ast\|_2^2 \Bigr) \\
      +\max_{\eta_{L-2} > 0} \min_{\alpha_{L-2}>0} & \frac{\eta_{L-2} \alpha_{L-2}}{2} \Bigl(1 - \frac{\|\tlbg_{L-3,1}\|_2^2}{\|\bh_{L-2}\|^2_2} \Bigr)
     -  \Bigl(\frac{\eta_{L-1} \beta^2 \rho_{L,1}^2 \rho_{L-1,2}^2 \|\bh_{o}\|^2_2 \|\bg_{L-1}\|_2^2  }{8 \alpha_{L-1} \tau^2}  + \frac{\beta}{8\tau}\Bigr) \eta_{L-2}^2 \\
      &+ \frac{\eta_{L-2}}{2\alpha_{L-2}} \Bigl(\frac{\beta^2\|\tlbg_{L-1,1}\|_2^2 }{4\|\bh_{o}\|^2_2} +  \frac{\beta^2 \rho^2_{L,1} \rho^2_{L-1,2} \eta^2_{L-1} \|\bh_{o}\|^2_2\|\tlbh_{L-2,2}\|_2^2 }{4 \tau^2 }  \Bigr) \\
      +\min_{\bv_{L-2}}  &\frac{\eta_{L-2}}{2\alpha_{L-2}} \Bigl\|  \frac{\eta_{L-1}\beta \rho_{L,1} \rho_{L-1,1} \rho_{L-2,1}  \|\bh_{o}\|_2 \|\bg_{L-1}\|_2 \|\bh_{L-2}\|_2 }{4\alpha_{L-1} \tau} \bSigma_{L-3}^{1/2}\bW_{L-2}^T\bv_{L-2} - \frac{\alpha_{L-2}}{\|\bh_{L-2}\|_2} \tlbg_{L-3,1}\Bigr\|_2^2 \\
     &+   \Bigl(\frac{\eta_{L-2}}{2\alpha_{L-2}} \frac{\eta^2_{L-1}\beta^2 \rho^2_{L,1} \rho^2_{L-1,1} \rho_{L-2,2}^2  \|\bh_{o}\|^2_2 \|\bg_{L-1}\|^2_2 \|\bh_{L-2}\|_2^2 }{16\alpha^2_{L-1} \tau^2} + \frac{\eta_{L-1}}{8\alpha_{L-1}}\Bigr) \|\bv_{L-2}\|_2^2 \\
    & -\frac{\beta \eta_{L-1}\rho_{L,1} \rho_{L-1,1} \rho_{L-2,2} \|\bh_{o}\|_2 \|\bg_{L-1}\|_2}{4\alpha_{L-1} \tau}  \eta_{L-2} \tlbg_{L-3,2}^T  \bv_{L-2} + \frac{\eta_{L-1}}{2\|\bg_{L-1}\|_2} \bv_{L-2}^T\tlbh_{L-2,1} 
\end{align*}
Continuing this process, for the final optimization we have:
\begin{align*}
    \min_{\bv_1}  \frac{\eta_2}{2\alpha_2} \Bigl\|  c_{L} \bSigma_{0}^{1/2}\bW_{1}^T\bv_1 - \frac{\alpha_2}{\|\bh_{1}\|_2} \tlbg_{0,1}\Bigr\|_2^2 
     + \frac{\eta_2}{2\alpha_2} \frac{c_L^2 \rho^2_{1,2}}{ \rho^2_{1,1}} \|\bv_1\|_2^2 
     - \frac{c_L \rho_{1,2}}{ \rho_{1,1} \|\bg_1\|_2}   \eta_2 \tlbg_{0,2}^T  \bv_1
     + \frac{\eta_3}{2\|\bg_{1}\|_2} \bv_1^T\tlbh_{1,1} 
    + \frac{\eta_3  \|\bv_1\|_2^2 }{8 \alpha_3} 
\end{align*}
Using Fenchel dual and introducing $\bv_0$, yields
\begin{align*}
    \min_{\bv_1} \max_{\bv_0}  &\frac{\eta_2}{2\alpha_2}  c_{L} \bv_0^T \bSigma_{0}^{1/2}\bW_{1}^T\bv_1 -  \frac{\eta_2}{2\|\bh_{1}\|_2}    \bv_0^T \tlbg_{0,1}  - \frac{\eta_2 \|\bv_1\|_2^2}{8 \alpha_2}
     \\ 
     &+ \frac{\eta_2}{2\alpha_2} \frac{c_L^2 \rho^2_{1,2}}{ \rho^2_{1,1}} \|\bv_1\|_2^2 
     - \frac{c_L \rho_{1,2}}{ \rho_{1,1} \|\bg_1\|_2}   \eta_2 \tlbg_{0,2}^T  \bv_1
     + \frac{\eta_3}{2\|\bg_{1}\|_2} \bv_1^T\tlbh_{1,1} 
    + \frac{\eta_3  \|\bv_1\|_2^2 }{8 \alpha_3} 
\end{align*}
Applying CGMT
\begin{align*}
    \min_{\bv_1} \max_{\bv_0} & \frac{\eta_2}{2\alpha_2}  c_{L} \|\bSigma_{0}^{1/2}\bv_0\|_2 \bg_0^T\bv_1 +  \frac{\eta_2}{2\alpha_2}  c_{L} \|\bv_1\|_2 \bh_0^T \bSigma_{0}^{1/2}\bv_0 -  \frac{\eta_2}{2\|\bh_{1}\|_2}  \bv_0^T \tlbg_{0,1}  - \frac{\eta_2 \|\bv_0\|_2^2}{8 \alpha_2}
     \\ &+ \frac{\eta_2}{2\alpha_2} \frac{c_L^2 \rho^2_{1,2}}{ \rho^2_{1,1}} \|\bv_1\|_2^2 
     - \frac{c_L \rho_{1,2}}{ \rho_{1,1} \|\bg_1\|_2}   \eta_2 \tlbg_{0,2}^T  \bv_1
     + \frac{\eta_3}{2\|\bg_{1}\|_2} \bv_1^T\tlbh_{1,1} 
    + \frac{\eta_3  \|\bv_1\|_2^2 }{8 \alpha_3} 
\end{align*}
Doing the optimization over $\bv_1$:
\begin{align*}
      \max_{\bv_0} \min_{\eta_1>0} &- \eta_1 \Bigl\| \frac{ c_{L}\eta_2}{2\alpha_2}  \|\bSigma_{0}^{1/2}\bv_0\|_2 \bg_1 -\frac{c_L \rho_{1,2}}{ \rho_{1,1} \|\bg_1\|_2}   \eta_2 \tlbg_{0,2} + \frac{\eta_3}{2\|\bg_{1}\|_2} \tlbh_{1,1} \Bigr\|_2^2
      -   \frac{\eta_2}{2\|\bh_{1}\|_2}   \bv_0^T \tlbg_{0,1}  - \frac{\eta_2 \|\bv_0\|_2^2}{8 \alpha_2}
     \\ 
     &+ \frac{\eta_2}{2\alpha_2}  c_{L} \eta_1 \bh_1^T \bSigma_{0}^{1/2}\bv_0  + \frac{\eta_2}{2\alpha_2} \frac{c_L^2 \rho^2_{1,2}}{ \rho^2_{1,1}} \eta_1^2
     + \frac{\eta_3  \eta_1^2}{8 \alpha_3}
\end{align*}
Now using the square-root trick, we have
\begin{align*}
    \max_{\bv_0} \min_{\eta_1>0} \max_{\alpha_1>0} &- \frac{\alpha_1 \eta_1}{2} - \frac{\eta_1}{2\alpha_1}  \Bigl( \frac{c^2_{L}\eta^2_2}{4\alpha^2_2}  \|\bSigma_{0}^{1/2}\bv_0\|^2_2 \|\bg_1\|_2^2 + \frac{c_L^2 \rho^2_{1,2}}{ \rho^2_{1,1} \|\bg_1\|^2_2}   \eta^2_2 \|\tlbg_{0,2}\|_2^2 + \frac{\eta^2_3}{4\|\bg_{1}\|^2_2} \|\tlbh_{1,1}\|_2^2 \Bigr)
     \\ 
     &- \frac{\eta_2}{2\|\bh_{1}\|_2}   \bv_0^T \tlbg_{0,1}  - \frac{\eta_2 \|\bv_0\|_2^2}{8 \alpha_2}
      + \frac{\eta_2}{2\alpha_2}  c_{L} \eta_1 \bh_1^T \bSigma_{0}^{1/2}\bv_0  + \frac{\eta_2}{2\alpha_2} \frac{c_L^2 \rho^2_{1,2}}{ \rho^2_{1,1}} \eta_1^2
     + \frac{\eta_3  \eta_1^2}{8 \alpha_3}
\end{align*}
Swapping min, max, we complete the squares over $\bv_0$:
\begin{align*}
    \begin{pmatrix}
        \bv_0^T & 1
    \end{pmatrix}
    \begin{pmatrix}
         \frac{\eta_1}{2\alpha_1} \frac{c^2_{L}\eta^2_2 \|\bg_1\|_2^2}{4\alpha^2_2} \bSigma_0 + \frac{\eta_2}{8 \alpha_2} \bI &  - \frac{\eta_2}{4\|\bh_{1}\|_2}   \tlbg_{0,1} + \frac{\eta_2}{4\alpha_2}  c_{L} \eta_1 \bSigma_{0}^{1/2}\bh_1 \\
        - \frac{\eta_2}{4\|\bh_{1}\|_2}   \tlbg^T_{0,1} + \frac{\eta_2}{4\alpha_2}  c_{L} \eta_1 \bh_1^T\bSigma_{0}^{1/2} & 0
    \end{pmatrix}
    \begin{pmatrix}
        \bv_0 \\
        1
    \end{pmatrix}
\end{align*}
Which yields the scalar optimization
\begin{align*}
    \min_{\eta_1>0} \max_{\alpha_1>0} &- \frac{\alpha_1 \eta_1}{2} - \frac{\eta_1}{2\alpha_1}  \Bigl( \frac{c_L^2 \rho^2_{1,2}}{ \rho^2_{1,1} \|\bg_1\|^2_2}   \eta^2_2 \|\tlbg_{0,2}\|_2^2 + \frac{\eta^2_3}{4\|\bg_{1}\|^2_2} \|\tlbh_{1,1}\|_2^2 \Bigr) + \Bigl(\frac{\eta_3}{8 \alpha_3} + \frac{\eta_2}{2\alpha_2} \frac{c_L^2 \rho^2_{1,2}}{ \rho^2_{1,1}} \Bigr) \eta_1^2
     \\ 
     &+  \Bigl( - \frac{\eta_2}{4\|\bh_{1}\|_2}   \tlbg_{0,1} + \frac{\eta_2}{4\alpha_2}  c_{L} \eta_1 \bSigma_{0}^{1/2}\bh_1 \Bigr)^T \Bigl(  \frac{\eta_1}{2\alpha_1} \frac{c^2_{L}\eta^2_2 \|\bg_1\|_2^2}{4\alpha^2_2} \bSigma_0 + \frac{\eta_2}{8 \alpha_2} \bI  \Bigr)^{-1} \Bigl( - \frac{\eta_2}{4\|\bh_{1}\|_2}   \tlbg_{0,1} + \frac{\eta_2}{4\alpha_2}  c_{L} \eta_1 \bSigma_{0}^{1/2}\bh_1 \Bigr)
\end{align*}
By Hanson-Wright's inequality, we have that
\begin{align*}
     \Bigl( - \frac{\eta_2}{4\|\bh_{1}\|_2}   \tlbg_{0,1} + \frac{\eta_2}{4\alpha_2}  c_{L} \eta_1 \bSigma_{0}^{1/2}\bh_1 \Bigr)^T \Bigl( \frac{\eta_1}{2\alpha_1}  \frac{c^2_{L}\eta^2_2 \|\bg_1\|_2^2}{4\alpha^2_2} \bSigma_0 + \frac{\eta_2}{8 \alpha_2} \bI  \Bigr)^{-1} \Bigl( - \frac{\eta_2}{4\|\bh_{1}\|_2}   \tlbg_{0,1} + \frac{\eta_2}{4\alpha_2}  c_{L} \eta_1 \bSigma_{0}^{1/2}\bh_1 \Bigr) \\
     \rarrowp \frac{\eta_2^2}{16\|\bh_1\|_2^2} \tr \Bigl( \frac{c^2_{L}\eta^2_2 \|\bg_1\|_2^2}{4\alpha^2_2} \bSigma_0 + \frac{\eta_2}{8 \alpha_2} \bI  \Bigr)^{-1} + \frac{\eta^2_2 c^2_L \eta_1^2}{16 \alpha_2^2} \tr \Bigl( \frac{c^2_{L}\eta^2_2 \|\bg_1\|_2^2}{4\alpha^2_2} \bSigma_0 + \frac{\eta_2}{8 \alpha_2} \bI  \Bigr)^{-1} \bSigma_0
\end{align*}
Thus the last scalar optimization would be:
\begin{align*}
     \min_{\eta_1>0} \max_{\alpha_1>0} &- \frac{\alpha_1 \eta_1}{2} - \frac{\eta_1}{2\alpha_1}  \Bigl( \frac{c_L^2 \rho^2_{1,2}}{ \rho^2_{1,1} \|\bg_1\|^2_2}   \eta^2_2 \|\tlbg_{0,2}\|_2^2 + \frac{\eta^2_3}{4\|\bg_{1}\|^2_2} \|\tlbh_{1,1}\|_2^2 \Bigr) + \Bigl(\frac{\eta_3}{8 \alpha_3} + \frac{\eta_2}{2\alpha_2} \frac{c_L^2 \rho^2_{1,2}}{ \rho^2_{1,1}} \Bigr) \eta_1^2
     \\ 
     &+  \frac{\eta_2^2}{16\|\bh_1\|_2^2} \tr \Bigl(  \frac{\eta_1}{2\alpha_1} \frac{c^2_{L}\eta^2_2 \|\bg_1\|_2^2}{4\alpha^2_2} \bSigma_0 + \frac{\eta_2}{8 \alpha_2} \bI  \Bigr)^{-1} + \frac{\eta^2_2 c^2_L \eta_1^2}{16 \alpha_2^2} \tr \Bigl( \frac{\eta_1}{2\alpha_1}  \frac{c^2_{L}\eta^2_2 \|\bg_1\|_2^2}{4\alpha^2_2} \bSigma_0 + \frac{\eta_2}{8 \alpha_2} \bI  \Bigr)^{-1} \bSigma_0
\end{align*}
The final optimization is as follows:
\begin{align*}
     \max_{\beta > 0} \min_{\tau > 0}  & \frac{\beta \tau}{2} \Bigl(1 - \frac{\|\tlbg_{L-1,1}\|_2^2}{\|\bh_{o}\|_2^2} \Bigr) \\
      +\min_{\eta_{L-1} > 0} \max_{\alpha_{L-1} > 0}&  - \frac{\alpha_{L-1} \eta_{L-1}}{2} \Bigl( 1 - \frac{ \|\tlbg_{L-2,1}\|_2^2}{\|\bg_{L-1}\|^2_2 } \Bigr) +  \Bigl(\frac{\rho_{L,2}^2 \beta \|\bh_{o}\|^2_2 }{2 \tau} + 1 \Bigr) \eta_{L-1}^2 - \frac{\eta_{L-1}}{2 \alpha_{L-1}} \Bigl( \beta^2 \rho_{L,2}^2  \|\tlbg_{L-1,2}\|_2^2+ 4 \|\ba_\ast\|_2^2 \Bigr) \\
      + \max_{\eta_{L-2} > 0} \min_{\alpha_{L-2}>0} & \frac{\eta_{L-2} \alpha_{L-2}}{2} \Bigl(1 - \frac{\|\tlbg_{L-3,1}\|_2^2}{\|\bh_{L-2}\|^2_2} \Bigr)-  \Bigl(\frac{\eta_{L-1} \beta^2 \rho_{L,1}^2 \rho_{L-1,2}^2 \|\bh_{o}\|^2_2 \|\bg_{L-1}\|_2^2  }{8 \alpha_{L-1} \tau^2}  + \frac{\beta}{8\tau}\Bigr) \eta_{L-2}^2 \\
      &+ \frac{\eta_{L-2}}{2\alpha_{L-2}} \Bigl(\frac{\beta^2\|\tlbg_{L-1,1}\|_2^2 }{4\|\bh_{o}\|^2_2} +  \frac{\beta^2 \rho^2_{L,1} \rho^2_{L-1,2} \eta^2_{L-1} \|\bh_{o}\|^2_2\|\tlbh_{L-2,2}\|_2^2 }{4 \tau^2 }  \Bigr) \\
      \hdots \\
      +\min_{\eta_1>0} \max_{\alpha_1>0} &- \frac{\alpha_1 \eta_1}{2} - \frac{\eta_1}{2\alpha_1}  \Bigl( \frac{c_L^2 \rho^2_{1,2}}{ \rho^2_{1,1} \|\bg_1\|^2_2}   \eta^2_2 \|\tlbg_{0,2}\|_2^2 + \frac{\eta^2_3}{4\|\bg_{1}\|^2_2} \|\tlbh_{1,1}\|_2^2 \Bigr) + \Bigl(\frac{\eta_3}{8 \alpha_3} + \frac{\eta_2}{2\alpha_2} \frac{c_L^2 \rho^2_{1,2}}{ \rho^2_{1,1}} \Bigr) \eta_1^2
     \\ 
     &+  \frac{\eta_2^2}{16\|\bh_1\|_2^2} \tr \Bigl(  \frac{\eta_1}{2\alpha_1} \frac{c^2_{L}\eta^2_2 \|\bg_1\|_2^2}{4\alpha^2_2} \bSigma_0 + \frac{\eta_2}{8 \alpha_2} \bI  \Bigr)^{-1} + \frac{\eta^2_2 c^2_L \eta_1^2}{16 \alpha_2^2} \tr \Bigl( \frac{\eta_1}{2\alpha_1}  \frac{c^2_{L}\eta^2_2 \|\bg_1\|_2^2}{4\alpha^2_2} \bSigma_0 + \frac{\eta_2}{8 \alpha_2} \bI  \Bigr)^{-1} \bSigma_0
\end{align*}
We take derivative with respect to $\alpha_1$:
\begin{align*}
     \frac{\partial}{\partial \alpha_1} = 0 \Rightarrow 0 &= - \frac{\eta_1}{2} + \frac{\eta_1}{\alpha_1^2}  \Bigl( \frac{c_L^2 \rho^2_{1,2}}{ \rho^2_{1,1} \|\bg_1\|^2_2}   \eta^2_2 \|\tlbg_{0,2}\|_2^2 + \frac{\eta^2_3}{4\|\bg_{1}\|^2_2} \|\tlbh_{1,1}\|_2^2 \Bigr) \\
     &+ \frac{\eta_2^2}{16\|\bh_1\|_2^2} \frac{\eta_1}{2\alpha_1^2} \frac{c^2_{L}\eta^2_2 \|\bg_1\|_2^2}{4\alpha^2_2} \tr \Bigl(  \frac{\eta_1}{2\alpha_1} \frac{c^2_{L}\eta^2_2 \|\bg_1\|_2^2}{4\alpha^2_2} \bSigma_0 + \frac{\eta_2}{8 \alpha_2} \bI  \Bigr)^{-2} \bSigma_0 \\ 
     &+\frac{\eta^2_2 c^2_L \eta_1^2}{16 \alpha_2^2} \frac{\eta_1}{2\alpha^2_1}  \frac{c^2_{L}\eta^2_2 \|\bg_1\|_2^2}{4\alpha^2_2}  \tr \Bigl( \frac{\eta_1}{2\alpha_1}  \frac{c^2_{L}\eta^2_2 \|\bg_1\|_2^2}{4\alpha^2_2} \bSigma_0 + \frac{\eta_2}{8 \alpha_2} \bI  \Bigr)^{-2} \bSigma_0^2
\end{align*}
For $\eta_1$ and $\alpha_1$, after taking derivatives we observe that:
\begin{align*}
    0 = - \alpha_1 + 2 \Bigl(\frac{\eta_3}{8 \alpha_3} + 2 \frac{\eta_2}{2\alpha_2} \frac{c_L^2 \rho^2_{1,2}}{ \rho^2_{1,1}} \Bigr) \eta_1 + 2 \eta_1  \frac{\eta^2_2 c^2_L}{16 \alpha_2^2} \tr \Bigl( \frac{\eta_1}{2\alpha_1}  \frac{c^2_{L}\eta^2_2 \|\bg_1\|_2^2}{4\alpha^2_2} \bSigma_0 + \frac{\eta_3}{8 \alpha_3} \bI  \Bigr)^{-1} \bSigma_0
\end{align*}
Hence, denoting $\zeta_1 = \frac{\eta_1}{\alpha_1}$, we observe that
\begin{align*}
     \zeta_1 = \frac{ 1 -   \zeta_1 \zeta^2_2 c^2_L \tr \Bigl(c^2_{L}\zeta_1\zeta^2_2 \|\bg_1\|_2^2 \bSigma_0 +\zeta_2 \bI  \Bigr)^{-1} }{\frac{\zeta_3}{4} + \frac{2c_L^2 \zeta_2 \rho^2_{1,2}}{ \rho^2_{1,1}} }
\end{align*}
Now we can find $\alpha_1$ by:
\begin{align*}
    \alpha_1^2 = \frac{ \Bigl(\frac{2c_L^2 \rho^2_{1,2}}{ \rho^2_{1,1} \|\bg_1\|^2_2}   \|\tlbg_{0,2}\|_2^2  + \frac{c^2_{L}\zeta^2_2 \|\bg_1\|_2^2 }{\|\bh_1\|_2^2}   \tr \Bigl(  \zeta_1 c^2_{L}\zeta^2_2 \|\bg_1\|_2^2 \bSigma_0 + \zeta_2 \bI \Bigr)^{-2} \bSigma_0 \Bigr)\zeta_2^2 \alpha_2^2 +  \frac{\|\tlbh_{1,1}\|_2^2 }{2\|\bg_{1}\|^2_2 } \zeta_3^2 \alpha_3^2 }{1 -   \zeta^4_2 c^4_L \zeta_1^2 \|\bg_1\|_2^2 \tr \Bigl(   \zeta_1 c^2_{L}\zeta^2_2 \|\bg_1\|_2^2 \bSigma_0 + \zeta_2 \bI  \Bigr)^{-2} \bSigma_0^2}
\end{align*}
For $\eta_2, \alpha_2$ we observe that
\begin{align*}
     \max_{\eta_{2} > 0} \min_{\alpha_{2}>0} & \frac{\eta_{2} \alpha_{2}}{2} \Bigl(1 - \frac{\|\tlbg_{1,1}\|_2^2}{\|\bh_{2}\|^2_2} \Bigr)
     -  \Bigl(\frac{\eta_4}{8 \alpha_4} + \frac{\eta_3}{2\alpha_3} \frac{c_{L-1}^2 \rho^2_{2,2}}{ \rho^2_{2,1}} \Bigr) \eta_{2}^2 
      + \frac{\eta_{2}}{2\alpha_{2}} \Bigl(\frac{\eta_4^2\|\tlbg_{L-1,1}\|_2^2 }{4\|\bh_{o}\|^2_2} + \eta_3^2  \frac{c_{L-1}^2 \rho^2_{2,2}}{ \rho^2_{2,1}} \Bigr)\\
      +\min_{\eta_1>0} \max_{\alpha_1>0} &- \frac{\alpha_1 \eta_1}{2} - \frac{\eta_1}{2\alpha_1}  \Bigl( \frac{c_L^2 \rho^2_{1,2}}{ \rho^2_{1,1} \|\bg_1\|^2_2}   \eta^2_2 \|\tlbg_{0,2}\|_2^2 + \frac{\eta^2_3}{4\|\bg_{1}\|^2_2} \|\tlbh_{1,1}\|_2^2 \Bigr) + \Bigl(\frac{\eta_3}{8 \alpha_3} + \frac{\eta_2}{2\alpha_2} \frac{c_L^2 \rho^2_{1,2}}{ \rho^2_{1,1}} \Bigr) \eta_1^2
     \\ 
     &+  \frac{\eta_2^2}{16\|\bh_1\|_2^2} \tr \Bigl(  \frac{\eta_1}{2\alpha_1} \frac{c^2_{L}\eta^2_2 \|\bg_1\|_2^2}{4\alpha^2_2} \bSigma_0 + \frac{\eta_2}{8 \alpha_2} \bI  \Bigr)^{-1} + \frac{\eta^2_2 c^2_L \eta_1^2}{16 \alpha_2^2} \tr \Bigl( \frac{\eta_1}{2\alpha_1}  \frac{c^2_{L}\eta^2_2 \|\bg_1\|_2^2}{4\alpha^2_2} \bSigma_0 + \frac{\eta_2}{8 \alpha_2} \bI  \Bigr)^{-1} \bSigma_0
\end{align*}
Which yields:
\begin{align*}
     \zeta_2 &= \frac{1-\frac{1}{\|\bh_1\|_2^2} \tr \Bigl(   c^2_{L}\zeta_2 \|\bg_1\|_2^2 \zeta_1 \bSigma_0 + \bI  \Bigr)^{-1} }{\frac{2\zeta_3 c_{L-1}^2 \rho^2_{2,2}}{ 2\rho^2_{2,1}} + \frac{\zeta_4}{4}} \\
     \alpha_2 &= \frac{\frac{c_L^2 \rho^2_{1,2}}{ \rho^2_{1,1}} \zeta_1^2 \alpha_1^2 + F_1 }{1- \frac{d_2}{d_1}}
\end{align*}
Where  
\begin{align}\label{eq: F_one}
    F_1 := &- \frac{\zeta_2^2 \alpha_2^2}{\|\bh_1\|_2^2} \biggl[  c^2_{L}\zeta^2_2 \zeta_1 \|\bg_1\|_2^2  \tr \Bigl( c^2_{L}\zeta^2_2 \zeta_1 \|\bg_1\|_2^2 \bSigma_0 + \zeta_2 \bI  \Bigr)^{-2}\bSigma_0 +  \tr \Bigl( c^2_{L}\eta^2_2 \zeta_1 \|\bg_1\|_2^2 \bSigma_0 + \zeta_2 \bI  \Bigr)^{-2} \biggr] \nonumber \\
    &+ \zeta_2 c^2_L \zeta_1^2 \alpha_1^2\tr \Bigl( c^2_{L}\zeta^2_2 \zeta_1 \|\bg_1\|_2^2 \bSigma_0 + \zeta_2 \bI  \Bigr)^{-1}\bSigma_0 \nonumber \\
    &- \zeta_2^2 c_L^2 \zeta_1^2 \alpha_1^2  \biggl[ c^2_{L}\zeta^2_2 \zeta_1 \|\bg_1\|_2^2  \tr \Bigl( c^2_{L}\zeta^2_2 \zeta_1 \|\bg_1\|_2^2 \bSigma_0 + \zeta_2 \bI  \Bigr)^{-2}\bSigma^2_0 +  \tr \Bigl( c^2_{L}\zeta^2_2 \zeta_1 \|\bg_1\|_2^2 \bSigma_0 + \zeta_2 \bI  \Bigr)^{-2}\bSigma^2\biggr]
\end{align}
Overall, we observe that, for each pair of $(\eta_i, \alpha_i)$ the inner optimization is a function of $\frac{\eta_i}{\alpha_i}$ for $i>2$ and also $\eta_i^2$ for $i=2$. As demonstrated above, to find the generalization error, we find $\zeta_i$'s first and then solve the linear system of equations in terms of $\alpha_i^2$. This implies that for $i > 2$:
\begin{align*}
    \alpha_i^2 &= \frac{\frac{c_{L-i+1}^2 \rho^2_{i,2}}{ \rho^2_{i,1} \|\bg_i\|^2_2}   \zeta^2_{i+1} \alpha_{i+1}^2 \|\tlbg_{i-1,2}\|_2^2 + \frac{\eta^2_{i+2}}{4\|\bg_{i}\|^2_2} \|\tlbh_{i,1}\|_2^2 + \frac{F'}{\zeta_i^2}  + \sum_{j=1}^i \frac{c_{L-j}^2 \rho^2_{j,2}}{ \rho^2_{j,1}}  \zeta_j^2 \alpha_j^2}{1- \frac{d_{i+1}}{d_i} } \\
    \zeta_i &= \frac{1- \frac{d_{i+1}}{d_i}}{\frac{c_{L-i}^2 \rho^2_{i,2} \zeta_{i+1}}{ \rho^2_{i,1}} + \frac{\zeta_{i+2}}{8}}
\end{align*}
With 
\begin{align}\label{eq: F_prim}
    F':= \frac{\partial}{\partial x} \biggl[ \frac{\zeta_2^2 \alpha_2^2}{\|\bh_1\|_2^2} \tr \Bigl( c^2_{L} x^2 \zeta_1 \|\bg_1\|_2^2 \bSigma_0 + \zeta_2 \bI  \Bigr)^{-2}\bSigma_0 +  \zeta^2_2 c^2_L \zeta_1^2 \alpha_1^2 \tr \Bigl( c^2_{L} x^2 \zeta_1 \|\bg_1\|_2^2 \bSigma_0 + \zeta_2 \bI  \Bigr)^{-2} \biggr]
\end{align}
Finally, for $\beta, \tau$, we would have:
\begin{align*}
    \theta = \frac{1- \frac{d_{L}}{d_{L-1}}}{\zeta_{L-2} \frac{d_1}{d_0} +\zeta_{L-1}\rho_{L,2}^2  \|\tlbg_{L-1,2}\|_2^2} \\
    \tau^2 = \frac{ \frac{F'}{\zeta_i^2} + \sum_{j=1}^{L-1} \frac{c_{L-j}^2 \rho^2_{j,2}}{ \rho^2_{j,1}}  \zeta_j^2 \alpha_j^2}{1- \frac{d_L}{d_{L-1}}}
\end{align*}
Summarizing, we have
\begin{align}\label{eq: sgd_final}
     \zeta_1 &= \frac{ 1 -   \zeta_1 \zeta^2_2 c^2_L \tr \Bigl(c^2_{L}\zeta_1\zeta^2_2  \bSigma_0 +\zeta_2 \bI  \Bigr)^{-1} }{\frac{\zeta_3}{4} + \frac{2c_L^2 \zeta_2 \rho^2_{1,2}}{ \rho^2_{1,1}}}, \quad
      \zeta_2 = \frac{1-\frac{1}{d_1} \tr \Bigl(   c^2_{L}\zeta_2 \zeta_1 \bSigma_0 + \bI  \Bigr)^{-1} }{\frac{2\zeta_3 c_{L-1}^2 \rho^2_{2,2}}{ 2\rho^2_{2,1}} + \frac{\zeta_4}{4}} \nonumber \\
      \zeta_i &= \frac{1- \frac{d_{i+1}}{d_i}}{\frac{c_{L-i}^2 \rho^2_{i,2} \zeta_{i+1}}{ \rho^2_{i,1}} + \frac{\zeta_{i+2}}{8}}, \quad i = 3, \cdots, L-1, \quad \theta = \frac{1- \frac{d_{L}}{d_{L-1}}}{\zeta_{L-2} \frac{d_1}{d_0} +\zeta_{L-1}\rho_{L,2}^2 d_{L-1}} \nonumber\\
    \alpha_1^2 &= \frac{ \Bigl(\frac{2c_L^2 \rho^2_{1,2}d_0}{ \rho^2_{1,1} d_1}  + \frac{c^2_{L}\zeta^2_2 }{d_1}   \tr \Bigl(  \zeta_1 c^2_{L}\zeta^2_2 \bSigma_0 + \zeta_2 \bI \Bigr)^{-2} \bSigma_0 \Bigr)\zeta_2^2 \alpha_2^2 +  \frac{d_1 }{2d_0 } \zeta_3^2 \alpha_3^2 }{1 - \zeta^4_2 c^4_L \zeta_1^2 \tr \Bigl(   \zeta_1 c^2_{L}\zeta^2_2\bSigma_0 + \zeta_2 \bI  \Bigr)^{-2} \bSigma_0^2} \nonumber\\
    \alpha^2_2 &= \frac{\frac{c_L^2 \rho^2_{1,2}}{ \rho^2_{1,1}} \zeta_1^2 \alpha_1^2 + F_1 }{1- \frac{d_2}{d_1}} \nonumber \\
    \alpha_i^2 &= \frac{\frac{c_{L-i+1}^2 \rho^2_{i,2}d_{i-1}}{ \rho^2_{i,1} d_{i+1}}   \zeta^2_{i+1} \alpha_{i+1}^2 + \frac{\eta^2_{i+2} d_i}{4d_{i+1}} + \frac{F'}{\zeta_i^2}  + \sum_{j=1}^i \frac{c_{L-j}^2 \rho^2_{j,2}}{ \rho^2_{j,1}}  \zeta_j^2 \alpha_j^2}{1- \frac{d_{i+1}}{d_i} }, \quad i=3,\cdots, L-1 \nonumber \\
    \tau^2 &= \frac{ \frac{F'}{\zeta_i^2} + \sum_{j=1}^{L-1} \frac{c_{L-j}^2 \rho^2_{j,2}}{ \rho^2_{j,1}}  \zeta_j^2 \alpha_j^2}{1- \frac{d_L}{d_{L-1}}} 
\end{align}
Where $F$ and $F'$ are defined in \eqref{eq: F_one} and \eqref{eq: F_prim}, respectively. And we let $c_j := \prod_{\ell=L-j+1}^L d_\ell \zeta_\ell $. To find $\tau^2$, the generalization error, first we find $\zeta_i$'s and $\theta$ through the nonlinear equations described above. Note that these equations are only in terms of $\zeta_i$'s and $\theta$. Then we proceed to solve the linear system of equations in $\alpha_i^2$ and $\tau^2$ to find the generalization error.
\subsection{General mirrors}
After completing the analysis of the case of $\psi = \|\cdot\|_2^2$ in the previous section, we show that for the general case, we can use the results from the previous section. For that, consider the following optimization:
\begin{align*}
    \min_{\ba} \; & D_{\psi}(\ba, \ba_0) \\
    s.t \quad &\bG (\ba - \ba_\ast) = \bzero
\end{align*}
We have Now using a Lagrange multiplier, we the optimization as min-max:
\begin{align*}
    \min_{\ba} \max_{\bv_L} D_{\psi}(\ba, \ba_0) + \bv_L^T \tlbG \bSigma_L^{1/2} (\ba - \ba_\ast)
\end{align*}
Now using CGMT, we obtain:
\begin{align*}
     \min_{\ba} \max_{\bv_L}  \|\bv_L\|_2 \bg^T \bSigma_L^{1/2} (\ba - \ba_\ast) + \|\bSigma_L^{1/2} (\ba - \ba_\ast)\|_2 \bh_{o}^T \bv_L +  D_{\psi}(\ba, \ba_0)
\end{align*}
Doing the optimization over the direction of $\bv_L$ yields:
\begin{align*}
     \min_{\ba} \max_{\beta > 0} \beta \bg^T \bSigma_L^{1/2} (\ba - \ba_\ast) + \beta \|\bSigma_L^{1/2} (\ba - \ba_\ast)\|_2 \cdot \|\bh_{o}\|_2  +  D_{\psi}(\ba, \ba_0)
\end{align*}
Using the square-root trick $\sqrt{t} = \frac{\tau}{2} + \frac{t}{2\tau}$, we observe:
\begin{align*}
    \min_{\ba} \max_{\beta > 0} \min_{\tau > 0} \beta \bg_{o}^T \bSigma_L^{1/2} (\ba - \ba_\ast) + \frac{\beta \tau}{2} + \frac{\beta}{2 \tau} \|\bSigma_L^{1/2} (\ba - \ba_\ast)\|^2_2 \cdot \|\bh_{o}\|^2_2  +  D_{\psi}(\ba, \ba_0)
\end{align*}
Furthermore, we have that $g.e = \tau^2$. We note the convexity and concavity of the objective, hence we may exchange the order of min and max:
\begin{align*}
    \max_{\beta > 0} \min_{\tau > 0}  \frac{\beta \tau}{2}  +  \min_{\ba}  \beta \bg_{o}^T \bSigma_L^{1/2} (\ba - \ba_\ast) + \frac{\beta}{2 \tau} \|\bSigma_L^{1/2} (\ba - \ba_\ast)\|^2_2 \cdot \|\bh_{o}\|^2_2  +  D_{\psi}(\ba, \ba_0)
\end{align*}
Now note that
\begin{align*}
    \bSigma_L^{1/2} \bg_{o} \sim \calN\Bigl(\bzero, \rho_{L,1}^2 \bW_L \bSigma_{L-1}\bW_L^T + \rho_{L,2}^2 \bI\Bigr)
\end{align*}
Thus we may write $ \bSigma_L^{1/2} \bg_{o}  = \rho_{L,1} \bW_L \bSigma_{L-1}^{1/2} \tlbg_{L-1,1} + \rho_{L,2} \tlbg_{L-1,2}$ with $\tlbg_{L-1,1} $ and $\tlbg_{L-1,2}$ being independent of each other. Therefore the optimization turns into
\begin{align*}
     \max_{\beta > 0} \min_{\tau > 0}  &\frac{\beta \tau}{2}  +  \min_{\ba}  \rho_{L,2}  \beta \tlbg_{L-1,2}^T (\ba - \ba_\ast) + \rho_{L,1} \beta  \tlbg_{L-1,1}^T \bSigma_{L-1}^{1/2} \bW_L^T (\ba - \ba_\ast)  \\ 
     &+ \frac{\rho_{L,1}^2 \beta \|\bh_{o}\|^2_2 }{2 \tau} \Bigl\|\bSigma_{L-1}^{1/2} \bW_L^T (\ba - \ba_\ast)\Bigr\|_2^2  + \frac{\rho_{L,2}^2 \beta \|\bh_{o}\|^2_2 }{2   \tau} \|\ba - \ba_\ast\|^2_2  +  D_{\psi}(\ba, \ba_0)
\end{align*}
Now we complete the square over $\bW_L^T(\ba - \ba_\ast)$.
\begin{align*}
    \max_{\beta > 0} \min_{\tau > 0}  &\frac{\beta \tau}{2} \Bigl(1 - \frac{\|\tlbg_{L-1,1}\|_2^2}{\|\bh_{o}\|_2^2} \Bigr)  +  \min_{\ba}  \rho_{L,2} \beta \tlbg_{L-1,2}^T (\ba - \ba_\ast) \\ 
    &+ \frac{\beta}{2 \tau} \Bigl\| \rho_{L,1} \|\bh_{o}\|_2 \cdot \bSigma_{L-1}^{1/2} \bW_L^T (\ba - \ba_\ast) + \frac{\tau}{\|\bh_{o}\|_2} \tlbg_{L-1,1}\Bigr\|_2^2  +  \frac{\rho_{L,2}^2 \beta \|\bh_{o}\|^2_2 }{2   \tau} \|\ba - \ba_\ast\|^2_2  +  D_{\psi}(\ba, \ba_0)
\end{align*}
Now we focus on the inner optimization and we use a Fenchel dual to rewrite the quadratic term as
\begin{align*}
    \min_{\ba} \max_{\bu}   &\frac{\beta \rho_{L,1} \|\bh_{o}\|_2  }{2 \tau} \bu^T \bSigma_{L-1}^{1/2} \bW_L^T (\ba - \ba_\ast) +  \frac{\beta}{2\|\bh_{o}\|_2} \bu^T\tlbg_{L-1,1} - \frac{\beta \|\bu\|^2_2}{8\tau}\\
    &+  \rho_{L,2} \beta \tlbg_{L-1,2}^T (\ba - \ba_\ast) +\frac{\rho_{L,2}^2 \beta \|\bh_{o}\|^2_2 }{2   \tau} \|\ba - \ba_\ast\|^2_2  +  D_{\psi}(\ba, \ba_0)
\end{align*}
Employing CGMT again:
\begin{align*}
      \min_{\ba} \max_{\bu} & \frac{\beta \rho_{L,1} \|\bh_{o}\|_2  }{2 \tau} \|\bSigma_{L-1}^{1/2}\bu\|_2 \bg_{L-1}^T (\ba - \ba_\ast) + \frac{\beta \rho_{L,1} \|\bh_{o}\|_2  }{2 \tau} \|\ba - \ba_\ast\|_2 \bh_{L-1} ^T \bSigma_{L-1}^{1/2} \bu \\ 
      &+  \frac{\beta}{2\|\bh_{o}\|_2} \bu^T\tlbg_{L-1,1} - \frac{\beta \|\bu\|^2_2}{8\tau} + \rho_{L,2} \beta \tlbg_{L-1,2}^T (\ba - \ba_\ast) + \frac{\rho_{L,2}^2 \beta \|\bh_{o}\|^2_2 }{2   \tau} \|\ba - \ba_\ast\|^2_2  +  D_{\psi}(\ba, \ba_0)
\end{align*}
We use the change of variable $\tlbu := \bSigma_{L-1}^{1/2}\bu$ and use the Lagrange multiplier to bring in the constraints:
\begin{align*}
      \min_{\ba}  \min_{\bv_L} & \bv_L^T (\tlbu - \bSigma_{L-1}^{1/2} \bu) + \frac{\beta \rho_{L,1} \|\bh_{o}\|_2  }{2 \tau} \|\tlbu\|_2 \bg_{L-1}^T (\ba - \ba_\ast) + \frac{\beta \rho_{L,1} \|\bh_{o}\|_2  }{2 \tau} \|\ba - \ba_\ast\|_2 \bh_{L-1} ^T \tlbu \\
      &+  \frac{\beta}{2\|\bh_{o}\|_2} \bu^T\tlbg_{L-1,1} - \frac{\beta \|\bu\|^2_2}{8\tau} +\rho_{L,2} \beta \tlbg_{L-1,2}^T (\ba - \ba_\ast) + \frac{\rho_{L,2}^2 \beta \|\bh_{o}\|^2_2 }{2   \tau} \|\ba - \ba_\ast\|^2_2  +  D_{\psi}(\ba, \ba_0)
\end{align*}
Now we perform the optimization over $\bu, \tlbu$ and obtain
\begin{align*}
      \min_{\ba, \bv_L} \max_{\eta_L, \tleta_L} & \tleta_L \Bigl\| \bv_L +  \frac{\beta \rho_{L,1} \|\bh_{o}\|_2  }{2 \tau} \|\ba - \ba_\ast\|_2 \bh_{L-1} \Bigr\|_2 + \eta_L \Bigl\| \bSigma_{L-1}^{1/2} \bv_{L} + \frac{\beta}{2\|\bh_{o}\|_2} \tlbg_{L-1,1}  \Bigr\|_2 +  D_{\psi}(\ba, \ba_0) \\
      &+ \frac{\beta \rho_{L,1} \|\bh_{o}\|_2  }{2 \tau} \tleta_L \bg_{L-1}^T (\ba - \ba_\ast) - \frac{\beta \eta^2_L}{8\tau} + \rho_{L,2} \beta \tlbg_{L-1,2}^T (\ba - \ba_\ast) + \frac{\rho_{L,2}^2 \beta \|\bh_{o}\|^2_2 }{2   \tau} \|\ba - \ba_\ast\|^2_2  
\end{align*}
Now using the square-root trick again:
\begin{align*}
    \min_{\ba, \bv_L} \max_{\eta_L, \tleta_L} \min_{\alpha_L > 0, \tlalpha_L > 0} & \frac{\tleta_L \tlalpha_L}{2} + \frac{\tleta_L}{2\tlalpha_L} \Bigl\| \bv_L +  \frac{\beta \rho_{L,1} \|\bh_{o}\|_2  }{2 \tau} \|\ba - \ba_\ast\|_2 \bh_{L-1} \Bigr\|^2_2 \\
   & + \frac{\alpha_L \eta_L}{2} + \frac{\eta_L}{2\alpha_L}\Bigl\| \bSigma_{L-1}^{1/2} \bv_{L} + \frac{\beta}{2\|\bh_{o}\|_2} \tlbg_{L-1,1}  \Bigr\|^2_2  + \frac{\beta \rho_{L,1} \|\bh_{o}\|_2  }{2 \tau} \tleta_L \bg_{L-1}^T (\ba - \ba_\ast) \\
       &- \frac{\beta \eta^2_L}{8\tau} + \rho_{L,2} \beta \tlbg_{L-1,2}^T (\ba - \ba_\ast) + \frac{\rho_{L,2}^2 \beta \|\bh_{o}\|^2_2 }{2   \tau} \|\ba - \ba_\ast\|^2_2  +  D_{\psi}(\ba, \ba_0)
\end{align*}
Using the recursion,  $\bSigma_{L-1} =\rho_{L-1,1}^2 \bW_{L-1} \bSigma_{L-2}\bW_{L-1}^T + \rho_{L-1,2}^2 \bI$. Applying the same technique as before, we take $ \bSigma_{L-1}^{1/2} \tlbg_{L-1,1}  = \rho_{L-1,1} \bW \bSigma_{L-2}^{1/2} \tlbg_{L-2, 1} + \rho_{L-1,2}  \tlbg_{L-2, 2}$, we can write:
\begin{align*}
    \Bigl\| \bSigma_{L-1}^{1/2} \bv_{L} + \frac{\beta}{2\|\bh_{o}\|_2} \tlbg_{L-1,1}  \Bigr\|^2_2 = \Bigl\| \rho_{L-1,1} \bSigma_{L-2}^{1/2} \bW_{L-2}^T \bv_L + \frac{\beta}{2\|\bh_{o}\|_2} \tlbg_{L-2,1} \Bigr\|_2^2 \\ + \rho_{L-1,2}^2  \|\bv_L\|_2^2 +  \frac{\beta^2 \rho_{L-1,2}^2 }{4\|\bh_{o}\|_2^2} \|\tlbg_{L-2,2}\|_2^2 + \rho_{L-1,2}  \tlbg^T_{L-2, 2} \bv_L
\end{align*}
Plugging back in, we only consider the optimization over $\ba, \bv_L$:
\begin{align*}
     \min_{\ba} &\frac{\tleta_L}{2\tlalpha_L}   \frac{\beta \rho_{L,1} \|\bh_{o}\|_2  }{2 \tau} \tleta_L \bg_{L-1}^T (\ba - \ba_\ast)  + \rho_{L,2}  \beta \tlbg_{L-1,2}^T (\ba - \ba_\ast) + \frac{\rho_{L,2}^2 \beta \|\bh_{o}\|^2_2 }{2   \tau} \|\ba - \ba_\ast\|^2_2  +  D_{\psi}(\ba, \ba_0) \\ 
      + \min_{\bv_L} & \frac{\eta_L}{2\alpha_L} \Bigl\| \rho_{L-1,1} \bSigma_{L-2}^{1/2} \bW_{L-2}^T \bv_L + \frac{\beta}{2\|\bh_{o}\|_2} \tlbg_{L-2,1} \Bigr\|_2^2 + \frac{\eta_L}{2\alpha_L} \rho_{L-1,2}  \tlbg^T_{L-2, 2} \bv_L  \\
      &+ \frac{\eta_L}{2\alpha_L} \rho_{L-1,2}^2   \|\bv_L\|_2^2 +  \frac{\tleta_L}{2\tlalpha_L} \Bigl\| \bv_L +  \frac{\beta \rho_{L,1} \|\bh_{o}\|_2  }{2 \tau} \|\ba - \ba_\ast\|_2 \bh_{L-1} \Bigr\|^2_2 
\end{align*}
Focusing on the inner optimization, we introduce the Fenchel dual variable $\bu_{L-1}$. We note that from here on, the procedure is similar to that of SGD analysis ($\psi = \|\cdot\|_2^2$)
\begin{align*}
    \min_{\bv_L} \max_{\bu_{L-1}} & \frac{\eta_L}{2\alpha_L}  \rho_{L-1,1} \bu_{L-1}^T\bSigma_{L-2}^{1/2} \bW_{L-2}^T \bv_L -   \frac{\eta_L \|\bu_{L-1}\|_2^2}{8\alpha_L} + \frac{\beta}{2\|\bh_{o}\|_2}  \bu_{L-1}^T\tlbg_{L-2,1}  + \frac{\eta_L}{2\alpha_L} \rho_{L-1,2} \tlbg^T_{L-2, 2} \bv_L  \\
      &+ \frac{\eta_L}{2\alpha_L} \rho_{L-1,2}^2  \|\bv_L\|_2^2 +  \frac{\tleta_L}{2\tlalpha_L} \Bigl\| \bv_L +  \frac{\beta \rho_{L,1} \|\bh_{o}\|_2  }{2 \tau} \|\ba - \ba_\ast\|_2 \bh_{L-1} \Bigr\|^2_2 
\end{align*}
Applying CGMT yields,
\begin{align*}
    \min_{\bv_L} \max_{\bu_{L-1}} & \frac{\eta_L \rho_{L-1,1}}{2\alpha_L} \|\bv_L\|_2 \bg_{L-2}^T\bSigma_{L-2}^{1/2} \bu_{L-1} + \frac{\eta_L \rho_{L-1,1}}{2\alpha_L}  \|\bSigma_{L-2}^{1/2} \bu_{L-1}\|_2 \bh_{L-2}^T \bv_L \\ 
    &- \frac{\eta_L \|\bu_{L-1}\|_2^2}{8\alpha_L} + \frac{\beta}{2\|\bh_{o}\|_2}  \bu_{L-1}^T\tlbg_{L-2,1}  + \frac{\eta_L}{2\alpha_L} \rho_{L-1,2} \tlbg^T_{L-2, 2} \bv_L  \\
      &+ \frac{\eta_L}{2\alpha_L} \rho_{L-1,2}^2   \|\bv_L\|_2^2 +  \frac{\tleta_L}{2\tlalpha_L} \Bigl\| \bv_L +  \frac{\beta \rho_{L,1} \|\bh_{o}\|_2  }{2 \tau} \|\ba - \ba_\ast\|_2 \bh_{L-1} \Bigr\|^2_2
\end{align*}
We perform the optimization over the direction of $\bv_L$, we have
\begin{align*}
     \max_{\bu_{L-1}} \min_{\eta_{L-1}} &\frac{\eta_L \rho_{L-1,1}}{2\alpha_L}\eta_{L-1} \bg_{L-2}^T\bSigma_{L-2}^{1/2} \bu_{L-1}  \\
     &-\eta_{L-1}\Bigl\| \frac{\eta_L \rho_{L-1,1}}{2\alpha_L}  \|\bSigma_{L-2}^{1/2} \bu_{L-1}\|_2 \bh_{L-2}  +  \frac{\eta_L}{2\alpha_L} \rho_{L-1,2}  \tlbg_{L-2, 2}  +  \frac{\tleta_L}{\tlalpha_L}\frac{\beta \rho_{L,1} \|\bh_{o}\|_2  }{2 \tau} \|\ba - \ba_\ast\|_2 \bh_{L-1}\Bigr\|_2\\
    &- \frac{\eta_L \|\bu_{L-1}\|_2^2}{8\alpha_L} + \frac{\beta}{2\|\bh_{o}\|_2}  \bu_{L-1}^T\tlbg_{L-2,1} 
      + \Bigl(\frac{\eta_L}{2\alpha_L} \rho_{L-1,2}^2 + \frac{\tleta_L}{2\tlalpha_L} \Bigr) \eta^2_{L-1} +  \frac{\tleta_L}{2\tlalpha_L}  \frac{\beta \|\ba - \ba_\ast\|^2_2  \rho^2_{L,1} \|\bh_{o}\|^2_2 \|\bh^2_{L-1} \|_2 }{4 \tau^2} 
\end{align*}
Using the square-root trick again
\begin{align*}
     \max_{\bu_{L-1}} \min_{\eta_{L-1}} \max_{\alpha_{L-1}} &\frac{\eta_L \rho_{L-1,1}}{2\alpha_L}\eta_{L-1} \bg_{L-2}^T\bSigma_{L-2}^{1/2} \bu_{L-1} - \frac{\alpha_{L-1}\eta_{L-1}}{2} \\
     &-\frac{\eta_{L-1}}{2\alpha_{L-1}}\Bigl(\frac{\eta_L^2 \rho^2_{L-1,1} \| \bh_{L-2}\|_2}{4\alpha^2_L}  \|\bSigma_{L-2}^{1/2} \bu_{L-1}\|^2_2  +  \frac{\eta_L^2}{4\alpha_L^2} \rho_{L-1,2}^2 \|\tlbg_{L-2, 2}\|_2^2   \\
     &+  \frac{\tleta^2_L}{\tlalpha^2_L}\frac{\beta^2 \|\ba - \ba_\ast\|^2_2  \rho^2_{L,1} \|\bh_{o}\|^2_2 \|\bh^2_{L-1} \|_2 }{4 \tau^2} \Bigr) - \frac{\eta_L \|\bu_{L-1}\|_2^2}{8\alpha_L} + \frac{\beta}{2\|\bh_{o}\|_2}  \bu_{L-1}^T\tlbg_{L-2,1} 
      \\
      &+ \Bigl(\frac{\eta_L}{2\alpha_L} \rho_{L-1,2}^2 + \frac{\tleta_L}{2\tlalpha_L} \Bigr) \eta^2_{L-1} +  \frac{\tleta_L}{2\tlalpha_L}  \frac{\beta^2 \|\ba - \ba_\ast\|^2_2  \rho^2_{L,1} \|\bh_{o}\|^2_2 \|\bh^2_{L-1} \|_2 }{4 \tau^2} 
\end{align*}
Now similar to before, we repeatedly apply CGMT and arrive at the following optimization:
\begin{align*}
    \max_{\beta > 0} \min_{\tau > 0}  & \frac{\beta \tau}{2} \Bigl(1 - \frac{\|\tlbg_{L-1,1}\|_2^2}{\|\bh_{o}\|_2^2} \Bigr) \\
    +\max_{\eta_L, \tleta_L} \min_{\alpha_L > 0, \tlalpha_L > 0} & \frac{\tleta_L \tlalpha_L}{2} + \frac{\alpha_L \eta_L}{2} - \frac{\beta \eta^2_L}{8\tau} + \frac{\eta_L}{2\alpha_L} \frac{\beta^2 \rho_{L-1,2}^2 \|\tlbg_{L-2,2}\|_2^2 }{4\|\bh_{o}\|_2^2}  \\
    +  \min_{\ba} & (\ba - \ba_\ast)^T \Bigl( \frac{\tleta_L}{2\tlalpha_L}   \frac{\beta \rho_{L,1} \|\bh_{o}\|_2  }{2 \tau} \tleta_L \bg_{L-1} + \rho_{L,2}  \beta \tlbg_{L-1,2} \Bigr)
      +  \frac{\rho_{L,2}^2 \beta \|\bh_{o}\|^2_2 }{2   \tau} \|\ba - \ba_\ast\|^2_2  \\ 
      &+   \frac{\tleta_L}{2\tlalpha_L}  \frac{\beta^2 \|\ba - \ba_\ast\|^2_2  \rho^2_{L,1} \|\bh_{o}\|^2_2 \|\bh^2_{L-1} \|_2 }{4 \tau^2}    +  D_{\psi}(\ba, \ba_0) + F\Bigl( \|\ba-\ba_\ast\|^2_2 \frac{\beta^2}{\tau^2}, \frac{\eta_L}{\alpha_L}, \frac{\tleta_L}{\tlalpha_L}, \beta\Bigr)
\end{align*}
Where $F$ is defined as
\begin{align}\label{eq: F}
    \max_{\eta_{L-1} > 0} \min_{\alpha_{L-1}>0} &- \Bigl(1- \frac{d_{L-2}}{d_{L-3}} \Bigr)\frac{\alpha_{L-1}\eta_{L-1}}{2} + \Bigl(\frac{\eta_L}{2\alpha_L} \rho_{L-1,2}^2 + \frac{\tleta_L}{2\tlalpha_L} \Bigr) \eta^2_{L-1} \nonumber \\
    &- \frac{\eta_{L-1}}{2\alpha_{L-1}}\Bigl(  \frac{\eta_L^2}{4\alpha_L^2}\rho_{L-1,2}^2 d_{L-2} +\frac{\tleta^2_L}{\tlalpha^2_L}\frac{\beta^2 \|\ba - \ba_\ast\|^2_2  \rho^2_{L,1} n d_{L-1}}{4 \tau^2} \Bigr) \nonumber \\
    \hdots +\min_{\eta_1>0} \max_{\alpha_1>0} &- \frac{\alpha_1 \eta_1}{2} - \frac{\eta_1}{2\alpha_1}  \Bigl( \frac{c_L^2 \rho^2_{1,2} d_0}{ \rho^2_{1,1} d_1}   \eta^2_2 + \frac{\eta^2_2 d_1}{4d_2} \Bigr) + \Bigl(\frac{\eta_3}{8 \alpha_3} + \frac{\eta_2}{2\alpha_2} \frac{c_L^2 \rho^2_{1,2}}{ \rho^2_{1,1}} \Bigr) \eta_1^2
     \nonumber \\ 
     &+  \frac{\eta_2^2}{16\|\bh_1\|_2^2} \tr \Bigl(  \frac{\eta_1}{2\alpha_1} \frac{c^2_{L}\eta^2_2}{4\alpha^2_2} \bSigma_0 + \frac{\eta_2}{8 \alpha_2} \bI  \Bigr)^{-1} + \frac{\eta^2_2 c^2_L \eta_1^2}{16 \alpha_2^2} \tr \Bigl( \frac{\eta_1}{2\alpha_1}  \frac{c^2_{L}\eta^2_2 }{4\alpha^2_2} \bSigma_0 + \frac{\eta_2}{8 \alpha_2} \bI  \Bigr)^{-1} \bSigma_0
\end{align}
% \begin{align}\label{eq: F}
%     \max_{\eta_{L-1} > 0} \min_{\alpha_{L-1}>0} &- \Bigl(1- \frac{\|\bg_{L-2}\|_2^2}{\|\bh_{L-2}^2} \Bigr)\frac{\alpha_{L-1}\eta_{L-1}}{2} + \Bigl(\frac{\eta_L}{2\alpha_L} \rho_{L-1,2}^2 + \frac{\tleta_L}{2\tlalpha_L} \Bigr) \eta^2_{L-1} \nonumber \\
%     &- \frac{\eta_{L-1}}{2\alpha_{L-1}}\Bigl(  \frac{\eta_L^2}{4\alpha_L^2}\rho_{L-1,2}^2 \|\tlbg_{L-2, 2}\|_2^2 +\frac{\tleta^2_L}{\tlalpha^2_L}\frac{\beta^2 \|\ba - \ba_\ast\|^2_2  \rho^2_{L,1} \|\bh_{o}\|^2_2 \|\bh^2_{L-1} \|_2 }{4 \tau^2} \Bigr) \nonumber \\
%     \hdots +\min_{\eta_1>0} \max_{\alpha_1>0} &- \frac{\alpha_1 \eta_1}{2} - \frac{\eta_1}{2\alpha_1}  \Bigl( \frac{c_L^2 \rho^2_{1,2}}{ \rho^2_{1,1} \|\bg_1\|^2_2}   \eta^2_2 \|\tlbg_{0,2}\|_2^2 + \frac{\eta^2_2}{4\|\bg_{1}\|^2_2} \|\tlbh_{1,1}\|_2^2 \Bigr) + \Bigl(\frac{\eta_3}{8 \alpha_3} + \frac{\eta_2}{2\alpha_2} \frac{c_L^2 \rho^2_{1,2}}{ \rho^2_{1,1}} \Bigr) \eta_1^2
%      \nonumber \\ 
%      &+  \frac{\eta_2^2}{16\|\bh_1\|_2^2} \tr \Bigl(  \frac{\eta_1}{2\alpha_1} \frac{c^2_{L}\eta^2_2 \|\bg_1\|_2^2}{4\alpha^2_2} \bSigma_0 + \frac{\eta_2}{8 \alpha_2} \bI  \Bigr)^{-1} + \frac{\eta^2_2 c^2_L \eta_1^2}{16 \alpha_2^2} \tr \Bigl( \frac{\eta_1}{2\alpha_1}  \frac{c^2_{L}\eta^2_2 \|\bg_1\|_2^2}{4\alpha^2_2} \bSigma_0 + \frac{\eta_2}{8 \alpha_2} \bI  \Bigr)^{-1} \bSigma_0
% \end{align}
Using the Lagrange multiplier $\lambda$, we set $\xi:= \|\ba-\ba_\ast\|_2^2$ and we perform the optimization over $\ba$ by completing the squares and obtain that
\begin{align*}
    \min_{\ba} & (\ba - \ba_\ast)^T \Bigl( \frac{\tleta_L}{2\tlalpha_L}   \frac{\beta \rho_{L,1} \|\bh_{o}\|_2  }{2 \tau} \tleta_L \bg_{L-1} + \rho_{L,2}  \beta \tlbg_{L-1,2} \Bigr)
      +  \frac{\rho_{L,2}^2 \beta \|\bh_{o}\|^2_2 }{2   \tau} \|\ba - \ba_\ast\|^2_2  \\ 
      &+   \frac{\tleta_L}{2\tlalpha_L}  \frac{\beta^2 \|\ba - \ba_\ast\|^2_2  \rho^2_{L,1} \|\bh_{o}\|^2_2 \|\bh^2_{L-1} \|_2 }{4 \tau^2}    +  D_{\psi}(\ba, \ba_0) - \lambda \|\ba-\ba_\ast\|_2^2\\
      & \rarrowp d_L \bbE \calM_{\psi;c}\Bigl(a_\ast - \nabla \psi(a_0) - c z\Bigr) + c d_L \bbE z^2 + d_L \bbE (\psi(a_0) - a_0\nabla\psi(a_0)  )
\end{align*}
Where
\begin{align*}
    \calM_{\psi;c}(\cdot) &:= \min_x \frac{1}{2c} (\cdot - x)^2 + \psi(x) \\
    c &:=  \frac{\rho_{L,2}^2 \beta \|\bh_{o}\|^2_2 }{ \tau} + \frac{\tleta_L}{\tlalpha_L}  \frac{\beta^2  \rho^2_{L,1} \|\bh_{o}\|^2_2 \|\bh^2_{L-1} \|_2 }{4 \tau^2} - \lambda \\
    z &\sim \calN\Bigl(0, \frac{\tleta^2_L}{16\tlalpha^2_L}   \frac{\beta^2 \rho^2_{L,1} \|\bh_{o}\|^2_2  }{ \tau^2} \tleta^2_L  + \rho^2_{L,2}  \beta^2\Bigr)
\end{align*}
Hence the final scalar optimization would be
\begin{align}\label{eq: mirr_final}
      \max_{\beta > 0} \min_{\tau > 0}  & \frac{\beta \tau}{2} \Bigl(1 - \frac{d_{L-1}}{n} \Bigr) 
    +\max_{\eta_L, \tleta_L} \min_{\alpha_L > 0, \tlalpha_L > 0}  \frac{\tleta_L \tlalpha_L}{2} + \frac{\alpha_L \eta_L}{2} - \frac{\beta \eta^2_L}{8\tau} + \frac{\eta_L}{2\alpha_L} \frac{\beta^2 \rho_{L-1,2}^2 d_{L-2} }{4n} \nonumber \\
    +  \max_{\lambda} \min_{\xi} &\lambda \xi^2 + F\Bigl( \xi^2 \frac{\beta^2}{\tau^2}, \frac{\eta_L}{\alpha_L}, \frac{\tleta_L}{\tlalpha_L}, \beta\Bigr) + d_L \bbE \calM_{\psi;c}\Bigl(a_\ast - \nabla \psi(a_0) - c z\Bigr) \nonumber \\
    &+ d_L \Bigl(\frac{\rho_{L,2}^2 \beta n }{ \tau} + \frac{\tleta_L}{\tlalpha_L}  \frac{\beta^2  \rho^2_{L,1} n d_{L-1}}{4 \tau^2} - \lambda\Bigr) \cdot \Bigl(  \frac{\tleta^2_L}{16\tlalpha^2_L}   \frac{\beta^2 \rho^2_{L,1} n }{ \tau^2} \tleta^2_L  + \rho^2_{L,2}  \beta^2\Bigr)
\end{align}

Where $F$ is defined in \eqref{eq: F}.

\end{document}